\documentclass[10pt,a4paper]{extarticle}
\usepackage{graphicx}

\usepackage[round]{natbib}
\renewcommand{\cite}{\citep}

\usepackage[T1]{fontenc}
\usepackage{fullpage}
\usepackage{libertine}
\usepackage{authblk}

\usepackage[table]{xcolor}
\usepackage{microtype}
\usepackage{graphicx}
\usepackage{booktabs}
\usepackage{tabularx}
\usepackage{longtable}
\usepackage{fvextra}
\usepackage{subcaption}
\usepackage{placeins}
\usepackage{hyperref}
\usepackage{url}

\usepackage{amsmath}
\usepackage{amssymb}
\usepackage{mathtools}
\usepackage{amsthm}
\usepackage{nicefrac}
\usepackage[capitalize,nameinlink]{cleveref}
\crefname{appendix}{Supplement}{Supplements}
\Crefname{appendix}{Supplement}{Supplements}
\theoremstyle{plain}
\newtheorem{theorem}{Theorem}[section]
\newtheorem*{privacytheorem}{Theorem}

\theoremstyle{definition}
\newtheorem{definition}[theorem]{Definition}

\theoremstyle{remark}

\newcommand{\bx}{\mathbf{x}}
\newcommand{\bq}{\mathbf{q}}
\newcommand{\bk}{\mathbf{k}}
\newcommand{\bv}{\mathbf{v}}
\newcommand{\bs}{\mathbf{s}}
\newcommand{\bo}{\mathbf{o}}

\newcommand{\bw}{\mathbf{w}}
\newcommand{\bz}{\mathbf{z}}
\newcommand{\bu}{\mathbf{u}}
\newcommand{\br}{\mathbf{r}}
\newcommand{\calX}{\mathcal{X}}
\newcommand{\calY}{\mathcal{Y}}
\newcommand{\calD}{\mathcal{D}}
\newcommand{\calA}{\mathcal{A}}
\newcommand{\calS}{\mathcal{S}}
\newcommand{\calW}{\mathcal{W}}
\newcommand{\bpsi}{\pmb{\psi}}
\newcommand{\btheta}{\pmb{\theta}}
\newcommand{\bphi}{\pmb{\phi}}
\newcommand{\btau}{\pmb{\tau}}

\title{Efficient Provably Private Classification with a Tabular Foundation Model}
\author[1]{\mbox{Talal Alrawajfeh$^*$}}
\author[2]{\mbox{Cristiana Diaconu$^*$}}
\author[3]{\mbox{Ossi Räisä}}
\author[4]{\mbox{Sebastian Rodriguez Beltran}}
\author[1]{\mbox{Yuan He}}
\author[2]{\mbox{John Bronskill}}
\author[2]{\mbox{Richard E.
Turner}}
\author[1]{\mbox{Antti Honkela}}
\affil[1]{Department of Computer Science, University of Helsinki, Helsinki, Finland}
\affil[2]{Department of Engineering, University of Cambridge, Cambridge, United Kingdom}
\affil[3]{CISPA Helmholtz Center for Information Security, Saarbrücken, Germany}
\affil[4]{Current address: Vienna, Austria}
\date{}

\begin{document}
\maketitle
\vspace{-1.5em}
\begingroup
\renewcommand{\thefootnote}{\fnsymbol{footnote}}
\footnotetext[1]{These authors contributed equally to this work.}
\endgroup
\noindent\textbf{
Tabular data underpin prediction and decision-making in medicine, finance, government and science, but often contain sensitive individual-level information, creating a need for accurate prediction while preserving privacy.
Traditional private learning provides formal privacy guarantees, but requires slow dataset-specific optimisation, suffers substantial utility loss under strong privacy, and is often difficult to apply correctly.
Tabular foundation models adapt rapidly to new datasets, but existing models lack formal privacy guarantees, and are highly vulnerable to membership-inference attacks, limiting their use on sensitive data.
Here we introduce PrivTab, an easy to use tabular foundation model for differentially private classification that embeds a privacy mechanism within its architecture.
Pretrained on simulated datasets, PrivTab uses in-context learning to transform sensitive rows into compact, provably private summaries---effectively learning how to learn under privacy.
PrivTab outperforms private linear and neural-network baselines under moderate-to-strong privacy, shows negligible membership leakage, maintains well-calibrated predictions under strong privacy, and reduces dataset fitting time by 10,000 times, requiring only a single forward pass.
By combining formal privacy, speed, and easy of use, PrivTab brings recent advances in AI to applications where sensitive individual-level data have limited their adoption.
}

\par\medskip

\begin{figure*}[t]
\centering
\includegraphics[width=1.0\textwidth]{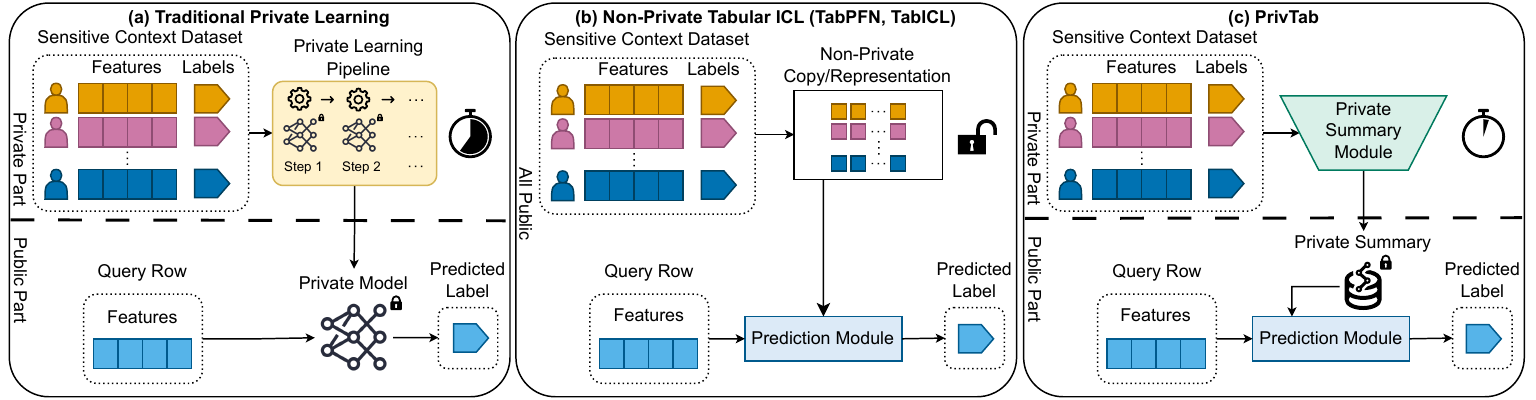}
\vspace{0.5em}
\begin{minipage}[t]{0.54\textwidth}
\centering {\footnotesize
\textbf{(d) Qualitative comparison}\par
\vspace{0.2em}
\newcommand{\tableYes}{\ensuremath{\textcolor[HTML]{009E73}{\boldsymbol{\checkmark}}}}
\newcommand{\tableNo}{\ensuremath{\textcolor[HTML]{D55E00}{\boldsymbol{\times}}}}
\newcommand{\tableNA}{\textcolor[HTML]{666666}{---}}
\setlength{\tabcolsep}{3pt}
\renewcommand{\arraystretch}{1.06}
\rowcolors{2}{gray!8}{white}
\begin{tabularx}{\linewidth}{>{\raggedright\arraybackslash\bfseries}X>{\centering\arraybackslash}p{0.15\linewidth}>{\centering\arraybackslash}p{0.24\linewidth}>{\centering\arraybackslash}p{0.14\linewidth}}
\toprule
Property & \shortstack{Traditional\\Private\\Learning} & \shortstack{Non-Private\\Tabular\\In-Context\\Learning\\(TabPFN, TabICL)} & \textbf{PrivTab} \\
\midrule
Formal privacy guarantee      & \tableYes & \tableNo & \tableYes \\
Fast fitting for new datasets & \tableNo & \tableYes & \tableYes \\
Privacy-critical path is easy to inspect and audit & \tableNo & \tableNA & \tableYes \\
\bottomrule
\end{tabularx}
}
\end{minipage}\hfill
\begin{minipage}[t]{0.44\textwidth}
\centering
\resizebox{\linewidth}{!}{\textbf{(e) Membership inference --- Maternal Health Risk dataset}}\par
\vspace{0.1em}
\includegraphics[width=\linewidth]{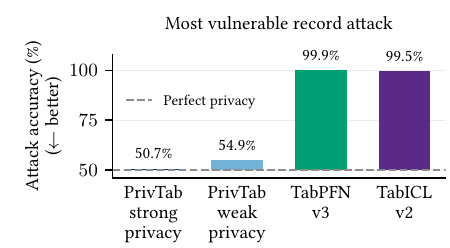}
\end{minipage}
\vspace{-0.35em}
\captionsetup{skip=2pt}
\caption{
\textbf{Alternative approaches to tabular prediction from sensitive data.}
\textbf{a}, Private learning trains a new model on each sensitive dataset, requiring relatively slow dataset-specific optimisation.
\textbf{b}, Non-private tabular learning fits new datasets in a single forward pass through in-context learning, but retains sensitive data for prediction.
\textbf{c}, PrivTab instead releases a reusable private summary in one forward pass; all subsequent predictions are private due to the post-processing property of DP.
Dashed lines mark the privacy boundary, and stopwatches indicate relative rather than measured time.
\textbf{d}, Qualitative comparison; green ticks, orange crosses and grey dashes denote presence, absence and non-applicability, respectively.
The auditability comparison concerns the complete data-dependent path, including hyperparameter tuning and privacy accounting, rather than an isolated private update \citep{Ponomareva2023HowToDPfyML,Papernot2022Hyperparameter,Ganev2025PATEGAN}.
\textbf{e}, Membership-inference attack accuracy for the most vulnerable record of each method.
For each record in the dataset, we compute the lower endpoint of the 95\% confidence interval for attack accuracy; the bar shows the maximum of these lower endpoints.
An accuracy of 50\% corresponds to perfect privacy.
TabPFN v3 and TabICL v2 approach 100\%, whereas PrivTab remains much closer to perfect privacy (50\%) at the displayed strong and weak formal privacy levels.
}
\label{fig:dp-tnp-overview}
\end{figure*}

Records about individuals are predominantly tabular, from medical histories and financial transactions to census returns and administrative files.
Machine learning offers immense potential to transform the analysis of tabular data, improving predictions and the decisions they inform.
Yet a model, or even a single prediction produced by it, can inadvertently reveal sensitive information.
This tension raises a central scientific question: can a model deliver fast and accurate classification on sensitive tabular data while formally guaranteeing that any single individual's record remains private?

Differential privacy (DP) \citep{Dwork2006DP,Dong2022GDP} has emerged as the principal framework for resolving this conflict, formally limiting how much a model's outputs can depend on any one individual's record.
In standard private machine learning \citep{Ponomareva2023HowToDPfyML}, DP stochastic optimisation \citep{Song2013DPSGD,Bassily2014DPEmpiricalRisk,Abadi2016DPSGD} achieves this protection by bounding the contribution of any individual's record to the parameter updates and injecting calibrated noise during training.
However, applying this approach to tabular data creates major operational hurdles.
Training must be repeated from scratch for every new dataset and target privacy level, frequently at a steep cost to predictive performance, especially under strong privacy.
Moreover, hyperparameter tuning and model selection can weaken privacy guarantees, and their privacy cost must be accounted for alongside model fitting~\citep{Papernot2022Hyperparameter}.
Even when the underlying mechanisms are theoretically private, implemented pipelines can fail to provide the intended guarantee because of privacy-accounting mistakes, noise-calibration errors or subtle implementation bugs \citep{Ding2018DetectingDPViolations,Bichsel2021DPSniper,Tramer2022DebuggingDP,Nasr2023TightAuditing,Ganev2025PATEGAN,Cebere2026GreyBoxAuditing} (\cref{fig:dp-tnp-overview}d).
This reveals a critical gap for domain practitioners---who are rarely privacy specialists---and motivates classification methods with privacy built in that are easy to use and straightforward to audit.

Tabular foundation models (TFMs) address the problem of repeated training from scratch for each new dataset.
Models such as TabPFN \citep{Hollmann2023TabPFNICLR,Hollmann2025TabPFNNature} and TabICL \citep{Qu2025TabICL} take as input a labelled context dataset together with new query rows and predict their labels in a single forward pass.
This eliminates the need for iterative retraining and enables near-instant fitting and prediction.
They achieve this through in-context learning, by pretraining across millions of simulated tasks consisting of context--target tabular dataset pairs.
The labelled context dataset is used to fit the model, whereas the held-out target dataset contains the query rows and their target labels which are used to evaluate the predictions and update the model.
These simulated tasks contain no sensitive information.
However, these models lack formal privacy protections, and for both TabPFN and TabICL, sensitive context rows---or non-private representations of them---must remain available at inference time to generate predictions.
Indeed, using only black-box access to model predictions, we demonstrate that TabPFN and TabICL are vulnerable to context membership-inference attacks, allowing an attacker to determine whether a specific individual's record was included in the context dataset with near-perfect accuracy (\cref{fig:dp-tnp-overview}e).

To address both the challenges of traditional private learning and the vulnerability of existing TFMs, we introduce PrivTab, which embeds a privacy mechanism directly within a TFM.
Evaluated across a broad tabular benchmark, PrivTab achieves stronger predictive performance and better uncertainty estimates than conventional private baselines under moderate-to-strong privacy, remains competitive under weaker privacy, and reduces fitting time from minutes to milliseconds.
It also retains its performance in a workflow that includes private preprocessing.
Together, these results show that provable and auditable privacy can be built natively into foundation-model architectures, avoiding costly and difficult dataset-specific private training and opening a path to practical, privacy-preserving prediction from sensitive tabular data at scale.

\section{Principled and efficient privacy-aware in-context learning}
PrivTab embeds the privacy mechanism directly within the foundation model architecture at the point where sensitive information enters.
It compresses a sensitive context dataset into a compact DP summary (\cref{fig:dp-tnp-overview}).
Once constructed, this summary can be reused for all downstream query rows, avoiding repeated processing of the sensitive context and making prediction independent of the original context size.
This design enables efficient fitting and prediction using standard computational hardware.
Because all downstream predictions rely solely on this summary, the post-processing property of DP~\citep{Dwork2006DP} ensures they incur no additional privacy loss, allowing the summary to be stored and queried indefinitely.

Existing work based on pretrained models typically privatise either the inputs to or outputs from a model~\citep{gao2025data,wu2024privacypreserving,carey2024dptabicl}.
While prior approaches have mainly targeted textual data and natural-language tasks~\citep{gao2025data,wu2024privacypreserving}, DP-TabICL~\citep{carey2024dptabicl} computes noisy aggregate statistics over groups of tabular records and converts them into textual examples for large language model (LLM) prediction.
Such methods rely on general-purpose LLM inference, which can be substantially more expensive because prediction requires repeated inference for each query row.
DP-TabICL, for example, reports prediction times of 0.14--1.11 seconds per query row, and evaluates only 10\% of the target rows for the larger datasets owing to inference costs~\citep{carey2024dptabicl}.

Drawing on TabArena's computational guidelines~\citep{EricksonTabArena2025}, which constrain computational resources and limit model evaluation to at most one hour, we adopt a controlled computational setting.
We therefore consider only dedicated private tabular methods compatible with this computational regime: methods that can be fitted on a central processing unit (CPU) or a graphics processing unit (GPU) and support efficient prediction on CPU.
This is particularly important for broad adoption, since downstream users might have limited computational resources and rely on standard CPU-based hardware.

PrivTab pretrains the foundation model with the privacy mechanism on millions of simulated tasks and across a range of privacy levels, achieving privacy-aware in-context learning---effectively ``learning how to learn under privacy''~\citep{raisa2024noise}.
This enables PrivTab to account explicitly for the effect of privacy noise in its predictions and uncertainty estimates---a crucial requirement for high-stakes decision-making.
By contrast, standard private-learning pipelines do not provide this guarantee by default unless the effect of privacy noise on predictive uncertainty is explicitly modelled.

Designing a private summary mechanism  presents several technical challenges: the influence of each individual record must be bounded, the attention architecture must remain expressive despite the added noise, and the implementation must remain simple enough to be auditable.
We address these requirements by designing differentially private multi-head cross-attention (DP-MHCA), a bounded cross-attention layer that produces a private summary with sensitivity that does not grow with context size.
This design confines the privacy-critical operations to a self-contained transformer module.

PrivTab's privacy-critical path is thus more localised and easier to inspect and audit than that of a conventional private-training pipeline.
To strengthen the connection between the mathematical privacy guarantee (\cref{app:dp-tnp}) and the executable implementation, we used three complementary approaches.
First, we isolated the privacy-critical operations in a self-contained reference implementation (\cref{app:dp-mhca-reference-implementation}).
Second, we formally verified the sensitivity bound and adaptive Gaussian differential privacy (GDP) composition in Lean 4 \citep{deMouraUllrich2021Lean4}.
Third, empirical sensitivity and privacy audits found behaviour consistent with the intended noise calibration across all evaluated privacy regimes (\cref{app:empirical-audits}).
These checks strengthen confidence that the implementation follows the proved mechanism; the Lean formalisation itself does not verify the exact software execution.

\section{Private classification across datasets}

\begin{figure}[!th]
    \centering
    \captionsetup{skip=3pt}
    \begin{subfigure}[t]{0.48\textwidth}
        \centering
        \includegraphics[width=\linewidth]{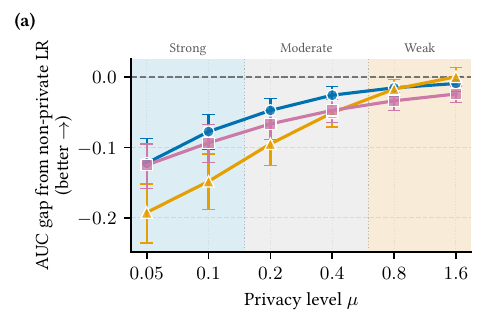}
        \phantomcaption
        \label{fig:exp-nondp-auc}
    \end{subfigure}
    \hfill
    \begin{subfigure}[t]{0.48\textwidth}
        \centering
        \includegraphics[width=\linewidth]{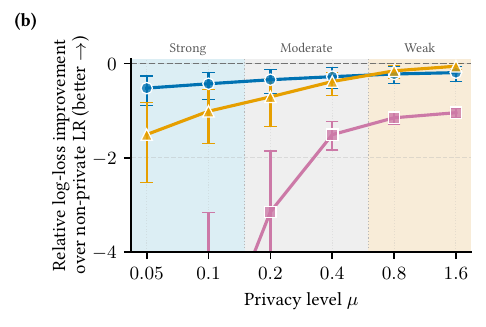}
        \phantomcaption
        \label{fig:exp-nondp-loss}
    \end{subfigure}
    \par\vspace{-0.1em}
    \includegraphics[width=0.42\textwidth]{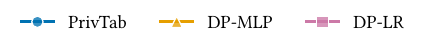}
    \par\vspace{-0.1em}
    \begin{subfigure}[t]{0.48\textwidth}
        \centering
        \includegraphics[width=\linewidth]{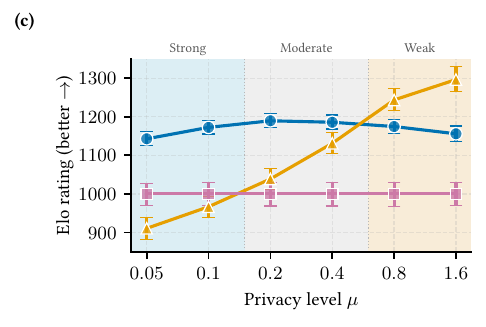}
        \phantomcaption
        \label{fig:exp-elo-by-mu}
    \end{subfigure}
    \hfill
    \begin{subfigure}[t]{0.48\textwidth}
        \centering
        \includegraphics[width=\linewidth]{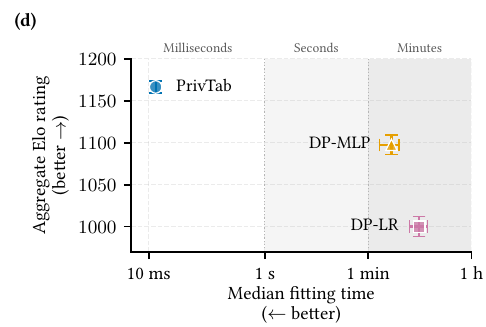}
        \phantomcaption
        \label{fig:exp-elo-aggregate}
    \end{subfigure}
    \par\vspace{-0.5em}
    \caption{
    \textbf{Predictive performance and fitting time across 33 TabArena datasets.}
    \textbf{a}, Paired AUC gap, \(\mathrm{AUC}_{m}-\mathrm{AUC}_{\mathrm{non\text{-}private\,LR}}\), for each method $m$ (PrivTab, DP-MLP, DP-LR).
    \textbf{b}, Relative log-loss improvement over non-private logistic regression for the same three methods.
    Each quantity is computed on matched context--target splits before aggregation; zero denotes parity with non-private logistic regression and positive values mean \(m\) is better.
    Error bars are bootstrap 95\% confidence intervals over datasets.
    \textbf{c}, Elo rating at each privacy level; higher ratings indicate better predictive performance, and shaded bands mark strong (\(\mu<0.15\)), moderate (\(0.15\leq\mu<0.6\)) and weak (\(\mu\geq0.6\)) formal privacy.
    Elo combines binary-task AUC and multiclass-task log loss and is calibrated separately at each privacy level so that differentially private logistic regression (DP-LR) scores 1,000.
    All results use ten context--target splits per dataset.
    \textbf{d}, Aggregate Elo over all privacy levels versus median fitting time per context--target split on a logarithmic scale, so better methods lie towards the upper left.
    Horizontal and vertical bars are bootstrap 95\% confidence intervals for median fitting time and Elo rating, respectively.
    }
    \label{fig:exp-elo}
    \label{fig:exp-nondp-gap}
\end{figure}

\begin{figure}[!ht]
    \centering
    \captionsetup{skip=3pt}
    \includegraphics[width=\textwidth,trim=0 7bp 0 0,clip]{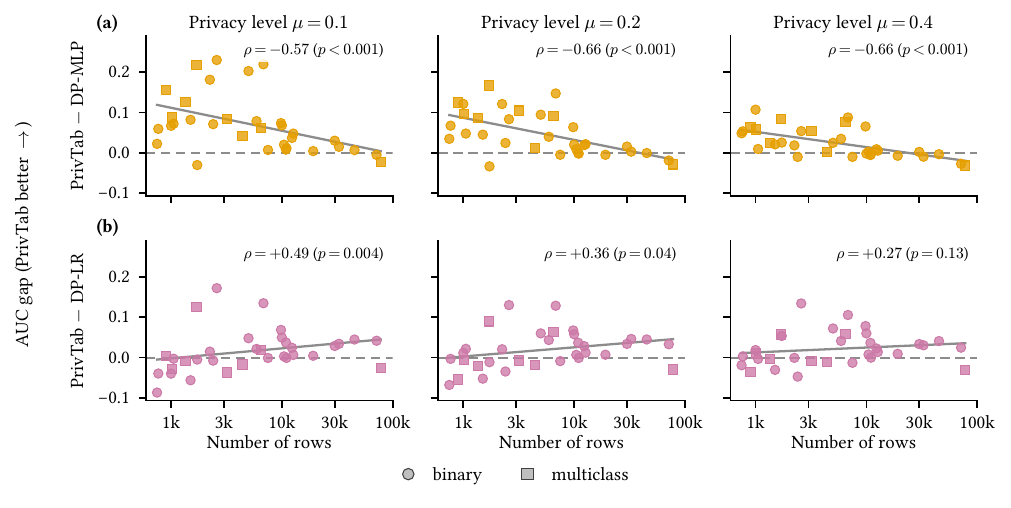}
    \caption{
    \textbf{AUC gaps versus dataset size across 33 TabArena datasets.}
    \textbf{a}, Mean paired AUC gap between PrivTab and the differentially private multilayer perceptron (DP-MLP).
    \textbf{b}, The corresponding gap between PrivTab and differentially private logistic regression (DP-LR).
    Columns show privacy levels \(\mu=0.1, 0.2\) and \(0.4\); each point is a dataset's mean paired gap across ten context--target splits, plotted against its number of rows on a logarithmic scale.
    Positive gaps favour PrivTab, while the dashed horizontal line marks equal AUC.
    Circles denote binary datasets and squares denote multiclass datasets; AUC is macro-averaged one-versus-rest for multiclass tasks.
    Grey lines are least-squares fits against log dataset size, shown as visual guides; annotations report Spearman's \(\rho\) and the corresponding \(p\)-value.
    }
    \label{fig:elo-battle-breakdown}
\end{figure}

We evaluated PrivTab on 33 classification datasets from TabArena \citep{EricksonTabArena2025}, a standard tabular benchmark, across six privacy regimes and ten independent context--target splits per dataset.
In each split, 80\% of a dataset's labelled rows form the context used to fit each method, and the remaining 20\% are held-out target rows used to evaluate its predictions.
To accommodate varying table sizes, we pretrained two model variants specialised for small and large datasets, routing incoming datasets accordingly.
We compared PrivTab with two carefully tuned baselines: differentially private logistic regression (DP-LR) and a differentially private multilayer perceptron (DP-MLP).
Both are standard model classes for private tabular learning and can be trained using DP stochastic optimisation for each dataset and privacy level. In contrast, PrivTab requires no dataset-specific optimisation at any privacy level.

PrivTab is more accurate throughout the moderate-to-strong privacy regimes, with DP-MLP closing the gap only under weak privacy constraints (\cref{fig:exp-nondp-auc,fig:exp-nondp-loss}).
PrivTab's advantage is most pronounced in log loss under heavy noise (\cref{fig:exp-nondp-loss}), consistent with training through an embedded privacy mechanism helping to preserve reliable uncertainty estimates when privacy bounds are tightest.
In aggregate Elo comparisons spanning all datasets, privacy levels and splits, PrivTab achieves the highest overall rating (\cref{fig:exp-elo-aggregate}), driven by its better performance in the moderate-to-strong privacy regimes (\cref{fig:exp-elo-by-mu}).

The dataset-level breakdown shows contrasting trends with row count (\cref{fig:elo-battle-breakdown}): PrivTab's AUC advantage over DP-MLP is largest on smaller datasets and diminishes as datasets grow, whereas its advantage over DP-LR generally grows with dataset size.
This is consistent with differences in how the models exploit additional context: PrivTab, like other tabular foundation models, may eventually saturate as context size increases, while DP-LR has limited capacity to benefit from larger datasets.
In contrast, the greater capacity of DP-MLP allows it to continue benefiting from additional training data, thereby narrowing the gap to PrivTab on larger datasets.
PrivTab also has a positive AUC gap for most of the 33 datasets at each of the three plotted privacy levels, particularly against DP-MLP (\cref{fig:supplement-auc-gap-features}).
The corresponding relationship with feature count is less pronounced.
Complete per-dataset breakdowns against DP-MLP and DP-LR are provided in \cref{fig:supplement-auc-gap-dp-mlp,fig:supplement-auc-gap-dp-lr}.

These results are even more striking when considering the computational difference between PrivTab and traditional private learning.
PrivTab requires a median fitting time of approximately 10 milliseconds per context--target split---more than four orders of magnitude faster than DP-MLP and DP-LR, which require minutes of dataset-specific private training (\cref{fig:exp-elo-aggregate}).
Across the entire benchmark, fitting DP-LR and DP-MLP required approximately 400 and 130 graphics processing unit (GPU) hours\footnote{GPU time is the cumulative wall-clock time spent running experiments on this hardware.}, respectively.
In contrast, fitting PrivTab required only 30 GPU seconds.
This cost reflects the repeated dataset-specific training and private model selection required by the baselines: there were 198 dataset--privacy configurations per baseline (33 datasets $\times$ 6 privacy levels).
Ten splits per configuration required 1,980 final training runs per baseline, each preceded by private learning-rate selection that consumed part of the privacy budget.
In contrast, PrivTab requires neither dataset-specific nor privacy-level-specific optimisation or hyperparameter tuning.
Each pretrained PrivTab variant can be used across different privacy levels without retraining.
Comprehensive dataset-level runtimes for each method are provided in \cref{fig:exp-runtime}.

On a CPU, median prediction times for the full target set were 66.6 milliseconds (ms) for PrivTab, 0.35\,ms for DP-MLP and 0.18\,ms for DP-LR (\cref{fig:inference-timing,tab:supplement-inference-timing}).
These medians were computed across the 33 datasets after averaging timings within each dataset.

\FloatBarrier
\section{Private preprocessing in practice}

The TabArena benchmark isolates prediction by applying a common idealised preprocessing procedure to all methods.
Here, preprocessing refers to shifting and scaling the  columns in the sensitive dataset before model fitting, using statistics computed from those columns.
The same statistics are then used to transform the queries accordingly and must therefore be privatised before being released alongside the private summary.
We show that this is possible in practice for datasets with public column descriptions detailed enough to infer meaningful clipping intervals, and whose modest feature counts allow private preprocessing to retain useful signal.
A clipping interval specifies lower and upper bounds for a numeric feature, with values outside the interval truncated to the nearest bound.
These bounds limit the sensitivity of the estimated means and variances, allowing the noise required for private preprocessing to be calibrated.
We inferred numeric clipping intervals from the public descriptions using an LLM; no sensitive values were supplied to that model.

The total privacy budget used in the case study here, \(\mu=0.4\), was split between private preprocessing and private prediction or training.
Bank Marketing, Pima Diabetes and Maternal Health Risk originate from the UCI Machine Learning Repository \citep{Kelly2023UCI}; the other two datasets are COMPAS from OpenML and ACS Income from Folktables \citep{OpenML2025,Ding2021RetiringAdult}.
PrivTab achieved the highest Elo rating across the five datasets in the case study (\cref{fig:end-to-end-workflows}c), with numerical values reported in \cref{tab:end-to-end-elo}; method-level AUC and log-loss results are provided in \cref{fig:end-to-end-case-auc}b and \cref{fig:end-to-end-workflows}b, respectively.
Oracle (empirical min--max) bounds and LLM-derived bounds yielded similar AUC and log loss on four of the five datasets (\cref{fig:end-to-end-case-auc}a and \cref{fig:supplement-preprocessing-logloss}).
The oracle comparison is partially private: its standardisation and prediction are privatised, but its data-dependent bounds do not have an end-to-end privacy guarantee. The LLM-derived bounds use only public descriptions.
Pipeline details, prompts, bounds and numerical results appear in \cref{app:end-to-end-details}.

\begin{figure}[!t]
    \centering
    \captionsetup{skip=3pt}
    \includegraphics[width=\textwidth,trim=0 2bp 0 0,clip]{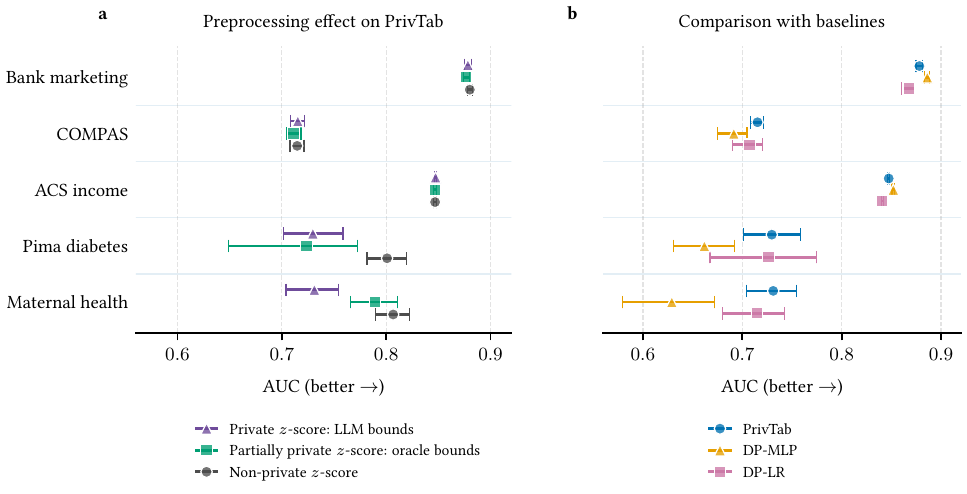}
    \caption{
    \textbf{Private preprocessing case study.}
    \textbf{a}, Mean PrivTab AUC across ten context--target splits at total privacy level \(\mu=0.4\).
    The three preprocessing variants compare non-private standardisation, partially private preprocessing using oracle bounds computed from empirical dataset minima and maxima, and private preprocessing using bounds inferred from public dataset descriptions by an LLM.
    Only the LLM-derived bounds avoid using sensitive values to select clipping intervals; the oracle-bounds comparison is partially private.
    \textbf{b}, Mean AUC for PrivTab, DP-MLP and DP-LR using LLM-derived bounds and private standardisation across the same five datasets and ten splits.
    Higher AUC is better. Error bars show bootstrap 95\% confidence intervals. Legends beneath each panel identify its plotted variants.
    }
    \label{fig:end-to-end-case-auc}
\end{figure}

\FloatBarrier

\section{Non-private models reveal membership}

TFMs use a context dataset to infer labels for incoming query rows.
Because these models fit directly to the sensitive context dataset, they provide no formal guarantee against membership inference of specific context records.
To evaluate this vulnerability, we conducted a likelihood-ratio membership-inference attack \citep{Carlini2022MembershipInference}, which distinguishes members of a context set from non-members based on prediction confidence.
Because these models fit to context records without privacy protection, they can exhibit higher confidence when queried on the same context records.
In contrast, PrivTab and private-learning methods use DP mechanisms that limit how reliably an attacker can infer context membership from prediction confidence.

\begin{figure}[!t]
    \centering
    \captionsetup{skip=3pt}
    \includegraphics[width=\textwidth,trim=0 9bp 0 0,clip]{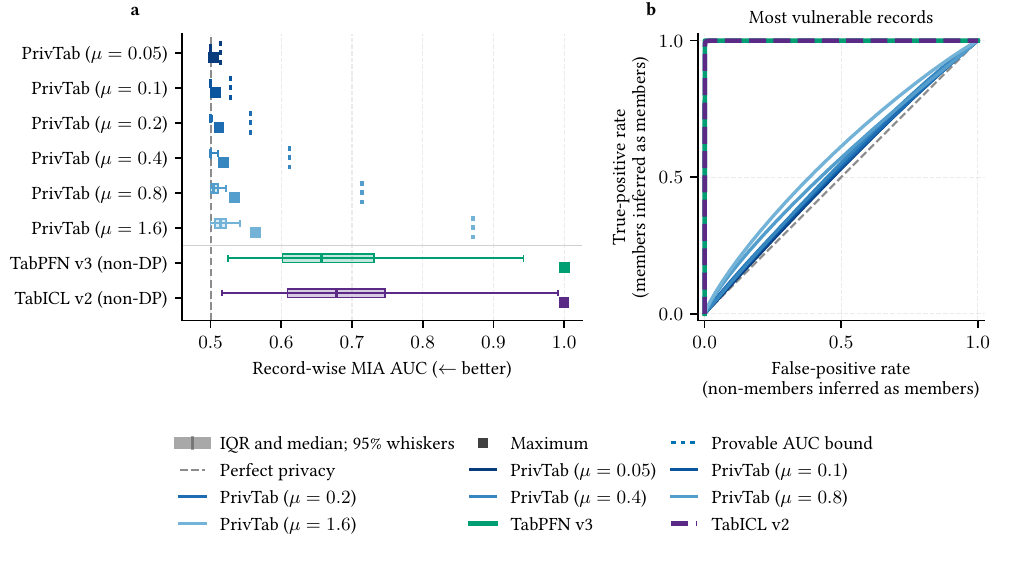}
    \caption{
    \textbf{Record-level membership-inference protection.}
    \textbf{a}, Distribution of the record-wise membership-inference AUC on the Maternal Health Risk dataset. For each record, we compute its AUC and 95\% confidence interval, then plot the lower endpoint of that interval. Boxes show the interquartile range and median, whiskers span the central 95\%, squares mark the maximum across records, and dotted vertical lines mark the provable AUC upper bound for PrivTab at each privacy level.
    \textbf{b}, ROC curves for the most vulnerable record at each of the six PrivTab privacy levels and for TabPFN v3 and TabICL v2.
    Grey dashed lines in \textbf{a,b} denote perfect privacy.
    PrivTab remains much closer to perfect privacy whereas the most vulnerable rows of the non-private models are highly vulnerable.
    }
    \label{fig:membership-inference-summary}
\end{figure}

We performed the analysis on clinical records from the Maternal Health Risk dataset.
Rather than reporting aggregate averages, which can hide highly vulnerable records \citep{Knolle2026DisparatePrivacyRisks}, we report record-level vulnerability using fitted attack accuracy \cref{fig:dp-tnp-overview}e and the AUC of record-level ROC curves \cref{fig:membership-inference}a,b.
The conservative lower confidence estimates of fitted attack accuracy in \cref{fig:dp-tnp-overview}e approach 100\% for the most vulnerable records under TabPFN v3 and TabICL v2.
In contrast, PrivTab remains much closer to perfect privacy across all evaluated privacy levels.
While empirical attacks cannot replace formal proofs, they illustrate the practical importance of DP guarantees.
Complete attack specifications, formal confidence bounds and full benchmark evaluations are detailed in the Methods and \cref{app:membership-inference-results}.

\FloatBarrier

\section{Discussion}

Our results suggest that introducing privacy into the TFM paradigm can substantially reduce computational cost and improve ease of use while retaining competitive predictive utility under privacy.
Traditional private learning methods enforce privacy through the optimisation procedure for every new dataset, which requires repeatedly fitting and privatising a new model.
PrivTab instead places the privacy mechanism inside the foundation-model architecture.
The model is pretrained once to perform prediction through a private summary, and fitting to a new sensitive dataset only requires constructing this summary through a single forward pass.
This amortises computational cost and allows the model to be used across privacy levels without retraining.

This architectural choice also makes the privacy boundary simpler and easier to audit.
Once the sensitive dataset has been converted into the private summary, all later predictions are post-processing and do not require further access to the sensitive records.
The private summary mechanism can therefore be inspected, its mathematical guarantees formally verified, and its implementation tested through targeted empirical audits.
The summary itself can also be stored and shared with other users for later predictions.
Moreover, because PrivTab is pretrained using the same privacy mechanism used at prediction, it learns how to make predictions in the presence of privacy noise rather than encountering privacy noise only during dataset-specific private training.

Several limitations remain.
We study only classification and not regression.
PrivTab's privacy model assumes one user per row, and extending the guarantee to settings in which one individual contributes multiple rows is less straightforward than for approaches based on DP stochastic optimisation.
PrivTab has a smaller relative advantage when privacy is weak or context sizes are large, as models trained separately on each dataset can benefit more from both the relaxed privacy constraint and additional data. 
Similar diminishing returns on larger context sizes have also been observed in earlier TFMs such as TabPFN v1, and future research could benefit from recent advancements in this area to improve PrivTab.
Finally, preprocessing must also be included in the privacy analysis in practice.
Our private-preprocessing experiments demonstrate one possible approach, but the appropriate preprocessing pipeline remains application-dependent.
The same preprocessing requirement also applies to the private-learning baselines.

More broadly, our results suggest that foundation models can be designed around privacy from the outset, rather than having privacy added retrospectively to every downstream training procedure.
PrivTab provides one example of this principle by learning once how to perform inference through a private representation of the sensitive dataset, while keeping the mechanism that accesses the sensitive data explicit and auditable.

\clearpage
\section{Methods}

\subsection{Differential privacy}
Differential privacy (DP) limits how much the output distribution of a randomised algorithm can change when one person's record changes~\citep{Dwork2006DP}.
Let \(\calD=\{\bz_i\}_{i=1}^{n}\) be a dataset of individual records and \(\calA:\mathbb D\to\mathcal O\) a randomised mechanism from datasets $\mathbb{D}$ to an output space $\mathcal{O}$; in supervised learning, \(\bz_i\) can be an input--label pair $(\bx, y)$.
Two datasets $\mathcal{D}$ and $\mathcal{D}'$ are neighbouring or adjacent, denoted \(\calD\simeq\calD'\), if they have the same size and differ in at most one record $\bz_i$.
This is known as the \emph{replace-one} or \emph{substitute} adjacency relation.

We report privacy in the Gaussian DP (GDP) parameterisation~\citep{Dong2022GDP}.
A mechanism satisfies \(\mu\)-GDP when its outputs on neighbouring datasets are no easier to distinguish than \(\mathcal N(0,1)\) and \(\mathcal N(\mu,1)\) in a hypothesis test; smaller \(\mu\) means stronger privacy.
For a vector-valued function \(f\), the Gaussian mechanism \(\calA(\calD)=f(\calD)+\mathcal N(\mathbf 0,\sigma^2\mathbf I)\) has \(\mu=\Delta(f)/\sigma\), where \(\sigma\) is the noise standard deviation, \(\mathbf I\) is the identity matrix of the output dimension, and the \(\ell_2\)-sensitivity is
\(\Delta(f)=\sup_{\calD\simeq\calD'}\|f(\calD)-f(\calD')\|_2\).
Thus, bounding a one-row change in \(f\) determines the noise needed for a target privacy level.
The adaptive composition of mechanisms with GDP parameters \(\mu_1,\ldots,\mu_k\) has parameter
\begin{equation}
    \mu_{\mathrm{total}}^2=\sum_{i=1}^{k}\mu_i^2.
    \label{eq:methods-gdp-composition}
\end{equation}
Once a DP summary is released, computation using only that summary is post-processing and costs no additional privacy budget.
Formal definitions, conversion between GDP and \((\varepsilon,\delta)\)-DP, and the precise adaptive-composition statement are in \cref{app:differential-privacy}.

\subsection{Model architecture}
Neural processes~\citep{Garnelo2018NPs,Garnelo2018CNPs} use a labelled context set to make predictions at new query inputs without dataset-specific training, requiring only a single forward pass.
Transformers~\citep{Vaswani2017Transformers} process collections of input representations using self-attention, which allows each representation to be updated according to its relationships with the others.
Attentive and transformer neural processes~\citep{Kim2019ANPs,Nguyen2022TNPs} adapt this mechanism to conditional prediction, allowing query representations to attend to the labelled context and extract information relevant to each prediction.
Latent-bottleneck attentive neural processes (LBANP)~\citep{Feng2023LBANP} instead first compress the context into a fixed-size set of latent representations, which provides an intermediate summary through which information passes before predictions are made at the queries.

\paragraph{High-level architecture.}
PrivTab follows the structure of the LBANP but makes the context-to-summary operation differentially private (\cref{fig:architecture}).
Each row has a feature vector \(\bx_i=(x_{i1},\ldots,x_{id})\), where \(d\) is the number of input features and \(x_{ik}\) is the value of feature \(k\) in row \(i\).
Let \(\calD_c=\{(\bx_i^c,y_i^c)\}_{i=1}^{n_c}\) be the labelled context and \(X_t=(\bx_1^t,\ldots,\bx_{n_t}^t)\) the query inputs.
During pretraining and evaluation, \(\calD_t=\{(\bx_j^t,y_j^t)\}_{j=1}^{n_t}\) also contains target labels, which are withheld from the predictor and used only for the learning objective or evaluation.

The row embedding model \(\mathrm{RowEmbed}\) converts each row into a fixed-width token, using the observed label for context rows and an unknown-label marker for queries.
It maps each context row $(\bx_i^c, y_i^c)$ to
\begin{equation}
\bz_i^c=\mathrm{RowEmbed}\left(\hat\bx_i^c,\mathrm{LabelEmbed}(y_i^c)\right)
\end{equation}
and each query $\bx_j^t$ to
\begin{equation}
\bz_j^{t,(0)}=\mathrm{RowEmbed}\left(\hat\bx_j^t,\mathrm{LabelEmbed}(y_{\mathrm{null}})\right).
\end{equation}
Here \(\hat\bx\) is the padded, fixed-width feature representation of $\bx$, \(\mathrm{LabelEmbed}\) is a learned label embedding, and \(\mathrm{RowEmbed}\) is the row-wise embedding network.
The query uses the dedicated unknown-label index \(y_{\mathrm{null}}\); see \cref{app:dp-tnp} for the exact construction.
Each row token has width \(d_{\mathrm{tok}}=256\).
The context embeddings pass through the private summary module to produce \(\tilde S\); the prediction module then combines each query embedding with this summary to obtain label probabilities:
\begin{align}
    \tilde S
    &=\mathrm{PrivateSummary}_{\bphi}(\bz_{1:n_c}^c;\mu), \nonumber\\
    q_{\bpsi}(y\mid\bx_j^t,\tilde S)
    &=\bigl[\mathrm{Prediction}_{\bpsi}(\bz_j^{t,(0)};\tilde S)\bigr]_y.
    \label{eq:methods-module-flow}
\end{align}
Here $[\cdot]_y$ denotes the selection of the probability for label $y$, and \(\bphi\) and \(\bpsi\) denote the fixed summary and prediction parameters, respectively, and \(\mu\) is the requested privacy level.
The model has 9.3 million parameters, with 8.9 million in the private summary module and 0.4 million additional parameters in the prediction module.

\paragraph{Private summary module.}
The private summary module compresses the labelled context into a fixed-size collection of noisy summaries.
The module starts from \(m=128\) \emph{pseudo-tokens}, \(\bu_1,\ldots,\bu_m\), learned during pretraining and subsequently fixed.
Each of the \(L_{\mathrm{main}}=3\) main layers first applies private multi-head cross-attention (DP-MHCA) to the context embeddings and then multi-head self-attention (MHSA)~\citep{Vaswani2017Transformers} to the resulting summary tokens.
Denote $\bu_{1 : m}$ as a short-hand for $\bu_{1}, \ldots, \bu_{m}$, and let \(\tilde{\bs}_{1:m}^{(0)}=\bu_{1:m}\), then for each layer $\ell=1,\ldots,L_{\mathrm{main}}$,
\begin{align}
    \bs_{1:m}^{(\ell)}
    &=\mathrm{DP\text{-}MHCA}_{\mathrm{block}}^{(\ell)}(\tilde{\bs}_{1:m}^{(\ell-1)};\bz_{1:n_c}^c), \nonumber\\
    \tilde{\bs}_{1:m}^{(\ell)}
    &=\mathrm{MHSA}_{\mathrm{block}}^{(\ell)}(\bs_{1:m}^{(\ell)}).
    \label{eq:methods-layer-flow}
\end{align}
The state \(\bs_{1:m}^{(\ell)}\) is the intermediate output of the private cross-attention block; \(\tilde{\bs}_{1:m}^{(\ell)}\) is the saved summary after the self-attention block.
Another \(L_{\mathrm{post}}=2\) layers refine the summary using only self-attention:
\begin{equation}
    \tilde{\bs}_{1:m}^{(\ell)}
    =\mathrm{MHSA}_{\mathrm{block}}^{(\ell)}(\tilde{\bs}_{1:m}^{(\ell-1)}),
    \qquad \ell=L_{\mathrm{main}}+1,\ldots,L.
\end{equation}
Here \(L=L_{\mathrm{main}}+L_{\mathrm{post}}=5\), and the module returns the collection of all saved states,
\begin{equation}
    \tilde S=\left(\tilde{\bs}_{1:m}^{(1)},\ldots,\tilde{\bs}_{1:m}^{(L)}\right)
    =\mathrm{PrivateSummary}_{\bphi}(\bz_{1:n_c}^c;\mu).
\end{equation}
The operators \(\mathrm{DP\text{-}MHCA}_{\mathrm{block}}\) and \(\mathrm{MHSA}_{\mathrm{block}}\) denote complete blocks, including a token-wise feed-forward network; their details are in \cref{app:dp-tnp-architecture-details}.
Only DP-MHCA reads the context embeddings; self-attention and the remaining summary updates are post-processing of its private outputs.

\paragraph{DP-MHCA layer.} DP-MHCA replaces the usual softmax attention~\citep{Vaswani2017Transformers} with a sum of bounded contributions and adds Gaussian noise to protect individual context rows.
For each layer \(\ell\) and head \(h\), learned matrices project the summary token \(\tilde{\bs}_i^{(\ell-1)}\) into a query and the context token \(\bz_j^c\) into a key and a value:
\begin{align}
    \bq_i^{(h)}
    &=\mathbf{W}_Q^{(\ell,h)}\tilde{\bs}_i^{(\ell-1)}, \nonumber\\
    \bk_j^{(h)}
    &=\mathbf{W}_K^{(\ell,h)}\bz_j^c, \nonumber\\
    \bar{\bv}_j^{(h)}
    &=\mathbf{W}_V^{(\ell,h)}\bz_j^c.
    \label{eq:methods-dp-projections}
\end{align}
Here each projection matrix lies in \(\mathbb{R}^{d_h\times d_{\mathrm{tok}}}\).
In the first layer, \(\tilde{\bs}_i^{(0)}=\bu_i\); later layers use summary tokens obtained by post-processing previous private outputs.
For each summary token \(i\), the attention output before normalisation is
\begin{equation}\label{eq:simple-dp-attention}
\begin{aligned}
    \mathbf{a}_i^{(h)}
    &=\sum_{j=1}^{n_c}
    \tanh\!\left(\frac{\langle\bq_i^{(h)},\bk_j^{(h)}\rangle}{\sqrt{d_h}}\right)
    \frac{\bar{\bv}_j^{(h)}}{\max\{\lVert\bar{\bv}_j^{(h)}\rVert_2,\varepsilon_v\}}, \\
    \tilde{\bs}_{i,\mathrm{attn}}
    &=\left[\mathbf{a}_i^{(1)},\ldots,\mathbf{a}_i^{(H)}\right]+\boldsymbol{\xi}_i.
\end{aligned}
\end{equation}
Here \(\mathbf{a}_i^{(h)}\) is the pre-noise head output, \(d_h\) is the per-head key dimension, and brackets denote concatenation across heads.
The noise vectors \(\boldsymbol{\xi}_i\sim\mathcal{N}(\mathbf{0},\sigma^2\mathbf{I}_{Hd_h})\) are independent across summary tokens and layers; the noise scale \(\sigma\) is calibrated below.
The numerical stabiliser is \(\varepsilon_v=10^{-12}\).
Keys and values are computed row-wise, so each summand depends on one context row.

Normalising the noisy attention outputs controls their scale before the residual update.
We applied this step in certain configurations, especially during evaluation, by rescaling with the largest token norm:
\begin{equation}
    \tilde{\bs}_{i,\mathrm{attn}} =
    \frac{G\tilde{\bs}_{i,\mathrm{attn}}}
    {\max_{k=1,\ldots,m}\lVert\tilde{\bs}_{k,\mathrm{attn}}\rVert_2+\varepsilon_s},
    \label{eq:methods-summary-normalisation}
\end{equation}
where \(G=512\) and \(\varepsilon_s=10^{-6}\) in the reported experiments.
This preserves the relative magnitudes of the attention outputs while controlling their overall scale, which substantially improved performance on longer contexts (\cref{tab:supplement-ablation-normalization-combination}).
These normalised attention outputs precede the residual and feed-forward updates in the private block and the subsequent self-attention block; the saved states \(\tilde{\bs}_{1:m}^{(\ell)}\) include those operations.
Applied after noise addition, this normalisation is post-processing and does not affect the privacy guarantee.

\paragraph{Privacy guarantees.}
The following theorem states the privacy guarantee for the context dataset and all predictions made from its private summary.

\begin{privacytheorem}[Privacy of PrivTab]
Fix the pretrained parameters which are independent from \(\calD_c\).
For \(\mu>0\), let each of the \(L_{\mathrm{main}}\) DP-MHCA layers add independent noise from \(\mathcal{N}(0,\sigma^2)\) to every attention-output coordinate, where
\begin{equation}
    \sigma=\frac{2\sqrt{H m L_{\mathrm{main}}}}{\mu}.
    \label{eq:methods-noise-scale}
\end{equation}
Releasing the private summary \(\tilde S\) together with predictions for any number of public queries satisfies \(\mu\)-GDP with respect to \(\calD_c\) under replace-one adjacency.
\end{privacytheorem}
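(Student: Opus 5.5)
The plan is a standard three-part privacy argument. First, bound the $\ell_2$-sensitivity of a single DP-MHCA layer with respect to one context row. Second, view each layer as a Gaussian mechanism whose input also includes the previously released (noisy) summaries, and apply adaptive GDP composition \eqref{eq:methods-gdp-composition} over the $L_{\mathrm{main}}$ layers. Third, treat everything else as post-processing: self-attention, normalisation \eqref{eq:methods-summary-normalisation}, residual and feed-forward updates, the post layers, and all query predictions.

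\textbf{Per-layer sensitivity.} Fix a layer $\ell$ and condition on the previously released summary tokens $\tilde{\bs}^{(\ell-1)}_{1:m}$. The pretrained parameters do not depend on $\calD_c$, so the queries $\bq_i^{(h)}$ are fixed. Each context embedding $\bz_j^c$ is a row-wise function of $(\bx_j^c,y_j^c)$ alone, because $\mathrm{RowEmbed}$ and $\mathrm{LabelEmbed}$ act per row. Hence each summand in \eqref{eq:simple-dp-attention} depends on exactly one context row. Each summand also has norm at most $1$, for two reasons:
\begin{itemize}
\item $|\tanh(\cdot)|\le 1$;
\item $\bar{\bv}/\max\{\lVert\bar{\bv}\rVert_2,\varepsilon_v\}$ has norm $\min\{1,\lVert\bar{\bv}\rVert_2/\varepsilon_v\}\le 1$.
\end{itemize}
Under replace-one adjacency $n_c$ is unchanged and exactly one summand in each $\mathbf{a}_i^{(h)}$ changes. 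By the triangle inequality, $\lVert\mathbf{a}_i^{(h)}(\calD_c)-\mathbf{a}_i^{(h)}(\calD_c')\rVert_2\le 2$. Stacking all $m$ tokens and $H$ heads into one vector of dimension $mHd_h$ gives squared sensitivity at most $4Hm$, so $\Delta\le 2\sqrt{Hm}$.

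\textbf{Per-layer privacy and composition.} The noise $\boldsymbol{\xi}_i$ is i.i.d.\ $\mathcal N(0,\sigma^2)$ on every coordinate. Layer $\ell$, as a map from $\calD_c$ (given the past outputs) to the noisy attention outputs, is therefore $\mu_\ell$-GDP with
\[
\mu_\ell=\frac{2\sqrt{Hm}}{\sigma}=\frac{\mu}{\sqrt{L_{\mathrm{main}}}}
\]
by the choice \eqref{eq:methods-noise-scale}. The input to layer $\ell$ is the data together with the outputs of layers $1,\dots,\ell-1$. This is obtained by deterministic post-processing of the earlier noisy attention outputs, using fixed parameters $\bphi$ and fixed pseudo-tokens $\bu_{1:m}$. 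So the whole sequence of noisy attention outputs is an adaptive composition of $L_{\mathrm{main}}$ mechanisms, each $\mu/\sqrt{L_{\mathrm{main}}}$-GDP. The adaptive composition theorem then gives $\mu_{\mathrm{total}}^2=L_{\mathrm{main}}\cdot\mu^2/L_{\mathrm{main}}=\mu^2$.

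\textbf{Post-processing.} The remaining operations read $\calD_c$ only through the noisy attention outputs:
\begin{itemize}
\item normalisation, residual connections and feed-forward layers in the DP-MHCA block;
\item the MHSA blocks and the $L_{\mathrm{post}}$ post layers;
\item the prediction module $\mathrm{Prediction}_{\bpsi}$ applied to public queries $\bz_j^{t,(0)}$.
\end{itemize}
Hence $\tilde S$, and any number of predictions computed from it, are post-processing and inherit $\mu$-GDP.

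The main obstacle is making this adaptive-composition and post-processing argument airtight. One must check that no path other than the DP-MHCA sums touches $\calD_c$. In particular, the residual inside the DP-MHCA block must add only the previous public state, not a context-dependent term. The per-layer mechanism must also be written as a Gaussian mechanism whose query vectors depend only on past releases. I would formalise layer $\ell$ as $\calA_\ell(\calD_c;o_{1:\ell-1})$ with a sensitivity bound that holds uniformly over $o_{1:\ell-1}$, which is exactly the hypothesis the adaptive GDP composition theorem requires.
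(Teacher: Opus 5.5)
Your proposal is correct and follows essentially the same route as the paper. The paper proves the per-layer $2\sqrt{Hm}$ sensitivity bound from bounded $\tanh$ coefficients and unit-norm values (\cref{thm:dp-mhca-sensitivity}), then combines the Gaussian mechanism (giving $\mu/\sqrt{L_{\mathrm{main}}}$-GDP per layer conditional on earlier releases) with adaptive GDP composition (\cref{thm:dp-tnp-gdp}), and treats normalisation, self-attention, the post layers and all query predictions as post-processing. Your explicit checks are the conditions the paper's argument relies on implicitly: that the residual in the DP-MHCA block adds only the previous public state, and that each layer's sensitivity bound holds uniformly over past outputs.
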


\emph{Justification.}
\Cref{thm:dp-mhca-sensitivity} bounds each layer's sensitivity of \cref{eq:simple-dp-attention} by \(2\sqrt{Hm}\), independently of context size.
With the noise scale in \cref{eq:methods-noise-scale}, each layer satisfies \(\mu/\sqrt{L_{\mathrm{main}}}\)-GDP, and \cref{thm:dp-tnp-gdp} establishes the overall \(\mu\)-GDP guarantee by adaptive composition.
Predictions from the released summary preserve this guarantee by post-processing~\citep{Dwork2006DP}.

\paragraph{Prediction module.}
The prediction module combines each query with the private summaries to obtain label probabilities.
At each layer, query tokens use standard multi-head cross-attention (MHCA)~\citep{Vaswani2017Transformers} to attend to the corresponding saved summary state:
\begin{equation}
    \bz_{1:n_t}^{t,(\ell)}
    =\mathrm{MHCA}_{\mathrm{block}}^{(\ell)}(\bz_{1:n_t}^{t,(\ell-1)};\tilde{\bs}_{1:m}^{(\ell)}),
    \qquad \ell=1,\ldots,L.
\end{equation}
Here \(\mathrm{MHCA}_{\mathrm{block}}\) denotes the complete query-to-summary block, including pre-normalisation, residual connections and a token-wise feed-forward network.
Each query is processed independently using only its embedding and the saved summary states.
After the \(L\) layers, a token-wise multilayer perceptron maps the final query representation \(\bz_j^{t,(L)}\) to a vector of raw class scores (logits) \(\hat{\mathbf y}_j\in\mathbb R^{C_{\mathrm{max}}}\), as in the appendix.
For a dataset with \(C\leq C_{\mathrm{max}}\) public classes indexed \(1,\ldots,C\), the probability assigned to class \(y\) is
\begin{equation}
    q_{\bpsi}(y\mid\bx_j^t,\tilde S)
    =\frac{\exp(\hat y_{j,y})}{\sum_{c=1}^{C}\exp(\hat y_{j,c})},
    \label{eq:methods-prediction-map}
\end{equation}
where \(\hat y_{j,c}\) is the score for class \(c\) in \(\hat{\mathbf y}_j\), and \(\bpsi\) denotes the predictive-network parameters.
The context and query paths share their row embedding weights, so these weights occur in both \(\bphi\) and \(\bpsi\).
The denominator makes these probabilities sum to one; the predicted class is the one with the largest probability.
Because the query-side computation depends on the context only through \(\tilde S\), predictions for any number of public queries are post-processing of the private release.

\paragraph{The effect of re-ordering rows and columns.} PrivTab is invariant to permutations of context rows and equivariant to permutations of target rows: reordering the context leaves predictions unchanged, while reordering targets reorders their predicted labels.
This follows because the model uses no positional encodings~\citep{Vaswani2017Transformers}, and every operation before the DP-MHCA layers is row-wise.
PrivTab, however, is neither permutation invariant nor equivariant to column ordering.
This is because each row is mapped to a fixed-size vector by padding it to a certain number of columns and then feeding it into a fully connected neural network ($\mathrm{RowEmbedding}$).
It learns during pretraining, similar to TabPFN and TabICL, to become less sensitive to the exact ordering of the columns through the randomness of the data simulator.

\subsection{Pretraining}

Following prior work, the model was pretrained once on simulated tabular classification datasets~\citep{Muller2022PFNs}.
We drew tasks from the mixed structural-causal-model simulator introduced for TabICL \citep{Qu2025TabICL}.
Tasks contained between 2 and 120 active features and at most 10 classes.
Each simulated task \(\btau=(\calD_c,\calD_t)\) has the context and labelled target sets defined above.
For each task and sampled privacy level \(\mu\), \(\tilde S=\mathrm{PrivateSummary}_{\bphi}(\bz_{1:n_c}^c;\mu)\) was produced by the same noisy summary mechanism used at prediction; only \(X_t\), not target labels, was supplied to the query-side network. More details on pretraining and ablations can be found in \cref{app:dp-tnp-pretraining} and \cref{app:ablations}.

\paragraph{Noise-aware objective.} PrivTab was trained to predict target labels from noisy private summaries, using the same summary mechanism as at prediction.
Let \(q_{\bpsi}(y\mid\bx,\tilde S)\) denote the predicted label probabilities given target features \(\bx\) and noisy private summary \(\tilde S\).
The main training objective is the target cross-entropy
\begin{equation}
    \mathcal L_{\mathrm{priv}}
    =\mathbb E_{\btau}\!\left[-\sum_{(\bx,y)\in\mathcal D_t}
    \log q_{\bpsi}(y\mid\bx,\tilde S)\right].
    \label{eq:methods-pretraining-loss}
\end{equation}
The expectation averages over simulated tasks and the Gaussian noise used to form each private summary.
Conditional on \(\tilde S\), the query rows are processed independently, so the joint probability is the product \begin{equation}
    q_{\bpsi}(y_{1:n_t}^t\mid X_t,\tilde S)=\prod_{j=1}^{n_t}q_{\bpsi}(y_j^t\mid\bx_j^t,\tilde S);    
\end{equation}
its negative log is the sum in \cref{eq:methods-pretraining-loss}.
We evaluated the effect of noise-aware pretraining in \cref{tab:supplement-ablation-noise-training}.
In contrast, privacy noise in DP stochastic optimisation acts through parameter updates; it is not supplied as an explicit conditioning input when the fitted model produces label probabilities.
The probabilistic interpretation of \cref{eq:methods-pretraining-loss}, including its relation to the posterior predictive conditioned on a private summary, is derived in \cref{app:noise-aware-amortized-inference}; \cref{thm:nll-noise-aware-predictive-kl} characterises the objective for a fixed private mechanism.

\paragraph{Training curriculum and model combination.} We pretrained two variants for small and large contexts, sampling privacy levels logarithmically and gradually introducing stronger noise.
Both started with 128,000 steps using context and target sizes of 100--2,048 rows, without summary normalisation.
The small-context variant continued with the same context range and no normalisation for 384,000 steps in total.
The large-context variant continued with contexts of 1,024--8,192 rows and summary normalisation for 256,000 steps in total.
Both used summary normalisation at inference.
We used the small-context variant for contexts below 4,096 rows and the large-context variant for all others.
This threshold was selected on held-out context--target pairs from the TabICL simulator used for pretraining.
The same two pretrained variants were used across all real datasets and privacy levels; changing \(\mu\) changes the summary noise while their weights remain fixed.

\paragraph{Optimisation.} We first pretrained the full model, then froze the private summary module late in pretraining and updated only the prediction module for the remaining steps.
We optimised cross-entropy using AdamW \citep{Loshchilov2017AdamW}, batches of 128 simulated datasets and gradient clipping.

\subsection{Private baselines}
We compared with DP-LR and DP-MLP, the latter with one hidden layer, both trained using DP Adam~\citep{avent2020DPAdam, bu2020DPAdam}.
Because the same training data are used in repeated, adaptive updates, the privacy loss must be composed over all \(T\) steps.
Under the same substitute adjacency as PrivTab, weW numerically computed the privacy profile of the Poisson-subsampled Gaussian mechanism, composed its privacy-loss distribution over \(T\) updates, and converted the resulting trade-off function to the smallest valid non-asymptotic \(\mu\)-GDP guarantee~\citep{Gomez2025GaussianDP}.
The noise scale was chosen by numerically inverting this accounting for the target training budget; we did not use the asymptotic subsampled-Gaussian GDP approximation.
The accountant and update rule are specified in \cref{app:dp-stochastic-optimisation-accounting}.

\paragraph{Model selection stages.}
Baseline model selection combined broad calibration on separate datasets with private learning-rate selection for each evaluation dataset.
First, we used Bayesian optimisation on eight classification datasets from prior work~\citep{Hegselmann2023TabLLM} to select broad architectural and optimiser settings separately for each baseline and privacy level.
Second, because a single learning rate was not reliable across all new datasets, we selected it privately for every evaluation dataset, baseline and privacy level.
The context dataset was divided into training and validation subsets.
Candidate models were trained privately on the training rows, evaluated with a private validation loss and compared under an explicitly composed learning rate selection budget.
We set the privacy parameter for learning-rate selection to
\(\mu_{\mathrm{sel}}=0.5\mu\) and chose the privacy parameter
\(\mu_{\mathrm{train}}\) for the final training run such that
\(\mu_{\mathrm{sel}}^2+\mu_{\mathrm{train}}^2=\mu^2\)
under GDP composition.

This protocol prevented the comparison from treating data-dependent model selection as free: the privacy cost of all candidate training and validation evaluations was accounted for within the fixed learning-rate selection budget.
It also illustrates why DP stochastic optimisation is a pipeline rather than a single private optimiser call.
Search grids, fixed hyperparameters and selected configurations are listed in \cref{app:baseline-details}.

\subsection{TabArena evaluation}

We compared predictive performance across a broad benchmark using the same datasets, splits and privacy levels for all methods.
We retrieved the 33 TabArena classification datasets from the TabArena-v0.1 benchmark suite on OpenML \citep{EricksonTabArena2025,OpenML2025}, selecting tasks with at most 100,000 rows and 120 features.
This is the same number of compatible datasets from TabArena as TabPFNv2 and consistent with TabArena’s model-specific applicability constraints.
The row limit kept baseline evaluation computationally tractable, while the feature limit matched the PrivTab pretraining configuration used here.
Each dataset was evaluated at \(\mu\in\{0.05,0.1,0.2,0.4,0.8,1.6\}\) using ten independent 80/20 splits into \(\calD_c\) and a held-out \(\calD_t\); only \(X_t\) was supplied to the predictor.
All methods received the same numeric and categorical feature conversion.
To isolate the private prediction mechanism in this broad benchmark, features were standardised using context-set statistics under a common idealised preprocessing convention.
We evaluated the cost of private preprocessing separately, as described below.

\paragraph{Metrics and rankings.} We evaluated classification with AUC and label-probability quality with log loss, and summarised model comparisons using Elo rankings.
AUC was computed directly for binary tasks and macro-averaged one-versus-rest for multiclass tasks.
Log loss (cross-entropy loss) penalises low probabilities assigned to true labels, especially confident incorrect predictions.
Following TabArena~\citep{EricksonTabArena2025}, maximum-likelihood Elo rankings used AUC for binary tasks and log loss for multiclass tasks.
Ratings were recalibrated so that DP-LR scored 1,000 at each privacy level and in the aggregate comparison.
For dataset-level breakdowns, we averaged the AUC differences between PrivTab and each specified private baseline over the ten splits.

\paragraph{Uncertainty intervals.} We quantified performance uncertainty using 95\% confidence intervals (CIs); unless stated otherwise, figures and tables report percentile-bootstrap CIs based on 10,000 replicates.
The sampling unit was the dataset for benchmark aggregates and the context--target split for within-dataset evaluations.
Record-wise membership-AUC intervals were instead computed using the closed-form standard-error approximation described below.

\paragraph{Fitting time.} Runtime measures the GPU time needed to fit the context dataset.
We timed each method on one Nvidia V100 using identical software environments and hardware settings, including data transfer and computation.
PrivTab required one forward pass to form the private summary; DP Adam baseline timings included private learning-rate selection and training.
We reported the median per-context--target-split runtime across datasets, with a percentile-bootstrap 95\% CI over datasets.

\paragraph{Prediction time.} We separately measured post-fit prediction time on a laptop with a 13th-generation Intel Core i5 CPU across 33 datasets, six privacy levels and ten context--target splits.
Predictions were evaluated in batches of up to 1,024 target rows, using one warm-up call and three timed runs per configuration.
Timings exclude preprocessing, model loading, fitting and private-summary construction.
The complete dataset-level results appear in \cref{fig:inference-timing,tab:supplement-inference-timing}.

\subsection{Private preprocessing in practice}

We evaluated prediction together with private standardisation for Bank Marketing, COMPAS, ACS Income, Pima Diabetes and Maternal Health Risk datasets.
Bank Marketing and Pima Diabetes were accessed through OpenML and originated in the UCI Machine Learning Repository; Maternal Health Risk was fetched directly from UCI \citep{OpenML2025,Kelly2023UCI}.
COMPAS was accessed through OpenML, and ACS Income is a Folktables task derived from U.S. Census American Community Survey data \citep{OpenML2025,Ding2021RetiringAdult}.
There are no missing values in these datasets, so that no data imputation was needed.

\paragraph{Private standardisation.}
We standardised context and query features using private estimates of means and variances, with public clipping bounds limiting each row's influence.
For each of the \(d\) input features, indexed by \(k=1,\ldots,d\), we selected a finite clipping interval \([a_k,b_k]\): numeric-feature intervals came from public descriptions, and categorical variables used publicly known ranges.
For context and query rows alike $\bx_i = (x_{i1}, \ldots, x_{id})$, indexed here by \(i\), the clipped value was rescaled to
\begin{equation}
    c_{ik}=\operatorname{clip}_{[0,1]}\!\left(\frac{x_{ik}-a_k}{b_k-a_k}\right).
    \label{eq:methods-prep-clipping}
\end{equation}
Here, \(\mathbf c_i=(c_{i1},\ldots,c_{id})\) denotes the rescaled and clipped row. The bounds determine the scale and cap each feature value before any private statistic is computed.
From the \(n_c\) context rows only, we released noisy vectors of sums and sums of squares,
\begin{equation}
    \tilde{\mathbf m}_1=\sum_{i=1}^{n_c}\mathbf c_i+\boldsymbol\xi_1,
    \qquad
    \tilde{\mathbf m}_2=\sum_{i=1}^{n_c}\mathbf c_i^{\odot 2}+\boldsymbol\xi_2,
    \label{eq:methods-prep-moments}
\end{equation}
Here the superscript \(\odot 2\) means squaring each feature value, and the independent Gaussian noise vectors \(\boldsymbol\xi_1\) and \(\boldsymbol\xi_2\) have standard deviations for each feature of \(\sqrt d/\mu_1\) and \(\sqrt d/\mu_2\), respectively.
Replacement of one row changes either sum vector by at most \(\sqrt d\) in Euclidean norm.
We allocated \(\mu_1=\mu_{\mathrm{prep}}/2\) and \(\mu_2=\sqrt{\mu_{\mathrm{prep}}^2-\mu_1^2}\), so these two releases compose to \(\mu_{\mathrm{prep}}\)-GDP.
Writing \(\tilde m_{1k}\) and \(\tilde m_{2k}\) for the entries of these vectors at feature \(k\), the mean and variance estimates are \(\tilde m_k=\tilde m_{1k}/n_c\) and \(\tilde v_k=\tilde m_{2k}/n_c-\tilde m_k^2\).
When \(\tilde v_k\geq10^{-6}\) and the interval is non-constant, the input z-score was
\begin{equation}
    z_{ik}=
    \frac{c_{ik}-\tilde m_k}{\sqrt{\tilde v_k}},
    \label{eq:methods-prep-zscore}
\end{equation}
otherwise that feature value was set to zero.

\paragraph{Privacy guarantees with private standardisation.}
When clipping bounds are public, the full pipeline's privacy guarantee follows by composing private standardisation with PrivTab's private summary mechanism or the baseline's private training procedure.
The private standardisation in \cref{eq:methods-prep-zscore} was applied for both context and query rows using the released statistics from the context rows in \cref{eq:methods-prep-moments}.
Query transformation is post-processing; the context then enters PrivTab's private summary mechanism or the baseline's private training procedure.
Conditional on the released moments, standardisation remains row-wise, preserving one-row adjacency for the next private mechanism.
Clipping bounds must also be chosen without unaccounted access to sensitive data.
Our LLM-derived bounds used only public feature descriptions.
For comparison, oracle bounds used empirical dataset minima and maxima; this workflow is not fully private because bound selection accesses sensitive data without privacy accounting.

\paragraph{Privacy-budget allocation for the case study.} The case-study privacy budget covered both preprocessing and prediction or baseline fitting.
The total privacy level was \(\mu_{\mathrm{total}}=0.4\).
We allocated \(\mu_{\mathrm{prep}}=0.24\) to the two private moment releases above and \(\mu_{\mathrm{pred}}=0.32\) to PrivTab's private summary or to the complete baseline training and model-selection pipeline, giving
\[
\mu_{\mathrm{total}}
=
\sqrt{\mu_{\mathrm{prep}}^2+\mu_{\mathrm{pred}}^2}
=
0.4
\]
under GDP composition.
Results were based on ten independent 80/20 context--target splits.
The private moment mechanism, prompts, responses, exact clipping intervals and complete numerical results are given in \cref{app:end-to-end-details}.

\subsection{Membership-inference audit}

\paragraph{Attack setup.}
We measured membership exposure by testing whether model predictions reveal that a particular row was included in the context.
We used a likelihood-ratio framework~\citep{Carlini2022MembershipInference} on all 1,014 Maternal Health Risk rows.
We generated 4,000 balanced context draws, each containing exactly 507 rows, so that every row appeared as a member in exactly 2,000 contexts.
For every draw, the model's true-label confidence log-odds was recorded for all rows, using the same membership masks for every method.

\paragraph{Attack metrics.}
Record AUC holds the row fixed and measures the fitted Gaussian distinguishability across all contexts in which that row was or was not a member.
We computed a closed-form approximation to the standard error of each record-AUC estimate~\citep{Knolle2026DisparatePrivacyRisks} and defined its lower 95\% confidence endpoint as \(L_i=\max\{0.5,\widehat{\mathrm{AUC}}_i-1.96\,\mathrm{SE}_i\}\).
The Gaussian reference trade-off implies the attack-AUC upper bound $\Phi(\mu/\sqrt{2})$, where $\Phi$ is the standard normal cumulative distribution function (CDF).

We also reported balanced attack accuracy to summarise how reliably membership can be inferred.
We calculated balanced accuracy from fitted Gaussian member (IN) and non-member (OUT) score distributions with variances adjusted by a finite-population correction (FPC)~\citep{Jalko2026ReliableMIA}, rather than from observed classification counts.
We used the delta method to estimate its standard error at the fitted accuracy-maximising threshold, formed a nominal 95\% CI and plotted the largest lower endpoint across records for each displayed method.
The calculation and its assumptions are detailed in \cref{app:membership-inference-results}.
Complementary point-estimate summaries are reported in \cref{tab:membership-inference}.
These attacks measure empirical exposure but cannot establish DP; they complement the mechanism-level guarantee and audit.

\paragraph{Finite-population correction.} Sampling contexts without replacement from a fixed pool makes the empirical IN/OUT variance smaller than the variance under independent population draws.
We used a finite-population factor \(f=1-N/N_{+}\)~\citep{Jalko2026ReliableMIA} and replaced each empirical fitted standard deviation by \(\hat\sigma_{\mathrm{FPC}}=\hat\sigma/\sqrt{f}\), leaving the fitted means unchanged.
Here \(N=507\), \(N_{+}=1{,}014\) and \(f=0.5\), so the standard deviations were inflated by \(\sqrt{2}\).
The corrected standard deviations were used both for the analytical record AUCs and for the held-out likelihood-ratio attack.

\subsection{Verification and auditing}

We assessed the private summary mechanism using formal verification, implementation checks and empirical auditing.

\paragraph{Formal verification.}
The Lean 4 formalisation verifies the sensitivity bound and adaptive GDP composition underlying the noise calibration of the private summary mechanism.
It therefore verifies these central mathematical steps independently of the manuscript proof, while the connection between the formal objects and the running system remains subject to implementation review. The lean implementation with the complete description is available in the source code at \url{https://github.com/TrustworthyMLHelsinki/PrivTab/tree/main/privtab-lean}.

\paragraph{Implementation checks.}
Automated tests checked that the implementation applied every transformation before DP-MHCA independently to each context row, without introducing unaccounted dependencies between rows. The implementation checks are available in the source code at \url{https://github.com/TrustworthyMLHelsinki/PrivTab/blob/main/experiments/audit_rowwise.py}. The test suite also runs this automatically.

\paragraph{Empirical auditing.}
We empirically tested the sensitivity bound and the distinguishability of private outputs.
We used gradient-based optimisation to search for adjacent context datasets that maximised the distance between their pre-noise summaries.
We then repeatedly sampled privatised outputs and applied the optimal likelihood-ratio test for each fitted Gaussian pair, using its two noise-free means and calibrated variance.
Agreement with the expected trade-off curve is evidence against several implementation errors, but it is not a substitute for the proof.
Further details are provided in \cref{app:empirical-audits}. The empirical auditing checks are available in the source code at \url{https://github.com/TrustworthyMLHelsinki/PrivTab/blob/main/experiments/dp_audit.py}. The test suite also runs this automatically.

\subsection{Code availability}
The source code used to implement, train, and evaluate PrivTab, together with the pretrained model weights, is available at \url{https://github.com/TrustworthyMLHelsinki/PrivTab}.
The repository contains complete documentation of the code, including instructions for reproducing the experiments.

\clearpage

\bibliographystyle{unsrtnat}
\begingroup\raggedright
\bibliography{main.bib}
\endgroup

\section*{Acknowledgements}
This work was supported by the Research Council of Finland (Finnish Center for Artificial Intelligence, FCAI, Grant 356499 and Grant 359111), the Strategic Research Council at the Research Council of Finland (Grant 358247) as well as the European Union (Project 101070617). 
Richard E. Turner is supported by the EPSRC Probabilistic AI Hub (EP/Y028783/1).
Cristiana Diaconu is supported by the Cambridge Trust Scholarship.
Views and opinions expressed are however those of the author(s) only and do not necessarily reflect those of the European Union or the European Commission. 
Neither the European Union nor the granting authority can be held responsible for them. This work has been performed using resources provided by the CSC– IT Center for Science, Finland (Projects 2003275 \& 462001244). 
We also acknowledge CSC for access to the LUMI supercomputer, owned by the EuroHPC Joint Undertaking, hosted by CSC (Finland) and the LUMI consortium. 
The authors acknowledge the research environment provided by ELLIS Institute Finland.
We thank Aki Rehn for his assistance in debugging the GPU-based pretraining of PrivTab.
We also thank Joonas Jälkö, Mikko A. Heikkilä and Marlon Tobaben for their discussions and comments throughout the project.

\clearpage
\section*{Extended figures}
\setcounter{figure}{0}
\renewcommand{\thefigure}{E\arabic{figure}}
\renewcommand{\theHfigure}{extended.\arabic{figure}}

\begin{figure}[!ht]
\centering
    \includegraphics[width=\textwidth]{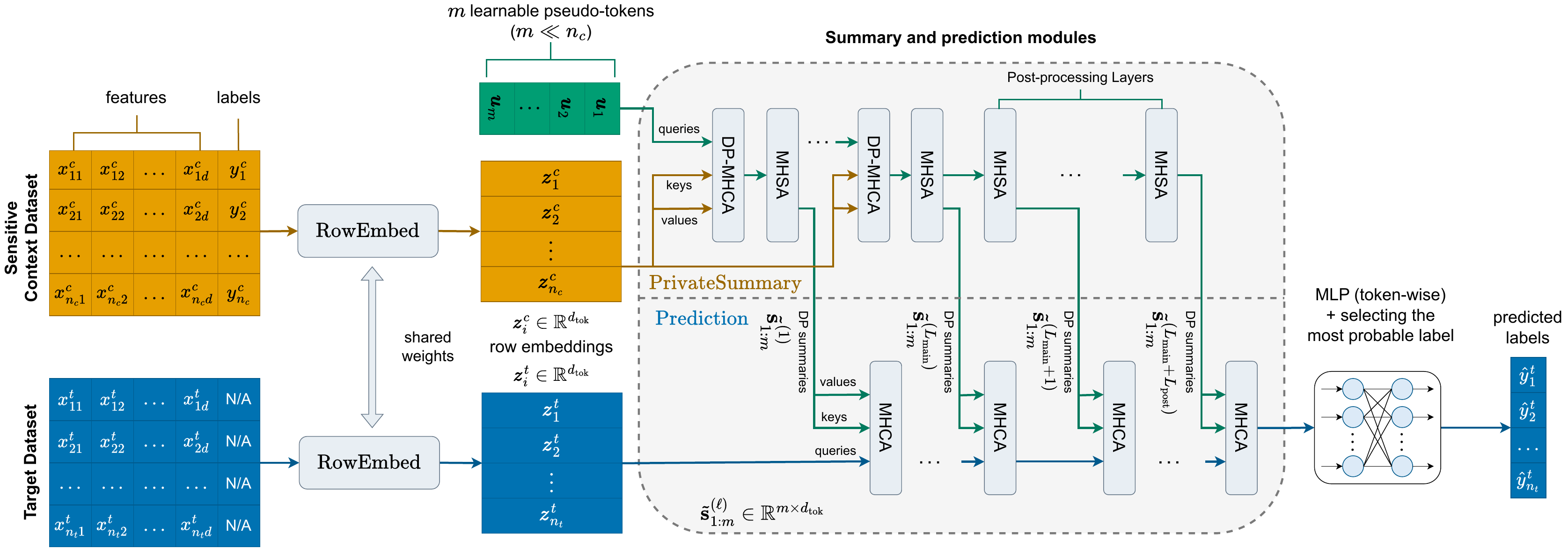}
\caption{Detailed PrivTab architecture.
Context and target rows are first mapped to row embeddings by \(\mathrm{RowEmbed}\).
In \(\mathrm{PrivateSummary}\), a small set of data-independent learnable pseudo-tokens cross-attends to the sensitive context rows through DP-MHCA, producing privatised summaries.
The public \(\mathrm{Prediction}\) module reuses these summaries through cross-attention and token-wise multilayer perceptrons.
DP-MHCA is the only operation that transfers context information into the released summaries; target-side computation is post-processing.}
\label{fig:architecture}
\end{figure}
\FloatBarrier
\clearpage

\section*{Additional information}
Correspondence: Antti Honkela (\nolinkurl{antti.honkela@helsinki.fi}).

\clearpage
\appendix
\section*{Supplement}
\setcounter{section}{0}
\setcounter{figure}{0}
\setcounter{table}{0}
\setcounter{equation}{0}
\renewcommand{\thesection}{S\arabic{section}}
\renewcommand{\thefigure}{S\arabic{figure}}
\renewcommand{\thetable}{S\arabic{table}}
\renewcommand{\theequation}{S\arabic{equation}}
\renewcommand{\theHsection}{supplement.\arabic{section}}
\renewcommand{\theHfigure}{supplement.\arabic{figure}}
\renewcommand{\theHtable}{supplement.\arabic{table}}
\renewcommand{\theHequation}{supplement.\arabic{equation}}
\setcounter{secnumdepth}{3}

\section{Detailed Background}\label{app:background}

\subsection{Notation}
Let \(d_x \in \mathbb{N}\) denote the maximum input dimension, let \(\calX := \bigsqcup_{d=1}^{d_x}\mathbb{R}^{d}\) denote the raw input space, and let $\calY \subset \mathbb{N}$ denote the label space.
An input-label pair is a tuple $(\bx, y) \in \calX \times \calY$.
Let $\mathbb{D}$ denote the space of all finite multisets of input-label pairs in $\calX \times \calY$, that is,
\begin{equation}
    \mathbb{D} = \bigcup_{n=0}^{\infty} \left(\left(\calX \times \calY\right)^n \middle/ \sim \right),
\end{equation}
where two elements of \(\left(\calX \times \calY\right)^n\) are equivalent under \(\sim\) if one is a permutation of the other.
A dataset $\calD \in \mathbb{D}$ is a finite collection of input-label pairs, i.e.
$\mathcal{D} = \{ (\bx_i, y_i) \}_{i = 1}^{N}$ for some $N \geq 0$.
For $n \geq 1$, we will write a sequence $z_1, \ldots, z_n$ briefly as $z_{1:n}$.
Whenever indexed notation is used for a multiset, it denotes an arbitrary ordered representative; all dataset-level constructions considered below are invariant to the choice of representative.

\subsection{Differential Privacy}\label{app:differential-privacy}
DP formalises the idea of protecting individuals' sensitive data when releasing statistics or training machine learning models.
It is achieved by limiting how much the algorithm's output distribution can change when one individual's data is added, removed, or modified.

\begin{definition}[Approximate DP; \citealt{Dwork2006DP,Dwork2016DP}]
\label{def:DP}
    A randomised algorithm or mechanism $\calA$ is $(\varepsilon, \delta, \simeq)$-DP under a binary relation $\simeq$ over $\mathbb{D}$, if for any two datasets $\calD, \calD' \in \mathbb{D}$ such that $\calD \simeq \calD'$, and for any subset $\calS$ of possible outcomes of $\mathcal{A}$, denoted $\calS \subseteq \mathrm{Range}(\calA)$,
    \begin{align}\label{eq:epsilon-delta-dp}
        \mathrm{Pr}\left[\calA(\calD) \in \calS\right] \leq \exp{(\varepsilon)} \times \mathrm{Pr}\left[\calA(\calD') \in \calS\right] + \delta.
    \end{align}
    Any two datasets $\calD \simeq \calD'$ are said to be adjacent datasets and $\simeq$ is called the adjacency relation.
\end{definition}

Applying any (randomised) function to the output of a DP mechanism is called \emph{post-processing}, and it does not weaken the privacy guarantees of the mechanism.
In this work, we use the substitute adjacency relation, that is, $\calD \simeq \calD'$ if for some $i^{*}$, $\calD' = \left(\calD \setminus \{ (\bx_{i^{*}}, y_{i^{*}}) \} \right) \cup \{ (\bx_{i^{*}}', y_{i^{*}}') \}$.

\subsection{Gaussian Differential Privacy}
GDP provides a more interpretable characterisation of privacy in terms of hypothesis testing.
For a random output $O$ of the mechanism $\calA$, consider the binary hypothesis test: \( H_0: O \sim \calA(\calD) \) vs.  \( H_1: O \sim \calA(\calD') \).
A test is a function
\begin{equation}
    \phi: \mathrm{Range}(\calA) \to [0,1],
\end{equation}
where for an observation $\bo$, \( \phi(\bo) = 1 \) means rejecting the null hypothesis, and \( \phi(\bo) = 0 \) means not rejecting it.
The \emph{type I error} is denoted \(\alpha_{\calA}(\phi; \calD) = \mathbb{E}_{O \sim \calA(\calD)}\left[\phi(O)\right]\) and the \emph{type II error} is denoted \( \beta_{\calA}(\phi; \calD') = 1 - \mathbb{E}_{O \sim \calA(\calD')}\left[\phi(O)\right] \).
The \emph{trade-off function} of $\calA$ is defined as:
\begin{equation}\label{eq:tradeoff-fun-def}
    T_{\calA}(\alpha; \calD, \calD') = \inf_{\phi \,:\, \alpha_{\calA}(\phi; \calD) \leq \alpha} \beta_{\calA}(\phi; \calD').
\end{equation}
\begin{definition}[$\mu$-GDP; \citealt{Dong2022GDP}]
    A randomised mechanism \( \calA \) satisfies $\mu$-GDP if for any two adjacent datasets \( \calD \simeq \calD' \), and for all \( \alpha \in [0,1] \),
    \begin{equation}\label{eq:mu-gdp-def-eq-1}
        T_{\calA}(\alpha; \calD, \calD') \ge \Phi\left(\Phi^{-1}(1 - \alpha) - \mu\right) =: T_\mu(\alpha),
    \end{equation}
    where \( \Phi \) denotes the standard normal CDF.
\end{definition}
The function $T_\mu(\alpha)$ is the trade-off function of the hypothesis test $H_0: O \sim \mathcal{N}(0, 1)$ vs. $H_1: O \sim \mathcal{N}(\mu, 1)$.
In this work, we will use $\mu$-GDP to report privacy guarantees.
A $\mu$-GDP mechanism satisfies $(\varepsilon, \delta(\varepsilon))$-DP for all $\varepsilon > 0$~\citep{Dong2022GDP}, where
\begin{equation}\label{eq:mu-gdp-to-eps-delta}
    \delta(\varepsilon) = \Phi\left(-\frac{\varepsilon}{\mu} + \frac{\mu}{2}\right)
- \exp(\varepsilon) \Phi\left(-\frac{\varepsilon}{\mu} - \frac{\mu}{2}\right).
\end{equation}
Applying a DP mechanism multiple times, or applying multiple DP mechanisms to the same dataset, is called composition.
If later mechanism applications use outputs of earlier ones in addition to the dataset, the composition is said to be \emph{adaptive}.
GDP admits a tight (adaptive) composition rule determined by the privacy guarantees of individual mechanisms.
\begin{theorem}[\citealt{Dong2022GDP}]\label{thm:mu-gdp-composition}
    The (adaptive) composition of $\mu_i$-GDP mechanisms ($i = 1, \ldots, T$) is $\mu$-GDP, where
    \begin{equation}
        \mu = \sqrt{\mu_1^2 + \ldots + \mu_T^2}.
    \end{equation}
\end{theorem}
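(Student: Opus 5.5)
We would prove the composition theorem by reducing everything to Gaussian shift experiments, using the hypothesis-testing view of GDP from the definition above. The key tool is a Blackwell-type characterisation of trade-off functions. For distributions \(P,Q\) and \(P',Q'\), \(T(P,Q)\ge T(P',Q')\) pointwise if and only if there is a Markov kernel \(K\) with \(KP'=P\) and \(KQ'=Q\). In words, \((P,Q)\) is a garbling of \((P',Q')\).

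Applied to a single \(\mu_i\)-GDP mechanism, this gives the following for any fixed adjacent pair \(\calD\simeq\calD'\). The pair \((\calA_i(\calD),\calA_i(\calD'))\) is obtained from \((\mathcal N(0,1),\mathcal N(\mu_i,1))\) by some kernel \(K_i\). In the adaptive setting, \(\calA_i\) also receives the earlier outputs \(o_{1:i-1}\). For each such history we therefore get a kernel \(K_i(\cdot\mid o_{1:i-1})\) mapping \(\mathcal N(0,1)\) to \(\calA_i(\calD;o_{1:i-1})\) and \(\mathcal N(\mu_i,1)\) to \(\calA_i(\calD';o_{1:i-1})\).

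The main step is to chain these kernels. Draw \(W=(W_1,\dots,W_T)\) either from \(\mathcal N(\mathbf 0,\mathbf I_T)\) or from \(\mathcal N(\boldsymbol\mu,\mathbf I_T)\), where \(\boldsymbol\mu=(\mu_1,\dots,\mu_T)\). Then sequentially sample \(o_i\sim K_i(\cdot\mid W_i,o_{1:i-1})\). Because the coordinates of \(W\) are independent, induction on \(i\) shows two things. Under the first hypothesis, \((o_1,\dots,o_T)\) has exactly the law of the adaptive composition run on \(\calD\). Under the second, it has the law of the composition run on \(\calD'\). In each step the conditional law of \(o_i\) given \(o_{1:i-1}\) is the correct one, since \(W_i\) is independent of \(W_{1:i-1}\) and \(o_{1:i-1}\). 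Post-processing (data processing for trade-off functions) then gives
\begin{equation*}
T_{\mathrm{comp}}(\cdot;\calD,\calD')\ \ge\ T\bigl(\mathcal N(\mathbf 0,\mathbf I_T),\mathcal N(\boldsymbol\mu,\mathbf I_T)\bigr).
\end{equation*}

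It remains to identify the right-hand side. Apply an orthogonal rotation sending \(\boldsymbol\mu/\lVert\boldsymbol\mu\rVert_2\) to the first basis vector. The pair becomes \(\mathcal N(0,1)\) versus \(\mathcal N(\lVert\boldsymbol\mu\rVert_2,1)\) in the first coordinate, with identical independent \(\mathcal N(0,1)\) noise in the remaining coordinates. Those coordinates are uninformative; formally, by Neyman–Pearson the likelihood ratio depends only on the first coordinate. Since bijections preserve trade-off functions, the right-hand side equals \(T_{\lVert\boldsymbol\mu\rVert_2}\) with \(\lVert\boldsymbol\mu\rVert_2=\sqrt{\mu_1^2+\dots+\mu_T^2}\). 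This holds for every adjacent pair, which proves the claim.

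We expect the main obstacle to be the Blackwell characterisation in a general measurable setting, together with choosing the kernels \(K_i(\cdot\mid w,o_{1:i-1})\) measurably in the history. Our first attempt would avoid the full Blackwell theorem by constructing the kernel explicitly. Both \(\mathcal N(0,1)\) and \(\mathcal N(\mu_i,1)\) are fully described by their monotone likelihood ratio in \(w\). We would therefore map \(w\) to a randomised threshold test on the likelihood ratio \(dQ/dP\) of the target pair. That ratio is measurable in \((o,o_{1:i-1})\), so quantile coupling gives a jointly measurable kernel. If this becomes too delicate, the fallback is the route of \citet{Dong2022GDP}: prove a two-fold adaptive composition bound \(T\ge f_1\otimes f_2\) by conditioning on \(o_1\) and applying Neyman–Pearson to the inner test, then compute \(T_{\mu_1}\otimes T_{\mu_2}=T_{\sqrt{\mu_1^2+\mu_2^2}}\) by the same rotation argument, and finish by induction on \(T\).
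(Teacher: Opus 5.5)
The paper gives no proof of this statement; it is quoted from \citet{Dong2022GDP}. The standard argument there is, in essence, your fallback: a two-fold adaptive lemma obtained by conditioning on the first output and replacing each inner test by a Neyman--Pearson test for the reference pair, then induction on $T$, then rotation invariance to get $T_{\mu_1}\otimes T_{\mu_2}=T_{\sqrt{\mu_1^2+\mu_2^2}}$.

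Your primary route is genuinely different, and its skeleton (chaining, data processing, rotation) is correct. Realising the whole adaptive transcript as one garbling of $\bigl(\mathcal N(\mathbf 0,\mathbf I_T),\mathcal N(\boldsymbol\mu,\mathbf I_T)\bigr)$ bypasses the tensor-product calculus and the induction, and shows at once why only $\lVert\boldsymbol\mu\rVert_2$ matters. The price is the strong form of Blackwell's theorem: an actual Markov kernel into each output space, chosen jointly measurably in $(w,o_{1:i-1})$. Obtaining it requires some regularity, such as standard Borel outputs, so that you can disintegrate along the likelihood ratio. The conditioning route needs neither. Its only auxiliary tests are the explicit thresholds $\mathbf 1\{w>\Phi^{-1}(1-\alpha)\}$, evaluated at $\alpha=\alpha(o_1)$, the conditional type I error of the given test; this is measurable because the mechanism is a kernel.

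Your proposed way around the obstacle does not work as stated. A quantile coupling of $w$ with the likelihood-ratio rank of $o$ does reproduce $\calA_i(\calD;o_{1:i-1})$ under $\mathcal N(0,1)$. Under $\mathcal N(\mu_i,1)$, however, the rank it produces is distributed according to $T_{\mu_i}$, not according to the trade-off function of the actual pair. So in general it misses $\calA_i(\calD';o_{1:i-1})$; for an atomless likelihood ratio it hits it only if that trade-off function equals $T_{\mu_i}$.

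When a layer is strictly more private than $\mu_i$-GDP, the kernel must inject fresh randomness. Take a Gaussian-mechanism layer with pre-noise means $\mathbf a,\mathbf b$ and $\rho=\lVert\mathbf b-\mathbf a\rVert_2/(\sigma\mu_i)<1$. This is the typical case in DP-MHCA, where $\sigma\mu_i=2\sqrt{Hm}$ is only an upper bound on $\lVert\mathbf b-\mathbf a\rVert_2$. The correct kernel there is $w\mapsto\mathbf a+\sigma\bigl[(\rho w+\sqrt{1-\rho^2}\,Z)\mathbf e+\mathbf Z_\perp\bigr]$, where $\mathbf e$ is the unit vector along $\mathbf b-\mathbf a$, $Z$ is an independent standard normal, and $\mathbf Z_\perp$ is standard Gaussian on $\mathbf e^\perp$. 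This is not a monotone transport. For general mechanisms, building such a kernel is exactly the Strassen/convex-order content of Blackwell's theorem, so either cite a measurable version or use the fallback.

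If you use the fallback, note one missing step. Conditioning on $o_1$ only yields $T\ge T\bigl(P_1\times\mathcal N(0,1),\,Q_1\times\mathcal N(\mu_2,1)\bigr)$, where $(P_1,Q_1)$ is the first layer's pair. Replacing $(P_1,Q_1)$ by $(\mathcal N(0,1),\mathcal N(\mu_1,1))$ takes one more application of the same conditioning argument, now on the independent Gaussian coordinate. That keeps the entire proof free of Blackwell's theorem.
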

Here, tight means that this composition rule is exact for $\mu$-GDP: in general one cannot replace $\sqrt{\mu_1^2 + \cdots + \mu_T^2}$ by a uniformly smaller value and still retain a valid guarantee for all adaptive compositions of $\mu_i$-GDP mechanisms.
The Gaussian mechanism is central to this work.
Given a function $f: \mathbb{D} \rightarrow \mathbb{R}^{d}$, define
\begin{equation}\label{eq:gaussian-mechanism}
    \calA_{\mathrm{gauss}}(\calD) := f(\calD) + \mathcal{N}(0, \sigma^2 \mathbf{I}_d),
\end{equation}
where $\mathbf{I}_d$ is the $d\times d$ identity matrix.
The privacy guarantee of $\calA_{\mathrm{gauss}}$ is determined by $\sigma$ and the $\ell_2$-sensitivity of $f$,
\begin{equation}
    \Delta(f) = \sup_{\calD \simeq \calD'} \lVert f(\calD) - f(\calD') \rVert_2,
\end{equation}
according to the following theorem.
\begin{theorem}[\citealt{Dong2022GDP}]
    The Gaussian mechanism $\calA_{\mathrm{gauss}}$ is $\mu$-GDP with $\mu = \frac{\Delta(f)}{\sigma}$.
\end{theorem}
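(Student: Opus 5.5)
The plan is to reduce the comparison of $\calA_{\mathrm{gauss}}(\calD)$ and $\calA_{\mathrm{gauss}}(\calD')$ to the canonical one-dimensional test $\mathcal N(0,1)$ versus $\mathcal N(\delta,1)$, and then compare $\delta$ with $\mu$. Fix adjacent $\calD\simeq\calD'$ and write $\mathbf a=f(\calD)$ and $\mathbf b=f(\calD')$. The two output distributions are $P=\mathcal N(\mathbf a,\sigma^2\mathbf I_d)$ and $Q=\mathcal N(\mathbf b,\sigma^2\mathbf I_d)$. Set $\delta=\lVert\mathbf b-\mathbf a\rVert_2/\sigma$. By definition of $\Delta(f)$ we have $\delta\le\Delta(f)/\sigma=\mu$. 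If $\delta=0$ then $P=Q$, so $T_{\calA}(\alpha;\calD,\calD')=1-\alpha\ge T_\mu(\alpha)$, and there is nothing more to show. Assume from now on that $\delta>0$.

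\emph{Step 1 (invariance under a bijection).} Choose an orthogonal matrix $R$ with $R(\mathbf b-\mathbf a)=\lVert\mathbf b-\mathbf a\rVert_2\,\mathbf e_1$, and let $g(\bo)=R(\bo-\mathbf a)/\sigma$. The map $g$ is a measurable bijection with measurable inverse. Any test $\phi$ on the original space gives the test $\phi\circ g^{-1}$ on the transformed space, with the same type I and type II errors, and conversely. Hence the trade-off function of $(P,Q)$ equals that of $(g_\#P,g_\#Q)$. Because $R$ is orthogonal, $g_\#P=\mathcal N(\mathbf 0,\mathbf I_d)$ and $g_\#Q=\mathcal N(\delta\mathbf e_1,\mathbf I_d)$.

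\emph{Step 2 (Neyman--Pearson).} This is the step needing the most care. The density ratio of the two transformed distributions is
$$\frac{dg_\#Q}{dg_\#P}(\bo)=\exp\bigl(\delta o_1-\tfrac{\delta^2}{2}\bigr).$$
It depends only on the first coordinate $o_1$ and is strictly increasing in it. By the Neyman--Pearson lemma, for each level $\alpha$ the infimum of the type II error is attained by a threshold test that rejects when $o_1>t$. Randomisation on the boundary is unnecessary, since $\{o_1=t\}$ has probability zero. Under $H_0$ the coordinate $o_1$ is $\mathcal N(0,1)$, so choosing $t=\Phi^{-1}(1-\alpha)$ gives type I error exactly $\alpha$. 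Under $H_1$ the type II error is $\Pr[\mathcal N(\delta,1)\le t]=\Phi(t-\delta)$. Therefore
$$T_{\calA}(\alpha;\calD,\calD')=\Phi\bigl(\Phi^{-1}(1-\alpha)-\delta\bigr)=T_\delta(\alpha).$$
The endpoints $\alpha\in\{0,1\}$ follow by the usual conventions $\Phi^{-1}(0)=-\infty$ and $\Phi^{-1}(1)=+\infty$.

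\emph{Step 3 (monotonicity in the shift).} The function $\Phi$ is increasing and $\delta\le\mu$, so $T_\delta(\alpha)\ge T_\mu(\alpha)$ for every $\alpha\in[0,1]$. Since the pair $\calD\simeq\calD'$ was arbitrary, this gives \eqref{eq:mu-gdp-def-eq-1} with $\mu=\Delta(f)/\sigma$, as claimed.
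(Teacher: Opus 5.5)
Your proof is correct. The paper does not prove this statement; it imports it by citation \citep{Dong2022GDP}. Your argument is the standard one behind that result: reduce by a rotation to the one-dimensional shift $\mathcal N(0,1)$ versus $\mathcal N(\delta,1)$ with $\delta=\lVert f(\calD)-f(\calD')\rVert_2/\sigma\le\mu$, identify the trade-off function exactly as $T_\delta$ via Neyman--Pearson, and conclude by monotonicity of $T_\mu$ in $\mu$. Your additional care with the $\delta=0$ case, attainment of the infimum without randomisation, and the endpoints $\alpha\in\{0,1\}$ is sound.
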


\subsection{Bayesian Inference}

Assume that a dataset  \(\mathcal{D} = \{(\bx_i^{\mathrm{obs}}, y_i^{\mathrm{obs}})\}_{i = 1}^{N} \in \mathbb{D}\) consists of observed input-output pairs.
We treat the inputs \(\bx_{1:N}^{\mathrm{obs}}\) as given, and assume that the labels are generated from an underlying conditional data-generating distribution $p(y \mid \bx, \btheta)$, depending on some parameter \(\btheta \in \Theta\).
The likelihood of the observed labels, conditioned on the observed inputs, is
\begin{equation}
    p(y_{1:N}^{\mathrm{obs}} \mid \bx_{1:N}^{\mathrm{obs}}, \btheta)
    =
    \prod_{i = 1}^{N}
    p(y_i^{\mathrm{obs}} \mid \bx_i^{\mathrm{obs}}, \btheta).
\end{equation}
Given a prior \(p(\btheta)\), one can compute the posterior using Bayes' theorem:
\begin{equation}\label{eq:posterior}
    p(\btheta \mid \calD)
    =
    \frac{
        p(y_{1:N}^{\mathrm{obs}} \mid \bx_{1:N}^{\mathrm{obs}}, \btheta)
        p(\btheta)
    }{
        \int_\Theta
        p(y_{1:N}^{\mathrm{obs}} \mid \bx_{1:N}^{\mathrm{obs}}, \btheta')
        p(\btheta')
        \,\mathrm{d}\btheta'
    }.
\end{equation}
The posterior predictive can then be obtained by marginalising over the posterior:
\begin{equation}\label{eq:posterior-predictive}
    p(y \mid \bx, \calD)
    =
    \int_{\Theta}
    p(y \mid \bx, \btheta)
    p(\btheta \mid \calD)
    \,\mathrm{d}\btheta.
\end{equation}
The posterior predictive in \cref{eq:posterior-predictive} can be extended to multiple query points.
Assuming that the labels are conditionally independent given \(\bx_{1:n}\) and \(\btheta\), for \(n \geq 1\) we have
\begin{equation}
    \begin{split}
    p(y_{1:n} \mid \bx_{1:n}, \calD)
    &=
    \int_{\Theta}
    p(y_{1:n} \mid \bx_{1:n}, \btheta)
    p(\btheta \mid \calD)
    \,\mathrm{d}\btheta \\
    &=
    \int_{\Theta}
    \prod_{i = 1}^{n}
    p(y_i \mid \bx_i, \btheta)
    p(\btheta \mid \calD)
    \,\mathrm{d}\btheta.
    \end{split}
\end{equation}

\subsection{Stochastic Processes}
A stochastic process (SP) is a collection of random variables
\begin{equation}
    F = \{Y_{\bx}\}_{\bx \in \calX},
\end{equation}
indexed by an input space \(\calX\), where each \(Y_{\bx}\) takes values in an output space \(\calY\).
We also write \(F(\bx)\) for the random variable \(Y_{\bx}\).
Thus, for any \(\bx \in \calX\), the stochastic process defines a distribution over outputs in \(\calY\).
If this distribution admits a density, we denote by $p(y \mid \bx)$ the density of \(F(\bx)\) evaluated at \(y \in \calY\).

We denote by \(\mathcal{SP}(\calX,\calY)\) the set of stochastic processes indexed by \(\calX\) and taking values in \(\calY\).
A stochastic process \(F \in \mathcal{SP}(\calX,\calY)\) can be characterised by its finite-dimensional marginal distributions $p(y_{1:n} \mid \bx_{1:n})$, for all \(n \geq 1\), provided that these marginals satisfy the conditions of the Kolmogorov consistency theorem \citep{Oksendal2003SDE}.
That is, the finite-dimensional marginals must satisfy permutation invariance and marginalisation consistency.

\subsection{Amortised Bayesian Prediction}\label{app:amortised-bayesian-prediction}

Let \(F = \{Y_{\bx}\}_{\bx \in \calX} \in \mathcal{SP}(\calX,\calY)\) be a stochastic process.
Assume that its finite-dimensional marginal distributions are parameterised by some parameter \(\btheta \in \Theta\), so that for any \(n \geq 1\),
\begin{equation}
    Y_{\bx_1}, \ldots, Y_{\bx_n}
    \sim
    p(y_{1:n} \mid \bx_{1:n}, \btheta).
\end{equation}
A dataset $\mathcal{D} =\{(\bx_i^{\mathrm{obs}}, y_i^{\mathrm{obs}})\}_{i=1}^{N}$ can be viewed as observations of the stochastic process at input locations \(\bx_1^{\mathrm{obs}}, \ldots, \bx_N^{\mathrm{obs}}\).
Conditioned on these input locations and on \(\btheta\), the corresponding outputs are sampled as
\begin{equation}
    y_{1:N}^{\mathrm{obs}}
    \sim
    p(y_{1:N} \mid \bx_{1:N}^{\mathrm{obs}}, \btheta),
\end{equation}
where this distribution is the finite-dimensional marginal distribution of \(Y_{\bx_1^{\mathrm{obs}}}, \ldots, Y_{\bx_N^{\mathrm{obs}}}\).

Given a prior \(p(\btheta)\), Bayes' theorem gives the posterior \(p(\btheta \mid \mathcal{D})\).
The posterior predictive distribution for any \(n \geq 1\) is then
\begin{equation}
    p(y_{1:n} \mid \bx_{1:n}, \mathcal{D})
    =
    \int_{\Theta}
    p(y_{1:n} \mid \bx_{1:n}, \btheta)
    p(\btheta \mid \mathcal{D})
    \,\mathrm{d}\btheta.
\end{equation}
The collection of posterior predictive distributions $\left\{p(y_{1:n} \mid \bx_{1:n}, \mathcal{D})\right\}_{n \geq 1}$ defines a posterior stochastic process, denoted by \(F_{\mathcal{D}} \in \mathcal{SP}(\calX,\calY)\).

More generally, one can view prediction under uncertainty as learning an amortised approximation to the Bayesian map that sends an observed dataset to its posterior predictive law.
Given a dataset \(\mathcal{D}\), the ideal Bayesian map
\begin{equation}
    \pi_F : \mathbb{D} \to \mathcal{SP}(\calX,\calY)
\end{equation}
returns the posterior stochastic process \(F_{\mathcal{D}}\), whose finite-dimensional marginals are the posterior predictive distributions.
Amortised Bayesian prediction aims to learn a parametric map that directly approximates this posterior predictive object from observed context data, without performing Bayesian inference from scratch for each new task.
Concretely, given a context dataset \(\mathcal{D}_c\), one learns a predictor of the form
\begin{equation}
    q_{\bw}(y_{1:n} \mid \bx_{1:n}, \mathcal{D}_c),
    \quad n \geq 1,
\end{equation}
which approximates the corresponding finite-dimensional posterior predictive distributions.
This viewpoint is broad and includes neural processes and their attention-based variants as special cases, as well as other transformer-based amortised predictors.

We denote by $\btau = (\mathcal{D}_c,\mathcal{D}_t)$ a task, consisting of a context set \(\mathcal{D}_c\) and a target set
\begin{equation}
    \mathcal{D}_t
    =
    \left\{
    \left(\bx_j^{t}, y_j^{t}\right)
    \right\}_{j=1}^{n_t}.
\end{equation}
For a task \(\btau = (\mathcal{D}_c,\mathcal{D}_t)\) with \(\mathcal{D}_t = (\bx_{1:n_t}^{t}, y_{1:n_t}^{t})\), we write \(\bx_{1:n_t}^{t}(\btau) := \bx_{1:n_t}^{t}\) and \(y_{1:n_t}^{t}(\btau) := y_{1:n_t}^{t}\).
Let \(\mathbb{T} \subseteq \mathbb{D} \times \mathbb{D}\) denote the space of such tasks.
Tasks are sampled from a task distribution \(p(\btau)\).
This task distribution can be understood as the marginal distribution induced by first sampling a latent parameter \(\btheta\) and then sampling context and target observations from the corresponding stochastic process:
\begin{equation}
    p(\btau)
    =
    \int_{\Theta}
    p(\btau \mid \btheta)p(\btheta)
    \,\mathrm{d}\btheta.
\end{equation}
More explicitly, writing \(\btau = (\mathcal{D}_c,\mathcal{D}_t)\) with \(\mathcal{D}_c = (\bx_{1:n_c}^{c}, y_{1:n_c}^{c})\) and \(\mathcal{D}_t = (\bx_{1:n_t}^{t}, y_{1:n_t}^{t})\), we have
\begin{equation}
    p(\btau \mid \btheta)
    =
    p(y_{1:n_c}^{c}, y_{1:n_t}^{t} \mid \bx_{1:n_c}^{c}, \bx_{1:n_t}^{t}, \btheta)\,
    p(\bx_{1:n_c}^{c}, \bx_{1:n_t}^{t} \mid \btheta).
\end{equation}
Under the usual conditional independence assumption given \(\btheta\), this becomes
\begin{equation}
    p(\btau \mid \btheta)
    =
    \left[
    \prod_{i=1}^{n_c} p(y_i^{c} \mid \bx_i^{c}, \btheta)
    \right]
    \left[
    \prod_{j=1}^{n_t} p(y_j^{t} \mid \bx_j^{t}, \btheta)
    \right]
    p(\bx_{1:n_c}^{c}, \bx_{1:n_t}^{t} \mid \btheta).
\end{equation}
The task \(\btau\) itself contains only the observed context and target sets; the latent parameter \(\btheta\) is not observed by the amortised predictor.

Let \(q_{\bpsi}(y_{1:n_t}^{t} \mid \bx_{1:n_t}^{t}, \mathcal{D}_c)\) denote the amortised predictive family.
The amortised predictor is trained by minimising the expected negative log-likelihood of the target outputs conditioned on the target inputs and the context set:
\begin{equation}
    \mathcal{L}_{\mathrm{NLL}}(\bpsi)
    =
    \mathbb{E}_{\btau \sim p(\btau)}
    \left[
    -
    \log q_{\bpsi}
    \left(
        y_{1:n_t}^{t}
        \mid
        \bx_{1:n_t}^{t},
        \mathcal{D}_c
    \right)
    \right].
\end{equation}
If the target labels are conditionally independent under the model, this objective becomes
\begin{equation}
    \mathcal{L}_{\mathrm{NLL}}(\bpsi)
    =
    \mathbb{E}_{\btau \sim p(\btau)}
    \left[
    -
    \sum_{j=1}^{n_t}
    \log q_{\bpsi}
    \left(
        y_j^{t}
        \mid
        \bx_j^{t},
        \mathcal{D}_c
    \right)
    \right].
\end{equation}

In practice, the expectation over tasks is approximated using finitely many sampled tasks
\[
    \left\{
    \left(\calD_c^{(i)}, \calD_t^{(i)}\right)
    \right\}_{i = 1}^{N_{\mathrm{tasks}}},
\]
where
\begin{equation}
    \calD_t^{(i)}
    =
    \left\{
    \left(\bx_j^{t,(i)}, y_j^{t,(i)}\right)
    \right\}_{j=1}^{n_t}.
\end{equation}
This gives the empirical objective
\begin{equation}
    \widehat{\mathcal{L}}_{\mathrm{NLL}}(\bpsi)
    =
    -
    \frac{1}{N_{\mathrm{tasks}}}
    \sum_{i=1}^{N_{\mathrm{tasks}}}
    \log q_{\bpsi}
    \left(
        y_{1:n_t}^{t,(i)}
        \mid
        \bx_{1:n_t}^{t,(i)},
        \mathcal{D}_c^{(i)}
    \right).
\end{equation}
Under the conditional independence factorisation, this becomes
\begin{equation}
    \widehat{\mathcal{L}}_{\mathrm{NLL}}(\bpsi)
    =
    -
    \frac{1}{N_{\mathrm{tasks}}}
    \sum_{i=1}^{N_{\mathrm{tasks}}}
    \sum_{j=1}^{n_t}
    \log q_{\bpsi}
    \left(
        y_j^{t,(i)}
        \mid
        \bx_j^{t,(i)},
        \mathcal{D}_c^{(i)}
    \right).
\end{equation}

\subsection{Neural Processes and Conditional Neural Processes}\label{app:neural-process-transformer-background}

Neural processes \citep{Garnelo2018NPs} are a particular family of amortised predictors.
In this work, we focus on the conditional neural process (CNP) formulation \citep{Garnelo2018CNPs}.
A CNP directly parameterises the finite-dimensional predictive distributions conditioned on a context dataset
\begin{equation}
    \mathcal{D}_c
    =
    \left\{
    \left(\bx_i^{c}, y_i^{c}\right)
    \right\}_{i = 1}^{n_c}.
\end{equation}
Given such a dataset, a neural network with learnable weights \(\bw \in \calW\) defines
\begin{equation}
    q_{\bw}(y_{1:n} \mid \bx_{1:n}, \mathcal{D}_c),
    \quad n \geq 1,
\end{equation}
as an approximation to the finite-dimensional marginals of \(F_{\mathcal{D}_c} = \pi_F(\mathcal{D}_c)\).
In practice, neural processes are implemented using an encoder-decoder architecture.
Each context pair \((\bx_i^c, y_i^c)\) is first mapped to a representation \(r_i\) by an encoder network.
These context representations are then aggregated into a global summary \(r\) using a permutation-invariant operation such as averaging or summation.
A decoder network finally combines this summary with a target input \(\bx\) to parameterise the predictive distribution \(q_{\bw}(y \mid \bx, \mathcal{D}_c)\).
In a CNP, this aggregation step is the main architectural mechanism by which information from the context set is shared across target predictions.

Latent neural processes may instead introduce a task variable to represent uncertainty over the underlying task.
PrivTab follows the deterministic conditional formulation; its attention-based latent bottleneck and connection to prior-fitted prediction are described next.

\subsection{Transformer Neural Processes}

While CNPs provide an amortised approximation to posterior prediction, their predictive performance can be limited by the use of simple permutation-invariant aggregation schemes for the context set.
Such aggregators compress the context into a fixed-size representation, but may discard information about interactions among context points and about the relevance of different context rows to a given target query.
Transformer neural processes address this limitation by replacing hand-crafted aggregation with attention-based set processing.

\paragraph{Transformers and Attention.}
At a high level, a transformer is a neural architecture built from attention layers and token-wise feed-forward layers \citep{Vaswani2017Transformers}.
Given a collection of token representations, an attention layer updates each token by combining information from other tokens using data-dependent weights.
This allows the representation of each token to depend adaptively on the rest of the set.

For a query token representation \(\bq \in \mathbb{R}^{d}\), a collection of key representations \(\bk_1, \ldots, \bk_n \in \mathbb{R}^{d}\), and corresponding value representations \(\bv_1, \ldots, \bv_n \in \mathbb{R}^{d_v}\), attention produces an output of the form
\begin{equation}
    \mathrm{Attn}(\bq, \bk_{1:n}, \bv_{1:n})
    =
    \sum_{i=1}^n a_i(\bq,\bk_{1:n})\, \bv_i.
\end{equation}
In the transformer architecture, the coefficients \(a_1,\ldots,a_n\) are defined by applying a softmax to scaled dot-product similarity scores between the query and the keys~\citep{Vaswani2017Transformers}.
In practice, the query, key, and value representations are themselves obtained from token embeddings through learned linear transformations.
More explicitly, in this case
\begin{equation}
    a_i(\bq,\bk_{1:n})
    =
    \frac{\exp\!\left(\langle \bq, \bk_i\rangle / \sqrt{d}\right)}
    {\sum_{j=1}^{n} \exp\!\left(\langle \bq, \bk_j\rangle / \sqrt{d}\right)},
    \qquad i = 1, \ldots, n,
\end{equation}
where \(\langle \bq, \bk_i\rangle\) denotes the Euclidean inner product between \(\bq\) and \(\bk_i\).

In self-attention, the query token and the key-value pairs all come from the same set of tokens, so each token can incorporate information from the others in that set.
In cross-attention, the query comes from one set and the key-value pairs come from another, allowing one set of tokens to attend to another.
Multi-head attention applies several such attention operations in parallel and combines their outputs, allowing the model to represent different kinds of interactions simultaneously.

For set-structured data, attention-based architectures are especially attractive because they can model rich interactions among elements while remaining permutation-equivariant when positional encodings are not used.
This makes transformers a natural architectural choice for neural processes on tabular datasets.

\paragraph{Row-order symmetry.}
The row order of a tabular dataset is arbitrary, so a tabular predictor should not depend on the order in which context rows are presented.
Let $P_c$ and $P_t$ denote permutation matrices acting on the context and target-row axes, respectively.
A set-to-set predictor $f$ is context-row invariant and target-row equivariant when
\begin{equation}
    f\left(P_c\mathcal{D}_c, P_t\bx_{1:n_t}^t\right)
    =
    P_t f\left(\mathcal{D}_c, \bx_{1:n_t}^t\right).
\end{equation}
Consequently, permuting context rows leaves every prediction unchanged, while permuting target rows only permutes the corresponding predictions.
Token-wise maps, including MLPs and layer normalisation, preserve this property; self-attention is permutation equivariant; and cross-attention is invariant to a permutation of its key-value rows and equivariant to a permutation of its query rows.
Finally, a loss that aggregates the per-target losses by summation or averaging is invariant to target-row order.
Thus, a row-order-equivariant architecture together with an aggregated loss defines an order-invariant learning objective.

Feature order is different.
Columns have fixed coordinates in an input vector, and a generic row encoder need not be invariant to a permutation of those coordinates.
Exact feature-order invariance therefore requires an additional architectural constraint or explicit symmetrisation; it does not follow from row-order symmetry.

\paragraph{Transformer Neural Processes.}
The attentive neural process augments the CNP by replacing simple pooling with attention from the target inputs to encoded context representations~\citep{Kim2019ANPs}.
In this architecture, the context rows are first encoded independently, and each target input then attends to these encoded context rows to construct a target-dependent summary.
This addresses a key limitation of pooled encoders: different target inputs can focus on different parts of the context set.

Transformer neural processes (TNPs) extend this idea by introducing attention earlier in the encoder itself \citep{Nguyen2022TNPs}.
Rather than encoding context rows independently and only applying attention at prediction time, a TNP uses attention among context representations as part of the encoding process, allowing the context representation itself to capture interactions among rows before target-dependent prediction is performed.
Context and target rows are then mapped to token representations, and self-attention and cross-attention are used to construct target-dependent summaries of the context set.
In this way, a TNP can model both interactions within the context set and relevance of context rows to each target input.

Formally, a TNP defines
\begin{equation}
    q_{\bw}(y_{1:n} \mid \bx_{1:n}, \mathcal{D}_c)
\end{equation}
through an attention-based architecture operating on the context and target sets.
The context set provides observed input-label pairs, while the target set provides query inputs whose labels are to be predicted.
Attention allows the model to form representations that depend jointly on the target inputs and the observed context, making TNPs a natural neural-process analogue of target-dependent posterior prediction.

Several recent architectures improve the computational efficiency of attention-based neural processes by introducing a smaller latent bottleneck between the context set and the target predictions.
Latent bottleneck attentive neural processes (LBANPs) compress the context information into a fixed set of latent representations before decoding to the target queries \citep{Feng2023LBANP}.
Closely related ideas appear in perceiver-style architectures, where a small latent array cross-attends to a large input and is then processed in the latent space \citep{Jaegle2021Perceiver,Jaegle2022PerceiverIO}.
This can reduce the computational cost of full attention over large sets while still allowing expressive target-dependent prediction.
PrivTab adopts this LBANP design, using a fixed set of summary tokens to aggregate context information.
It also follows the prior-data fitted network perspective \citep{Muller2022PFNs}, in which pretraining on a task prior yields an in-context predictor for new datasets.

This perspective is particularly useful for the private setting considered in this work.
Since the sensitive information flows from the context set to the predictions through attention, the attention mechanism is the natural place to introduce a privacy-preserving transformation of the context information before any downstream prediction is performed by post-processing.

\subsection{Related Work}\label{app:related-work}

\paragraph{Non-DP Tabular Foundation Models.}
TabPFN \citep{Hollmann2023TabPFNICLR,Hollmann2025TabPFNNature} and its extensions, including TabPFN v3 \citep{grinsztajn2026tabpfn3technicalreport}, use prior-fitted inference to classify from a context dataset in a single forward pass.
TabICL and TabICL v2 \citep{Qu2025TabICL,qu2026tabiclv2} follow a closely related in-context-learning approach.
These models amortise tabular classification across tasks, but do not provide formal protection for sensitive context data.
This matters because prediction confidence can support membership inference \citep{Carlini2022MembershipInference,Knolle2026DisparatePrivacyRisks}; in our audit, TabPFN v3 and TabICL v2 show high aggregate and record-level attack success (\cref{app:membership-inference-results}).
PrivTab retains the in-context setting while protecting the transferred context representation under DP.

\paragraph{Meta-Learning.}
PrivTab also belongs to the broader meta-learning literature.
A useful distinction is between gradient-based adaptation and amortised inference \citep{bronskill2020data}.
Methods such as MAML \citep{Finn2017MetaLearning} and Reptile \citep{nichol2018first} adapt to a new task through additional test-time optimisation.
By contrast, matching networks \citep{vinyals2016matching}, prototypical networks \citep{snell2017prototypical}, and neural-process models \citep{Garnelo2018CNPs} map a context set directly to predictions.
PrivTab follows this latter approach, but explicitly constrains the context-to-prediction map with a DP mechanism.

\paragraph{Neural Processes and Transformer Neural Processes.}
PrivTab is most closely related to conditional, latent, attentive, and transformer neural processes \citep{Garnelo2018CNPs,Garnelo2018NPs,Kim2019ANPs,Nguyen2022TNPs}.
These models condition predictions on a context set in a single forward pass rather than fitting a new model for each dataset.
We build most directly on the transformer-neural-process viewpoint, but privatise the context-to-summary interface.
PrivTab therefore transfers a formally analysable DP summary rather than an unconstrained representation of the context set.

\paragraph{Differentially Private Machine Learning.}
Private machine learning most often relies on DP stochastic optimisation \citep{Abadi2016DPSGD,Ponomareva2023HowToDPfyML}.
These methods enable private training for many model classes, but privacy remains tied to an adaptive pipeline that includes clipping, accounting, model selection, and hyperparameter tuning.
PrivTab explores a different design: it privatises a compact dataset representation once and reuses it by post-processing, making it a private amortised-inference procedure rather than a private optimiser.

\paragraph{Differentially Private Meta-Learning.}
DPConvCNP \citep{raisa2024noise} is most closely aligned with our noise-aware objective.
It studies private regression with convolutional conditional neural processes and similarly trains on the privatised representation observed at test time.
Its focus, however, is convolutional regression rather than heterogeneous tabular classification.
Other work applies DP during meta-training \citep{li2020differentially,zhou2022task}, rather than privatising an inference-time context set while retaining amortised prediction for that dataset.

\paragraph{Differentially Private In-Context Learning with LLMs.}
Recent work also studies DP ICL for LLMs.
It includes private prompt synthesis \citep{gao2025data}, private aggregation of LLM responses \citep{wu2024privacypreserving}, DP generation of few-shot examples \citep{tang2024privacy}, privacy-aware nearest-neighbor retrieval \citep{koskela2025differentially}, and private prompt learning \citep{duan2023flocks}.
DP-TabICL\citep{carey2024dptabicl} privatises tabular records or group statistics and converts them to textual examples that are provided to a general-purpose LLM.
PrivTab instead is a standalone tabular model whose privacy mechanism is built into context summarisation; fitting to a new dataset requires neither external LLM prompting nor synthetic-query generation.
These methods address text prompts and LLM inference rather than tabular amortised classification, but share the goal of controlling information released from a sensitive context dataset.

\paragraph{Local Differential Privacy for Embeddings.}
Related work also privatises learned representations in a local privacy setting.
A local DP approach for transformer text embeddings using a variational information bottleneck has also been studied, with privacy characterised using Rényi divergence and Bayesian Differential Privacy~\citep{elzein2026differential}.
This is complementary to our setting: users privatise their own representations before release, whereas PrivTab applies central DP to a shared context dataset for tabular classification.

\FloatBarrier
\section{Noise-aware Amortised Inference}\label{app:noise-aware-amortized-inference}

In the private setting, the predictor does not observe the context dataset \(\mathcal{D}_c\) directly.
Instead, it observes the output of a DP mechanism applied to the context set.
When this private summary is produced by a learned encoder, the mechanism depends on parameters \(\bphi\), which we regard as fixed when defining the corresponding noise-aware target.
In PrivTab, this mechanism is instantiated by the DP-MHCA-based private encoder that maps \(\mathcal{D}_c\) to a privatised summary.
We therefore write \(\calA_{\bphi}\) for the mechanism, let \(\mathbb{S}\) denote its output space, and let \(\tilde{S} \in \mathbb{S}\) denote the released random output.
Conditional on \(\mathcal{D}_c\), this output is distributed according to
\begin{equation}
    \tilde{S} \mid \mathcal{D}_c \sim p_{\calA,\bphi}(\cdot \mid \mathcal{D}_c),
\end{equation}
where \(p_{\calA,\bphi}(\tilde{s} \mid \mathcal{D}_c)\) denotes the density or mass function induced by \(\calA_{\bphi}(\calD_c)\).
The realised DP output \(\tilde{s}\) may be a noisy statistic, a private representation, or a learned DP summary of the context dataset.

The appropriate predictive object is then not the posterior predictive conditioned on \(\mathcal{D}_c\), but the conditional distribution of the target labels given the public target inputs and the observed private summary.
Let \(X_{1:n}\) and \(Y_{1:n}\) denote the target-input and target-label random variables under the joint task law.
For fixed mechanism parameters \(\bphi\), define the \emph{noise-aware posterior predictive} directly as the regular conditional law induced by the joint task distribution and private mechanism:
\begin{equation}
    p_{\bphi}(y_{1:n} \mid \bx_{1:n}, \tilde{s})
    :=
    \Pr_{\bphi}\!\left(
        Y_{1:n}=y_{1:n}
        \,\middle|\,
        X_{1:n}=\bx_{1:n},\tilde{S}=\tilde{s}
    \right).
\end{equation}
When the joint law admits the latent-variable factorisation described in \cref{app:amortised-bayesian-prediction}, this conditional law can equivalently be written as
\begin{equation}
    p_{\bphi}(y_{1:n} \mid \bx_{1:n}, \tilde{s})
    =
    \int_{\Theta}
    p(y_{1:n} \mid \bx_{1:n}, \btheta)
    p_{\bphi}(\btheta \mid \bx_{1:n},\tilde{s})
    \,\mathrm{d}\btheta.
\end{equation}
This distribution accounts for uncertainty induced both by the latent parameter \(\btheta\), including any information about it carried by the target inputs, and by the randomness of the fixed private mechanism \(\calA_{\bphi}\).

Suppose now that we learn an amortised predictive distribution \(q_{\bpsi}(y_{1:n} \mid \bx_{1:n}, \tilde{s})\) to approximate this noise-aware posterior predictive.
In the private setting, the prediction objective is the same expected negative log-likelihood objective defined in \cref{app:amortised-bayesian-prediction}, except that the confidential context dataset is replaced by the released DP output \(\tilde{S}\).
For fixed mechanism parameters \(\bphi\), this gives
\begin{equation}
    \mathcal{L}^{\mathrm{priv}}_{\mathrm{NLL}}(\bpsi;\bphi)
    =
    \mathbb{E}_{p_{\bphi}(\btau,\tilde{S})}
    \left[
    -
    \log q_{\bpsi}
    \left(
        y_{1:n_t}^{t}
        \mid
        \bx_{1:n_t}^{t},
        \tilde{S}
    \right)
    \right].
\end{equation}
The following theorem identifies the Kullback--Leibler (KL) divergence objective corresponding to the conditional negative log-likelihood at fixed target size \(n_t\).

\begin{theorem}\label{thm:nll-noise-aware-predictive-kl}
    Fix mechanism parameters \(\bphi\) and a target size \(n_t \geq 1\).
Let \(\mathbb{T}_{n_t} \subseteq \mathbb{T}\) denote the subset of tasks whose target set has size \(n_t\), let \(\tilde{S} \mid \mathcal{D}_c \sim p_{\calA,\bphi}(\cdot \mid \mathcal{D}_c)\), and let \(p_{\bphi}^{(n_t)}(\btau,\tilde{S})\) denote the resulting joint law on \(\mathbb{T}_{n_t} \times \mathbb{S}\).
Define the conditional objective
    \begin{equation}
        \mathcal{L}^{\mathrm{priv}}_{\mathrm{NLL}}(\bpsi;\bphi,n_t)
        :=
        \mathbb{E}_{p_{\bphi}^{(n_t)}(\btau,\tilde{S})}
        \left[
            -\log q_{\bpsi}
            \left(
                y_{1:n_t}^{t}
                \mid
                \bx_{1:n_t}^{t},
                \tilde{S}
            \right)
        \right].
    \end{equation}
    Let \(q_{\bpsi}(y_{1:n_t} \mid \bx_{1:n_t}, \tilde{s})\) be an amortised predictive distribution.
Assume that the relevant joint and conditional densities or mass functions on \(\mathbb{T}_{n_t} \times \mathbb{S}\) exist and that \(\mathcal{L}^{\mathrm{priv}}_{\mathrm{NLL}}(\bpsi;\bphi,n_t)\) is finite, so that the successive integrations below are justified by Fubini's theorem.
Then
    \begin{equation}
        \mathcal{L}^{\mathrm{priv}}_{\mathrm{NLL}}(\bpsi;\bphi,n_t)
        =
        \mathrm{const}(\bphi,n_t)
        +
        \mathbb{E}_{p_{\bphi}^{(n_t)}(\btau,\tilde{S})}
        \left[
            \mathrm{KL}
            \left(
                p_{\bphi}(\cdot \mid \bx_{1:n_t}^{t}, \tilde{S})
                \,\middle\|\,
                q_{\bpsi}(\cdot \mid \bx_{1:n_t}^{t}, \tilde{S})
            \right)
        \right],
    \end{equation}
    where \(\mathrm{const}(\bphi,n_t)\) does not depend on \(\bpsi\).
Therefore, for fixed \(\bphi\), minimising \(\mathcal{L}^{\mathrm{priv}}_{\mathrm{NLL}}(\bpsi;\bphi,n_t)\) is equivalent to minimising the expected KL divergence from the noise-aware posterior predictive \(p_{\bphi}(\cdot \mid \bx_{1:n_t}^{t}, \tilde{S})\) to the amortised predictive distribution \(q_{\bpsi}(\cdot \mid \bx_{1:n_t}^{t}, \tilde{S})\).
\end{theorem}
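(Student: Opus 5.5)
The plan is the standard decomposition of expected conditional log-loss into conditional entropy plus expected KL, applied to the joint law \(p_{\bphi}^{(n_t)}\) on \(\mathbb{T}_{n_t}\times\mathbb{S}\).

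First, observe that the integrand depends on \((\btau,\tilde S)\) only through the triple \((X_{1:n_t},Y_{1:n_t},\tilde S)\), where \(X_{1:n_t}=\bx_{1:n_t}^t(\btau)\) and \(Y_{1:n_t}=y_{1:n_t}^t(\btau)\). So I would push the expectation forward to the marginal law of this triple. Then I would disintegrate that marginal as the law of \((X_{1:n_t},\tilde S)\) times the regular conditional law of \(Y_{1:n_t}\) given \((X_{1:n_t},\tilde S)\). By definition, this conditional law is the noise-aware posterior predictive \(p_{\bphi}(\cdot\mid \bx_{1:n_t},\tilde s)\). By the assumed existence of densities and finiteness of the loss, Fubini then lets me write the loss as an iterated expectation, with an outer expectation over \((X,\tilde S)\) and an inner sum (or integral) \(-\sum_{y}p_{\bphi}(y\mid\bx,\tilde s)\log q_{\bpsi}(y\mid\bx,\tilde s)\).

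Second, inside the inner expectation I add and subtract \(\log p_{\bphi}(y\mid \bx,\tilde s)\). This splits the inner term into the conditional entropy \(-\sum_y p_{\bphi}\log p_{\bphi}\) plus \(\mathrm{KL}(p_{\bphi}(\cdot\mid\bx,\tilde s)\,\|\,q_{\bpsi}(\cdot\mid\bx,\tilde s))\). Taking the outer expectation, the entropy part is \(\mathrm{const}(\bphi,n_t)\): it is determined by the task prior and the fixed mechanism \(\calA_{\bphi}\), with no dependence on \(\bpsi\). The KL part, being a function of \((X,\tilde S)\) only, can be re-expressed as an expectation under the full joint \(p_{\bphi}^{(n_t)}(\btau,\tilde S)\), which gives the stated identity. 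The equivalence of minimisers follows immediately, since the constant does not involve \(\bpsi\).

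The main obstacle is integrability when splitting the finite loss into the two terms. The entropy term must be finite, or else the identity is read in \([0,\infty]\). For discrete labels (at most \(C_{\max}\) classes and \(n_t\) fixed), the label space is finite, so the conditional entropy is bounded by \(n_t\log C_{\max}\) and the split is harmless. I would state this explicitly, and otherwise fall back on the assumed densities plus the finiteness hypothesis. A secondary point is measurability and a.s.-uniqueness of the regular conditional law. I would handle it by noting that the KL is unaffected by modifications on null sets of \((X,\tilde S)\).
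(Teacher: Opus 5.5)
Your proposal is correct and follows essentially the same route as the paper. In both, the context is marginalised out to reach the law of \((\bx^t_{1:n_t}, y^t_{1:n_t}, \tilde S)\), that law is factored into the law of \((\bx^t_{1:n_t}, \tilde S)\) times the noise-aware posterior predictive, the inner cross-entropy is split into entropy plus KL, and the KL term is re-expressed as an expectation under the full joint; your integrability remark on the split (the finite label space bounds the conditional entropy) is a point the paper passes over silently and is a welcome addition.
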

\begin{proof}
    By definition of expectation with respect to the joint law \(p_{\bphi}^{(n_t)}(\btau,\tilde{S})\),
    \begin{equation}
        \mathcal{L}^{\mathrm{priv}}_{\mathrm{NLL}}(\bpsi;\bphi,n_t)
        =
        \int_{\mathbb{T}_{n_t}}
        \int_{\mathbb{S}}
        -\log q_{\bpsi}
        \left(
            y_{1:n_t}^{t}(\btau)
            \mid
            \bx_{1:n_t}^{t}(\btau),
            \tilde{s}
        \right)
        p_{\bphi}^{(n_t)}(\btau,\tilde{s})
        \,\mathrm{d}\tilde{s}\,\mathrm{d}\btau.
    \end{equation}
    Writing \(\btau = (\mathcal{D}_c,\mathcal{D}_t)\) with \(\mathcal{D}_t = (\bx_{1:n_t}^{t}, y_{1:n_t}^{t})\), and applying Fubini's theorem to expand the integral over the task coordinates, this is
    \begin{equation}
        \mathcal{L}^{\mathrm{priv}}_{\mathrm{NLL}}(\bpsi;\bphi,n_t)
        =
        \int_{\mathbb{D}}
        \int_{\calX^{n_t}}
        \int_{\calY^{n_t}}
        \int_{\mathbb{S}}
        -\log q_{\bpsi}
        \left(
            y_{1:n_t}^{t}
            \mid
            \bx_{1:n_t}^{t},
            \tilde{s}
        \right)
        p_{\bphi}^{(n_t)}(\mathcal{D}_c,\bx_{1:n_t}^{t},y_{1:n_t}^{t},\tilde{s})
        \,\mathrm{d}\tilde{s}\,\mathrm{d}y_{1:n_t}^{t}\,\mathrm{d}\bx_{1:n_t}^{t}\,\mathrm{d}\mathcal{D}_c.
    \end{equation}
    Integrating out the context dataset \(\mathcal{D}_c\), again by Fubini's theorem and the definition of the marginal density, gives
    \begin{equation}
        \mathcal{L}^{\mathrm{priv}}_{\mathrm{NLL}}(\bpsi;\bphi,n_t)
        =
        \int_{\calX^{n_t}}
        \int_{\calY^{n_t}}
        \int_{\mathbb{S}}
        -\log q_{\bpsi}
        \left(
            y_{1:n_t}^{t}
            \mid
            \bx_{1:n_t}^{t},
            \tilde{s}
        \right)
        p_{\bphi}^{(n_t)}(\bx_{1:n_t}^{t},y_{1:n_t}^{t},\tilde{s})
        \,\mathrm{d}\tilde{s}\,\mathrm{d}y_{1:n_t}^{t}\,\mathrm{d}\bx_{1:n_t}^{t}.
    \end{equation}
    Rewriting the joint density by the chain rule for densities or mass functions as \(p_{\bphi}^{(n_t)}(y_{1:n_t}^{t}\mid \bx_{1:n_t}^{t},\tilde{s})\,p_{\bphi}^{(n_t)}(\bx_{1:n_t}^{t},\tilde{s})\) gives
    \begin{equation}
        \begin{split}
        \mathcal{L}^{\mathrm{priv}}_{\mathrm{NLL}}(\bpsi;\bphi,n_t)
        =
        &\int_{\calX^{n_t}}
        \int_{\mathbb{S}}
        \left[
            \int_{\calY^{n_t}}
            -\log q_{\bpsi}
            \left(
                y_{1:n_t}^{t}
                \mid
                \bx_{1:n_t}^{t},
                \tilde{s}
            \right)
            p_{\bphi}^{(n_t)}(y_{1:n_t}^{t}\mid \bx_{1:n_t}^{t},\tilde{s})
            \,\mathrm{d}y_{1:n_t}^{t}
        \right]
        \\
        &\qquad\qquad\qquad\qquad
        p_{\bphi}^{(n_t)}(\bx_{1:n_t}^{t},\tilde{s})
        \,\mathrm{d}\tilde{s}\,\mathrm{d}\bx_{1:n_t}^{t}.
        \end{split}
    \end{equation}
    By definition of the noise-aware posterior predictive, the conditional law \(p_{\bphi}^{(n_t)}(y_{1:n_t}^{t}\mid \bx_{1:n_t}^{t},\tilde{s})\) is precisely \(p_{\bphi}(\cdot \mid \bx_{1:n_t}^{t},\tilde{s})\).
Therefore the inner integral is the cross-entropy
    \begin{equation}
        \mathcal{H}
        \left(
            p_{\bphi}(\cdot \mid \bx_{1:n_t}^{t}, \tilde{s}),
            q_{\bpsi}(\cdot \mid \bx_{1:n_t}^{t}, \tilde{s})
        \right).
    \end{equation}
    Hence
    \begin{equation}
        \mathcal{L}^{\mathrm{priv}}_{\mathrm{NLL}}(\bpsi;\bphi,n_t)
        =
        \int_{\calX^{n_t}}
        \int_{\mathbb{S}}
        \mathcal{H}
        \left(
            p_{\bphi}(\cdot \mid \bx_{1:n_t}^{t}, \tilde{s}),
            q_{\bpsi}(\cdot \mid \bx_{1:n_t}^{t}, \tilde{s})
        \right)
        p_{\bphi}^{(n_t)}(\bx_{1:n_t}^{t},\tilde{s})
        \,\mathrm{d}\tilde{s}\,\mathrm{d}\bx_{1:n_t}^{t}.
    \end{equation}
    Using the identity
    \begin{equation}
        \mathcal{H}(p,q) = \mathcal{H}(p) + \mathrm{KL}(p \| q),
    \end{equation}
    pointwise in \((\bx_{1:n_t}^{t},\tilde{s})\), we obtain
    \begin{equation}
        \begin{split}
        \mathcal{L}^{\mathrm{priv}}_{\mathrm{NLL}}(\bpsi;\bphi,n_t)
        =
        &\int_{\calX^{n_t}}
        \int_{\mathbb{S}}
        \mathcal{H}
        \left(
            p_{\bphi}(\cdot \mid \bx_{1:n_t}^{t}, \tilde{s})
        \right)
        p_{\bphi}^{(n_t)}(\bx_{1:n_t}^{t},\tilde{s})
        \,\mathrm{d}\tilde{s}\,\mathrm{d}\bx_{1:n_t}^{t}
        \\
        &+
        \int_{\calX^{n_t}}
        \int_{\mathbb{S}}
        \mathrm{KL}
        \left(
            p_{\bphi}(\cdot \mid \bx_{1:n_t}^{t}, \tilde{s})
            \,\middle\|\,
            q_{\bpsi}(\cdot \mid \bx_{1:n_t}^{t}, \tilde{s})
        \right)
        p_{\bphi}^{(n_t)}(\bx_{1:n_t}^{t},\tilde{s})
        \,\mathrm{d}\tilde{s}\,\mathrm{d}\bx_{1:n_t}^{t}.
        \end{split}
    \end{equation}
    Since the KL integrand depends on \(\btau\) only through the target inputs \(\bx_{1:n_t}^{t}\) and on the private release through \(\tilde{S}\), another application of the law of the unconscious statistician yields
    \begin{equation}
        \mathcal{L}^{\mathrm{priv}}_{\mathrm{NLL}}(\bpsi;\bphi,n_t)
        =
        \mathrm{const}(\bphi,n_t)
        +
        \mathbb{E}_{p_{\bphi}^{(n_t)}(\btau,\tilde{S})}
        \left[
            \mathrm{KL}
            \left(
                p_{\bphi}(\cdot \mid \bx_{1:n_t}^{t}, \tilde{S})
                \,\middle\|\,
                q_{\bpsi}(\cdot \mid \bx_{1:n_t}^{t}, \tilde{S})
            \right)
        \right].
    \end{equation}
    The first term depends only on the data-generating process and the fixed mechanism parameters \(\bphi\), not on \(\bpsi\).
Writing it as
    \begin{equation}
        \mathrm{const}(\bphi,n_t)
        =
        \int_{\calX^{n_t}}
        \int_{\mathbb{S}}
        \mathcal{H}
        \left(
            p_{\bphi}(\cdot \mid \bx_{1:n_t}^{t}, \tilde{s})
        \right)
        p_{\bphi}^{(n_t)}(\bx_{1:n_t}^{t},\tilde{s})
        \,\mathrm{d}\tilde{s}\,\mathrm{d}\bx_{1:n_t}^{t}
    \end{equation}
    gives the result.
\end{proof}

\section{Differentially Private Transformer Neural Processes}\label{app:dp-tnp}\label{sec:dp-tnp}\label{app:dp-tnp-architecture-details}
Differentially private transformer neural processes are transformer-based amortised predictors for which the sensitive information is contained in the context dataset.
We assume that each individual's sensitive data corresponds to a single row of the private context set $\calD_c$.
Accordingly, privacy is defined with respect to the substitute adjacency relation defined in \cref{app:differential-privacy}, so that neighbouring datasets differ in exactly one row.
This choice is natural for the tabular setting considered here and is compatible with the sequential representation of the context set used by the transformer architecture.

We instantiate PrivTab shown in \cref{fig:architecture}.
The model receives a private context set $\calD_c$ together with public target queries $\bx_{1:n_t}^t$ and proceeds in three parts:
\begin{enumerate}
    \item A row embedding model maps context rows $\{(\bx_i^c,y_i^c)\}_{i=1}^{n_c}$ and target queries $\{\bx_j^t\}_{j=1}^{n_t}$ to token representations.
    \item A fixed set of learned pseudo-tokens $\{\bu_i\}_{i=1}^{m}$ repeatedly aggregates the private context tokens through DP-MHCA, followed by latent self-attention, to produce privatised summary tokens.
    \item At each layer, target tokens attend to the privatised summaries through multi-head cross-attention; a final MLP predicts $y_{1:n_t}^t$.
This target-side computation is post-processing and requires no additional privacy protection.
\end{enumerate}
The final privacy guarantee depends on the number of DP-MHCA compositions.
The following subsections develop the model in order: row embedding, the private attention mechanism and its guarantees, the reference implementation, and the complete encoder architecture.
The corresponding noise-aware pretraining objective and training procedure are developed separately in \cref{app:dp-tnp-pretraining}.

\paragraph{Row-order equivariance of PrivTab.}
PrivTab uses no positional encodings.
Row-wise embeddings, layer normalisation, feed-forward networks, and decoding are applied independently to each row and do not use row order.
DP-MHCA is invariant to context-row permutations because it sums bounded per-row contributions; the remaining latent computation is post-processing.
Target-side cross-attention is equivariant to target-row permutations.
Hence, the predictor is invariant to context-row order and equivariant to target-row order.
Once target predictions are combined in the aggregated training loss, the loss is invariant to permutations of both context and target rows.

\subsection{Row Embedding Model}

The row embedding model maps each context row \((\bx_i^c,y_i^c)\) and each target row \(\bx_j^t\) to a token representation in \(\mathbb{R}^{d_{\mathrm{tok}}}\).
Let \(d \leq d_x\) denote the number of observed features in a given dataset, and let \(e_{\mathrm{pad}} \in \mathbb{R}\) denote a learned padding value.
For a raw input row \(\bx_i^{\mathrm{raw}} \in \mathbb{R}^{d}\), we first form a padded vector \(\bar{\bx}_i \in \mathbb{R}^{d_x}\) by
\begin{equation}
    \bar{\bx}_{i,k}
    =
    \begin{cases}
        \bx_{i,k}^{\mathrm{raw}}, & k \leq d, \\
        e_{\mathrm{pad}}, & k > d,
    \end{cases}
    \qquad k = 1,\ldots,d_x.
\end{equation}
We then define a binary validity mask \(\mathbf{m}_i \in \{0,1\}^{d_x}\) by
\begin{equation}
    \mathbf{m}_{i,k}
    =
    \begin{cases}
        1, & k \leq d, \\
        0, & k > d,
    \end{cases}
    \qquad k = 1,\ldots,d_x,
\end{equation}
and concatenate the padded row with the mask:
\begin{equation}
    \tilde{\bx}_i = \left[\bar{\bx}_i,\mathbf{m}_i\right] \in \mathbb{R}^{2d_x}.
\end{equation}
To keep rows with different numbers of observed features on a comparable scale, we rescale by
\begin{equation}
    \hat{\bx}_i = \sqrt{\frac{d_x}{d}}\,\tilde{\bx}_i
    \in \mathbb{R}^{2d_x}.
\end{equation}
Thus, the scaled augmented row itself serves as the \(x\)-representation.
For classification, the label encoder maps each label \(y \in \{1,\ldots,C\}\), where \(C\leq C_{\mathrm{max}}=10\), and a dedicated null label \(y_{\mathrm{null}}=C_{\mathrm{max}}+1\) to a learned embedding in \(\mathbb{R}^{d_y}\).
Equivalently, it is a learned lookup table
\begin{equation}
    \mathrm{LabelEmbed} : \{1,\ldots,C_{\mathrm{max}}+1\} \to \mathbb{R}^{d_y},
\end{equation}
where the final index is reserved for the null label regardless of the number of classes in the current dataset.
In the benchmark configuration, \(d_y=10\).
For context rows, the observed label embedding is concatenated with the row representation,
\begin{equation}
    \bz_i^c = \mathrm{RowEmbed}\left(\hat{\bx}_i^c, \mathrm{LabelEmbed}(y_i^c)\right),
\end{equation}
where \(\mathrm{RowEmbed}(\mathbf{x},\mathbf{e})\) concatenates its two arguments as \([\mathbf{x},\mathbf{e}]\) and maps the result into the shared token space using an MLP.
For target rows, the same construction is used except that the observed label embedding is replaced by the null-label embedding \(\mathrm{LabelEmbed}(y_{\mathrm{null}})\) to indicate an unknown label and distinguish target from context tokens, giving
\begin{equation}
    \bz_j^t = \mathrm{RowEmbed}\left(\hat{\bx}_j^t, \mathrm{LabelEmbed}(y_{\mathrm{null}})\right).
\end{equation}
All previous operations are applied row-wise, which is important both for the privacy analysis and for row-order symmetry.
In contrast, the row encoder is not exactly feature-order invariant: a permutation of feature columns permutes coordinates of $\tilde{\bx}_i$, and the subsequent MLP $\mathrm{RowEmbed}$ has coordinate-specific weights.
As with TabPFN and TabICL \citep{Hollmann2023TabPFNICLR,Qu2025TabICL}, we rely instead on the randomised synthetic task prior during pretraining to encourage robustness to the arbitrary ordering and roles of features.
This provides approximate feature-order invariance in practice, but not an exact architectural guarantee.

\subsection{DP-MHCA}\label{app:dp-mhca}\label{sec:dp-mhca}
The DP-MHCA is the central component of PrivTab.
It is a multi-head cross-attention mechanism in which a query sequence attends to the private context tokens, followed by Gaussian perturbation of the resulting representation.
In the first private layer, these queries are the learned pseudo-tokens \(\bu_{1:m}\); in later private layers, they are the current summary tokens.
Let \(d_{\mathrm{tok}}\) denote the token dimension, \(H\) the number of heads, and \(d_h\) the per-head dimension, so that the internal attention dimension is \(H d_h\).
Let \(\mathbf{r}_{1:m} \in (\mathbb{R}^{d_{\mathrm{tok}}})^m\) denote the query tokens supplied to a given DP-MHCA layer, and let \(\bz_{1:n_c}^c \in (\mathbb{R}^{d_{\mathrm{tok}}})^{n_c}\) denote the private context tokens.
These are first mapped by learned linear projections
\begin{equation}
    \mathbf{W}_Q \in \mathbb{R}^{H d_h \times d_{\mathrm{tok}}}, \qquad
    \mathbf{W}_K \in \mathbb{R}^{H d_h \times d_{\mathrm{tok}}}, \qquad
    \mathbf{W}_V \in \mathbb{R}^{H d_h \times d_{\mathrm{tok}}},
\end{equation}
so that for each query index \(i\) and context index \(j\),
\begin{equation}
    \bq_i = \mathbf{W}_Q \mathbf{r}_i \in \mathbb{R}^{H d_h}, \qquad
    \bk_j = \mathbf{W}_K \bz_j^c \in \mathbb{R}^{H d_h}, \qquad
    \bar{\bv}_j = \mathbf{W}_V \bz_j^c \in \mathbb{R}^{H d_h}.
\end{equation}
Each projected vector is then reshaped into \(H\) heads of dimension \(d_h\).
We denote the resulting per-head vectors by
\begin{equation}
    \bq_i^{(h)} \in \mathbb{R}^{d_h}, \qquad
    \bk_j^{(h)} \in \mathbb{R}^{d_h}, \qquad
    \bar{\bv}_j^{(h)} \in \mathbb{R}^{d_h},
\end{equation}
for heads \(h=1,\ldots,H\), query indices \(i=1,\ldots,m\), and context indices \(j=1,\ldots,n_c\).

In the implementation, the value vectors are normalised after projection, head by head:
\begin{equation}
    \bv_j^{(h)}
    =
    \frac{\bar{\bv}_j^{(h)}}{\max\!\left\{\lVert \bar{\bv}_j^{(h)} \rVert_2,\varepsilon_v\right\}},
\end{equation}
where \(\varepsilon_v = 10^{-12}\) is the numerical stabiliser used by the implementation.
Hence \(\lVert \bv_j^{(h)} \rVert_2 \leq 1\) for all \(h\) and \(j\).

For each head, the cross-attention output is computed using a bounded activation applied to the scaled dot products:
\begin{equation}\label{eq:dp-mhca-head}
    \operatorname{Att}^{(h)}\!\left(\bq_i^{(h)}, \bk_{1:n_c}^{(h)}, \bv_{1:n_c}^{(h)}\right)
    =
    \sum_{j=1}^{n_c}
    g\!\left(\bq_i^{(h)}, \bk_j^{(h)}\right)\bv_j^{(h)},
\end{equation}
where
\begin{equation}
    g(\bq,\bk)
    =
    \tanh\!\left(\frac{\langle \bq,\bk\rangle}{\sqrt{d_h}}\right).
\end{equation}
Unlike standard transformer attention, these coefficients are not normalised across \(j\); they are simply bounded by the \(\tanh\) nonlinearity.
Consequently, each summand in \cref{eq:dp-mhca-head} depends on exactly one context token and hence on exactly one individual's row in the private context set.
Moreover,
\begin{equation}
    \left\lVert
    g\!\left(\bq_i^{(h)}, \bk_j^{(h)}\right)\bv_j^{(h)}
    \right\rVert_2
    =
    \left|g\!\left(\bq_i^{(h)}, \bk_j^{(h)}\right)\right|
    \left\lVert \bv_j^{(h)} \right\rVert_2
    \leq 1.
\end{equation}
This bounded-contribution property is what makes the representation amenable to DP.

The head outputs are then concatenated across \(h=1,\ldots,H\) into the pre-noise attention output \(\mathbf{a}_i\), distinct from the complete block output \(\bs_i^{(\ell)}\) used in the architecture below:
\begin{equation}
    \mathbf{a}_i
    =
    \left[
        \mathbf{a}_i^{(1)},\ldots,\mathbf{a}_i^{(H)}
    \right]
    \in \mathbb{R}^{H d_h},
\end{equation}
where
\begin{equation}
    \mathbf{a}_i^{(h)}
    :=
    \operatorname{Att}^{(h)}\!\left(\bq_i^{(h)}, \bk_{1:n_c}^{(h)}, \bv_{1:n_c}^{(h)}\right)
\end{equation}
denotes the \(h\)-th pre-noise head output for query \(i\).
Gaussian noise is added to this concatenated representation; the subscript \(\mathrm{attn}\) distinguishes this intermediate attention output from the saved layer states \(\tilde{\bs}_i^{(\ell)}\) defined below:
\begin{equation}
    \tilde{\bs}_{i,\mathrm{attn}} = \mathbf{a}_i + \boldsymbol{\xi}_i,
    \qquad
    \boldsymbol{\xi}_i \sim \mathcal{N}(\mathbf{0}, \sigma^2 I_{H d_h}),
\end{equation}
where, jointly across the \(m\) summary tokens, the concatenated noise vector satisfies
\begin{equation}
    \boldsymbol{\xi}_{1:m}
    \sim
    \mathcal{N}\!\left(\mathbf{0},\sigma^2 I_{m H d_h}\right).
\end{equation}
Thus, the noise coordinates, including the blocks associated with distinct summary tokens, are mutually independent.
Although each per-row contribution in DP-MHCA is bounded, the aggregated pre-noise summary can still have a norm that varies substantially across inputs.
This variability is beneficial for improving the signal-to-noise ratio (SNR) when the context tokens are more informative, but after Gaussian perturbation a normalisation step is useful to stabilise the noisy attention-output norms before the residual and feed-forward computations.
For this reason, we first compute the maximum Euclidean norm across the perturbed tokens and then rescale each perturbed token by
\begin{equation}\label{eq:dp-mhca-normalization}
    \tilde{\bs}_{i,\mathrm{attn}} \leftarrow
    \frac{G\tilde{\bs}_{i,\mathrm{attn}}}
    {\max_{k=1,\ldots,m} \lVert \tilde{\bs}_{k,\mathrm{attn}} \rVert_2 + \varepsilon_s},
    \qquad \varepsilon_s = 10^{-6}.
\end{equation}
Thus, after this optional step, the largest token norm is strictly below and, when the maximum pre-normalisation norm is large relative to \(\varepsilon_s\), approximately equal to $G$.
We set $G=512$ in all reported experiments.
Finally, the resulting vector is mapped back to the original token dimension by an output projection
\begin{equation}
    \mathbf{W}_O \in \mathbb{R}^{d_{\mathrm{tok}} \times H d_h},
    \qquad
    \bo_i = \mathbf{W}_O \tilde{\bs}_{i,\mathrm{attn}} \in \mathbb{R}^{d_{\mathrm{tok}}}.
\end{equation}
In the benchmark configuration, \(H d_h=d_{\mathrm{tok}}=256\) and this output map is the identity.
Thus, the DP mechanism in DP-MHCA is applied after the bounded cross-attention computation in \(\mathbb{R}^{H d_h}\) and before any output mapping back to the token space \(\mathbb{R}^{d_{\mathrm{tok}}}\).

For the following results, we consider the deterministic DP-MHCA map before Gaussian perturbation, optional normalisation, and output projection, conditional on a fixed supplied query sequence \(\mathbf{r}_{1:m}\).
Writing
\begin{equation}
    \mathbf{a}_{1:m}(\calD_c;\mathbf{r}_{1:m}) = \left(\mathbf{a}_1(\calD_c;\mathbf{r}_{1:m}),\ldots,\mathbf{a}_m(\calD_c;\mathbf{r}_{1:m})\right) \in \mathbb{R}^{m H d_h},
\end{equation}
we treat \(\mathbf{a}_{1:m}(\calD_c;\mathbf{r}_{1:m})\) as the concatenation of the \(m\) pre-noise attention outputs defined above.

\paragraph{Privacy and utility guarantees.}

\begin{theorem}[Sensitivity of pre-noise DP-MHCA]
\label{thm:dp-mhca-sensitivity}
Fix any query sequence \(\br_{1:m}\).
Under substitute adjacency, the \(\ell_2\)-sensitivity of the deterministic pre-noise DP-MHCA map with respect to the private context dataset satisfies
\begin{equation}
    \Delta_2\!\left(\mathbf{a}_{1:m}(\cdot;\br_{1:m})\right)
    \leq
    2\sqrt{H m}.
\end{equation}
\end{theorem}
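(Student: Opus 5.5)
The plan is to exploit the sum-of-bounded-contributions structure of \cref{eq:dp-mhca-head}, with the query sequence \(\br_{1:m}\) held fixed. Fix adjacent datasets \(\calD_c\simeq\calD_c'\) under substitute adjacency. Choose ordered representatives so that both agree on every row except index \(j^*\), where \((\bx_{j^*},y_{j^*})\) is replaced by \((\bx'_{j^*},y'_{j^*})\). This is legitimate because the map is invariant to row order.

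The first step is to note that everything preceding DP-MHCA is applied row-wise with parameters independent of \(\calD_c\). This covers padding, masking, rescaling, label embedding, \(\mathrm{RowEmbed}\), and the key and value projections. Hence the context tokens \(\bz_j^c\), and therefore \(\bk_j^{(h)}\) and \(\bv_j^{(h)}\), coincide for all \(j\neq j^*\). Because the queries \(\bq_i^{(h)}=\mathbf{W}_Q^{(h)}\br_i\) do not depend on the data, every summand \(g(\bq_i^{(h)},\bk_j^{(h)})\bv_j^{(h)}\) with \(j\neq j^*\) cancels in the difference. This gives, for each summary index \(i\) and head \(h\),
\[
\mathbf{a}_i^{(h)}(\calD_c)-\mathbf{a}_i^{(h)}(\calD_c')
= g(\bq_i^{(h)},\bk_{j^*}^{(h)})\bv_{j^*}^{(h)} - g(\bq_i^{(h)},\bk_{j^*}'^{(h)})\bv_{j^*}'^{(h)}.
\]

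The second step bounds each of the two terms in \(\ell_2\) norm by \(1\). This uses \(|\tanh|\le 1\) together with \(\lVert\bv_j^{(h)}\rVert_2\le 1\): the stabilised normalisation divides by \(\max\{\lVert\bar\bv\rVert_2,\varepsilon_v\}\ge\lVert\bar\bv\rVert_2\). This is exactly the bounded-contribution inequality displayed before the theorem. The triangle inequality then gives a per-block difference of at most \(2\).

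The third step assembles the blocks. The vector \(\mathbf{a}_{1:m}\in\mathbb{R}^{mHd_h}\) is the concatenation of \(mH\) disjoint blocks, so its squared \(\ell_2\) norm is the sum of the squared block norms. This yields
\[
\lVert\mathbf{a}_{1:m}(\calD_c;\br_{1:m})-\mathbf{a}_{1:m}(\calD_c';\br_{1:m})\rVert_2^2\le \sum_{i=1}^m\sum_{h=1}^H 2^2 = 4Hm.
\]
Taking the supremum over adjacent pairs then gives \(\Delta_2\le 2\sqrt{Hm}\).

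There is no deep obstacle here. The point that needs the most care is the first step, namely justifying that no operation before DP-MHCA mixes rows. One potential issue is the \(\sqrt{d_x/d}\) rescaling, but \(d\) is public and shared by neighbouring datasets since they have the same schema. Another is that layer normalisation, where present, is token-wise. I would also check that in later layers the queries \(\br_{1:m}\) depend on earlier private outputs. The theorem conditions on a fixed query sequence, so that dependence is deferred to the adaptive composition argument rather than handled here.
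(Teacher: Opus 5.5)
Your proposal is correct and follows the paper's proof essentially step for step. Row-wise embedding confines the change to the single token at index $j^\star$, so all other summands cancel; each of the two remaining summands has norm at most $1$ by $|\tanh|\le 1$ and $\lVert\bv_j^{(h)}\rVert_2\le 1$, giving at most $2$ per head and summary token, and summing $2^2$ over the $Hm$ concatenated blocks yields $2\sqrt{Hm}$. Your side remarks are sound and consistent with how the paper treats these points: the $\sqrt{d_x/d}$ rescaling is public, layer normalisation is token-wise, and the adaptive dependence of $\br_{1:m}$ is deferred to the composition argument.
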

\begin{proof}
Let \(\calD_c\) and \(\calD_c'\) be neighbouring context datasets under substitute adjacency, so they differ in exactly one row.
Since all operations before the attention sum are applied row-wise, this changes exactly one context token in the sequence \(\bz_{1:n_c}^c\).
Let this differing token occur at index \(j^\star\).
For any query index \(i\) and head \(h\), the difference between the two head outputs is
\begin{equation}
\begin{aligned}
    &\operatorname{Att}^{(h)}\!\left(\bq_i^{(h)}, \bk_{1:n_c}^{(h)}, \bv_{1:n_c}^{(h)}\right)
    -
    \operatorname{Att}^{(h)}\!\left(\bq_i^{(h)}, \bk_{1:n_c}^{\prime (h)}, \bv_{1:n_c}^{\prime (h)}\right) \\
    &=
    g\!\left(\bq_i^{(h)}, \bk_{j^\star}^{(h)}\right)\bv_{j^\star}^{(h)}
    -
    g\!\left(\bq_i^{(h)}, \bk_{j^\star}^{\prime (h)}\right)\bv_{j^\star}^{\prime (h)},
\end{aligned}
\end{equation}
because all remaining summands are identical and cancel.
By the triangle inequality and the bound established above,
\begin{equation}
\begin{aligned}
    &\left\lVert
    \operatorname{Att}^{(h)}\!\left(\bq_i^{(h)}, \bk_{1:n_c}^{(h)}, \bv_{1:n_c}^{(h)}\right)
    -
    \operatorname{Att}^{(h)}\!\left(\bq_i^{(h)}, \bk_{1:n_c}^{\prime (h)}, \bv_{1:n_c}^{\prime (h)}\right)
    \right\rVert_2 \\
    &\leq
    \left\lVert
    g\!\left(\bq_i^{(h)}, \bk_{j^\star}^{(h)}\right)\bv_{j^\star}^{(h)}
    \right\rVert_2
    +
    \left\lVert
    g\!\left(\bq_i^{(h)}, \bk_{j^\star}^{\prime (h)}\right)\bv_{j^\star}^{\prime (h)}
    \right\rVert_2
    \leq 2.
\end{aligned}
\end{equation}
Thus each head of each summary token changes by at most \(2\) in \(\ell_2\)-norm.
Since \(\mathbf{a}_{1:m}(\calD_c;\mathbf{r}_{1:m})\) is obtained by concatenating the \(mH\) head outputs, the squared \(\ell_2\)-difference of the full pre-noise representation is at most
\begin{equation}
    \sum_{i=1}^{m}\sum_{h=1}^{H} 2^2 = 4 H m.
\end{equation}
Taking square roots yields
\begin{equation}
    \left\lVert
    \mathbf{a}_{1:m}(\calD_c;\mathbf{r}_{1:m}) - \mathbf{a}_{1:m}(\calD_c';\mathbf{r}_{1:m})
    \right\rVert_2
    \leq
    2\sqrt{H m},
\end{equation}
which proves the claim.
\end{proof}

An important consequence of \cref{thm:dp-mhca-sensitivity} is that the sensitivity bound does not depend on the context size \(n_c\).
Thus, increasing the number of context rows need not increase the DP noise scale, even though additional informative rows can strengthen the signal accumulated by the attention mechanism.

\begin{theorem}[\(\mu\)-GDP of the PrivTab Encoder]
\label{thm:dp-tnp-gdp}
Let \(L_{\mathrm{main}}\geq1\) and \(\mu>0\).
Under substitute adjacency, if the encoder contains \(L_{\mathrm{main}}\) DP-MHCA layers, and each layer \(\ell\in\{1,\ldots,L_{\mathrm{main}}\}\) adds isotropic Gaussian noise with standard deviation \(\sigma_\ell=2\sqrt{HmL_{\mathrm{main}}}/\mu\) to the full concatenated summary representation, with fresh noise independent across layers and of all preceding mechanism randomness, then the overall PrivTab encoder satisfies \(\mu\)-GDP with respect to \(\calD_c\).
\end{theorem}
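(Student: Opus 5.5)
The plan is to view the encoder as an adaptive composition of $L_{\mathrm{main}}$ Gaussian mechanisms, interleaved with data-independent post-processing. I will then combine \cref{thm:dp-mhca-sensitivity}, the Gaussian mechanism theorem and \cref{thm:mu-gdp-composition}.

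\textbf{Step 1: the layer mechanisms.} For $\ell=1,\ldots,L_{\mathrm{main}}$, define the mechanism $\calA_\ell$ as follows. It takes as auxiliary input the previously released noisy outputs $\tilde{\bs}_{1:m,\mathrm{attn}}^{(1)},\ldots,\tilde{\bs}_{1:m,\mathrm{attn}}^{(\ell-1)}$. From these it deterministically reconstructs the query sequence $\br^{(\ell)}_{1:m}=\tilde{\bs}^{(\ell-1)}_{1:m}$, with $\tilde{\bs}^{(0)}_{1:m}=\bu_{1:m}$.

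I need to check that this reconstruction uses only the fixed pretrained parameters and the earlier noisy releases. The operations involved are the optional normalisation \cref{eq:dp-mhca-normalization}, the output projection, the residual updates, the feed-forward networks and the MHSA blocks. None of them touches $\calD_c$ except through the noisy attention outputs, and the pseudo-tokens and weights are independent of $\calD_c$ by assumption.

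Given $\br^{(\ell)}_{1:m}$, $\calA_\ell$ outputs $\mathbf{a}_{1:m}(\calD_c;\br^{(\ell)}_{1:m})+\boldsymbol{\xi}^{(\ell)}$ with $\boldsymbol{\xi}^{(\ell)}\sim\mathcal N(\mathbf 0,\sigma_\ell^2 I_{mHd_h})$. The noise is fresh and independent of everything before.

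\textbf{Step 2: per-layer GDP.} Conditional on any fixed value of the auxiliary input, \cref{thm:dp-mhca-sensitivity} gives sensitivity at most $2\sqrt{Hm}$. This bound is uniform over the query sequence. By the Gaussian mechanism theorem, $\calA_\ell$ is therefore $\mu_\ell$-GDP with
\[
\mu_\ell = \frac{2\sqrt{Hm}}{\sigma_\ell}=\frac{\mu}{\sqrt{L_{\mathrm{main}}}},
\]
uniformly over the auxiliary input. A sensitivity bound smaller than $2\sqrt{Hm}$ only makes the mechanism more private, and $T_\mu$ is monotone in $\mu$, so it suffices to use this upper bound.

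\textbf{Step 3: composition and post-processing.} \cref{thm:mu-gdp-composition} applies to this adaptive composition and gives $\sqrt{L_{\mathrm{main}}\cdot\mu^2/L_{\mathrm{main}}}=\mu$ for the joint release of all noisy attention outputs.

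The full encoder output $\tilde S$ includes the $L_{\mathrm{post}}$ MHSA-only layers and all normalisations. It is a deterministic function of this joint release and the fixed parameters, so post-processing preserves $\mu$-GDP.

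\textbf{Main obstacle.} The delicate step is Step 1, justifying that the composition really is adaptive in the formal sense. I must show that each layer's query sequence is a measurable function of earlier releases only, and that no context-dependent quantity leaks through other channels. Examples of such channels would be the residual path bypassing the noise, or a normalisation that uses un-noised statistics.

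I would argue this by inspecting the block definitions: the normalisation \cref{eq:dp-mhca-normalization} is applied after noise, and only DP-MHCA reads $\bz^c_{1:n_c}$. Per-row preprocessing such as $\mathrm{RowEmbed}$ keeps substitute adjacency at the token level, so the sensitivity theorem applies verbatim in every layer.
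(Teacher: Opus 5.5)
Your proposal is correct and follows essentially the same route as the paper. Row-wise embedding preserves single-token adjacency, and \cref{thm:dp-mhca-sensitivity} bounds each layer's sensitivity by $2\sqrt{Hm}$ uniformly over the adaptively chosen query sequence, so each layer is $\mu/\sqrt{L_{\mathrm{main}}}$-GDP; adaptive GDP composition plus post-processing then gives $\mu$-GDP, and your explicit auxiliary-input construction just spells out the paper's ``conditional on all previous DP releases, the query sequence is fixed.''
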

\begin{proof}
We first note that all row embedding and token encoding operations preceding the DP-MHCA layer are applied point-wise (row-wise) to the private context dataset $\calD_c$.
Under substitute adjacency, two adjacent context datasets $\calD_c \simeq \calD_c'$ differ in exactly one row.
Because the preprocessing is point-wise, this single-row difference propagates to a difference in at most one context token in the resulting sequence of context tokens $\bz_{1:n_c}^c$.

Fix any private layer \(\ell\).
Conditional on all previous DP releases, the query sequence supplied to layer \(\ell\) is fixed, although it may depend adaptively on earlier private outputs.
Therefore, by \cref{thm:dp-mhca-sensitivity}, the conditional \(\ell_2\)-sensitivity of the pre-noise summary representation at layer \(\ell\) satisfies \(\Delta_2 \leq 2\sqrt{H m}\) with respect to the private context dataset.

Under the Gaussian mechanism, adding independent Gaussian noise with standard deviation $\sigma_{\ell}$ to a query with $\ell_2$-sensitivity $\Delta_2$ satisfies $\mu_{\ell}$-GDP with $\mu_{\ell} = \frac{\Delta_2}{\sigma_{\ell}}$ \citep{Dong2022GDP}.
Substituting the noise standard deviation, we have:
\begin{equation}
    \mu_{\ell} 
    \leq 
    \frac{2\sqrt{H m}}{\sigma_{\ell}} 
    = 
    \frac{2\sqrt{H m}}{\frac{2\sqrt{H m L_{\mathrm{main}}}}{\mu}}
    = 
    \frac{\mu}{\sqrt{L_{\mathrm{main}}}}.
\end{equation}

Apart from the DP-MHCA accesses to the private context, all intermediate computations in the encoder and decoder are post-processing of previous private releases.
In particular, standard self-attention, target-side cross-attention, feed-forward layers, and the MLP decoder do not consume additional privacy budget beyond the DP-MHCA layers themselves.
Since the query sequence of a later DP-MHCA layer may depend on earlier private releases, the resulting composition is adaptive.

By the adaptive composition theorem of GDP (\cref{thm:mu-gdp-composition}), the composition of the $L_{\mathrm{main}}$ private DP-MHCA mechanisms yields an overall privacy parameter of:
\begin{equation}
    \mu_{\mathrm{total}} 
    = 
    \sqrt{\sum_{\ell=1}^{L_{\mathrm{main}}} \mu_{\ell}^2} 
    \leq 
    \sqrt{L_{\mathrm{main}} \left( \frac{\mu}{\sqrt{L_{\mathrm{main}}}} \right)^2} 
    = 
    \mu.
\end{equation}
Therefore, the PrivTab encoder satisfies $\mu$-GDP.
\end{proof}

This result lifts the per-layer sensitivity bound to an end-to-end privacy guarantee: with the stated noise allocation, the entire encoder satisfies the selected \(\mu\)-GDP budget under substitute adjacency.
All subsequent target-side computation is post-processing and therefore incurs no further privacy loss.

\begin{theorem}[Directional SNR of DP-MHCA]
\label{thm:dp-mhca-snr}
Suppose \(\sigma>0\), and suppose there exist a unit vector \(\bu\in\mathbb{R}^{d_h}\), constants \(c,\kappa>0\), an index set \(\mathcal{I}\subseteq\{1,\ldots,n_c\}\), and a constant \(R\geq0\) such that \(g(\bq_i^{(h)},\bk_j^{(h)})\geq c\) and \(\langle\bu,\bv_j^{(h)}\rangle\geq\kappa\) for all \(j\in\mathcal I\), and
\begin{equation}
    \sum_{j\notin\mathcal I}
    g\!\left(\bq_i^{(h)},\bk_j^{(h)}\right)
    \left\langle\bu,\bv_j^{(h)}\right\rangle
    \geq-R.
\end{equation}
Then, if Gaussian noise with covariance \(\sigma^2I_{d_h}\) is added to this head output, the directional signal-to-noise ratio
\begin{equation}
    \mathrm{SNR}_{\bu}
    :=
    \frac{|\langle\bu,\mathbf{a}_i^{(h)}\rangle|}{\sigma}
    \geq
    \frac{\max\{c\kappa|\mathcal I|-R,0\}}{\sigma}.
\end{equation}
\end{theorem}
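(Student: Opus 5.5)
The plan is to lower-bound the projection of the pre-noise head output onto the direction \(\bu\) and then divide by \(\sigma\). By \cref{eq:dp-mhca-head}, \(\mathbf{a}_i^{(h)}=\sum_{j=1}^{n_c} g(\bq_i^{(h)},\bk_j^{(h)})\bv_j^{(h)}\). Linearity of the inner product gives
\begin{equation*}
    \langle\bu,\mathbf{a}_i^{(h)}\rangle
    =
    \sum_{j\in\mathcal I} g\!\left(\bq_i^{(h)},\bk_j^{(h)}\right)\left\langle\bu,\bv_j^{(h)}\right\rangle
    +
    \sum_{j\notin\mathcal I} g\!\left(\bq_i^{(h)},\bk_j^{(h)}\right)\left\langle\bu,\bv_j^{(h)}\right\rangle .
\end{equation*}
We split the sum into the informative index set \(\mathcal I\) and its complement.

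For \(j\in\mathcal I\), both factors are positive and bounded below: \(g\geq c>0\) and \(\langle\bu,\bv_j^{(h)}\rangle\geq\kappa>0\). Hence each product is at least \(c\kappa\), and the first sum is at least \(c\kappa|\mathcal I|\). The hypothesis bounds the complementary sum below by \(-R\). Combining the two gives \(\langle\bu,\mathbf{a}_i^{(h)}\rangle\geq c\kappa|\mathcal I|-R\).

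Since \(|t|\geq t\) and \(|t|\geq 0\) for any real \(t\), we get \(|\langle\bu,\mathbf{a}_i^{(h)}\rangle|\geq\max\{c\kappa|\mathcal I|-R,0\}\). Dividing by \(\sigma>0\) yields the claimed bound on \(\mathrm{SNR}_{\bu}\).

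It remains to justify that \(\sigma\) is the correct normaliser. Because \(\bu\) is a unit vector, the projection \(\langle\bu,\boldsymbol{\xi}\rangle\) of isotropic noise \(\boldsymbol{\xi}\sim\mathcal N(\mathbf 0,\sigma^2I_{d_h})\) is distributed as \(\mathcal N(0,\sigma^2)\). So \(\sigma\) is exactly the noise standard deviation along \(\bu\), and the quotient is a directional signal-to-noise ratio.

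There is no real obstacle. The only care needed is the sign handling: the term-wise bound on \(\mathcal I\) requires both factors to be positive there, and the \(\max\{\cdot,0\}\) covers the case where the lower bound \(c\kappa|\mathcal I|-R\) is negative.
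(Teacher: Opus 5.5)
Your proposal is correct and follows essentially the same route as the paper's proof. Both split $\langle\bu,\mathbf{a}_i^{(h)}\rangle$ over $\mathcal{I}$ and its complement, bound each term in $\mathcal{I}$ below by $c\kappa$ and the remainder below by $-R$, pass to $|\cdot|$ via $\max\{c\kappa|\mathcal{I}|-R,0\}$, and note that $\langle\bu,\boldsymbol{\xi}\rangle\sim\mathcal{N}(0,\sigma^2)$ because $\bu$ is a unit vector.
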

\begin{proof}
Taking the inner product of \cref{eq:dp-mhca-head} with \(\bu\) gives
\begin{equation}
\begin{aligned}
    \left\langle
        \bu, \mathbf{a}_i^{(h)}
    \right\rangle
    &=
    \sum_{j=1}^{n_c}
    g\!\left(\bq_i^{(h)}, \bk_j^{(h)}\right)
    \left\langle \bu, \bv_j^{(h)} \right\rangle \\
    &=
    \sum_{j\in \mathcal{I}}
    g\!\left(\bq_i^{(h)}, \bk_j^{(h)}\right)
    \left\langle \bu, \bv_j^{(h)} \right\rangle
    +
    \sum_{j\notin \mathcal{I}}
    g\!\left(\bq_i^{(h)}, \bk_j^{(h)}\right)
    \left\langle \bu, \bv_j^{(h)} \right\rangle.
\end{aligned}
\end{equation}
For every \(j \in \mathcal{I}\), the corresponding summand is at least \(c\kappa\).
Hence
\begin{equation}
    \left\langle
        \bu, \mathbf{a}_i^{(h)}
    \right\rangle
    \geq
    \sum_{j\in \mathcal{I}} c\kappa - R
    =
    c\kappa |\mathcal{I}| - R.
\end{equation}
If isotropic Gaussian noise \(\boldsymbol{\xi} \sim \mathcal{N}(\mathbf{0},\sigma^2 I_{d_h})\) is then added, the projected noise satisfies \(\langle \bu,\boldsymbol{\xi}\rangle \sim \mathcal{N}(0,\sigma^2)\) because \(\|\bu\|_2=1\).
Therefore,
\begin{equation}
    |\langle \bu, \mathbf{a}_i^{(h)} \rangle|
    \geq
    \max\{c\kappa |\mathcal{I}| - R,0\},
\end{equation}
which yields the stated lower bound on \(\mathrm{SNR}_{\bu}\).
\end{proof}

The theorem formalises when additional context is useful: rows whose values align with the query direction reinforce one another, whereas the remaining rows can contribute only the bounded offset \(R\).
Thus, the directional signal grows with the number of aligned rows while the noise level is fixed by the privacy mechanism.

Together, the sensitivity and SNR results show that, when a fixed fraction of context rows is aligned with a query and \(R\) remains bounded independently of \(n_c\), the directional signal grows linearly with the number of informative rows without increasing the required DP noise scale.

\subsection{Reference Implementation}\label{app:dp-mhca-reference-implementation}

The following listing gives the complete private summary module of the current PrivTab implementation, starting from already embedded context rows.
The \texttt{PrivateSummaryModule} class initialises 128 learned pseudo-tokens, applies three private cross-attention and latent self-attention blocks, then applies two further latent self-attention blocks.
It returns all five summary states used by the target-side predictor.
Its \texttt{DPMHCA} class uses one 256-dimensional head with bounded \(\tanh\) attention, Gaussian perturbation, and optional post-noise normalisation; the noisy output is returned directly without an output projection.
The module calibrates the per-layer noise standard deviation from the supplied positive \(\mu\) using \cref{thm:dp-mhca-sensitivity,thm:dp-tnp-gdp}.
The privacy guarantee assumes fixed model weights, independently embedded context rows, and fresh independent standard Gaussian noise at every private layer.
Dataset-level preprocessing is outside this listing and must use public or separately privatised statistics.
Training uses the default PyTorch noise sampler; private releases supply a secure Gaussian sampler.

\begingroup
\fontencoding{T1}\selectfont
\begin{Verbatim}[breaklines,breakanywhere,fontsize=\scriptsize,frame=single]
"""PrivTab private summary module for already embedded context rows.

The module uses three DP-MHCA layers and two further self-attention layers.
Context embeddings must be row-wise and model weights fixed for the stated
privacy guarantee. The noise sampler must return fresh independent standard
Gaussians on each call. For a private release, pass a secure sampler; the
default PyTorch sampler matches training and benchmark evaluation.
"""

import math

import torch
from torch import nn
from torch.nn import functional as F


class DPMHCA(nn.Module):
    """Bounded cross-attention followed by Gaussian noise."""

    def __init__(self):
        super().__init__()
        self.to_q = nn.Linear(256, 256, bias=False)
        self.to_k = nn.Linear(256, 256, bias=False)
        self.to_v = nn.Linear(256, 256, bias=False)

    def forward(self, xq, xkv, sigma, normalize=False, noise_sampler=torch.randn_like):
        """Return private summaries; ``sigma`` has one entry per batch item."""
        q = self.to_q(xq)
        k = self.to_k(xkv)
        v = F.normalize(self.to_v(xkv), p=2, dim=-1)

        out = torch.tanh((q @ k.transpose(-1, -2)) * 256**-0.5) @ v
        noise = noise_sampler(out)
        out = out + noise * sigma[:, None, None]
        if normalize:
            norm = torch.linalg.vector_norm(out, dim=-1, keepdim=True).amax(
                dim=1, keepdim=True
            )
            out = out / (norm + 1e-6) * 512
        return out


class SelfAttention(nn.Module):
    """Standard four-head attention on the summary tokens."""

    def __init__(self):
        super().__init__()
        self.to_q = nn.Linear(256, 256, bias=False)
        self.to_k = nn.Linear(256, 256, bias=False)
        self.to_v = nn.Linear(256, 256, bias=False)
        self.to_out = nn.Sequential(nn.Linear(256, 256), nn.Dropout(0))

    def forward(self, xq, xkv, sigma=None, normalize=False, noise_sampler=None):
        def heads(x):
            return x.reshape(x.shape[0], x.shape[1], 4, 64).transpose(1, 2)

        q = heads(self.to_q(xq))
        k = heads(self.to_k(xkv))
        v = heads(self.to_v(xkv))
        out = F.scaled_dot_product_attention(q, k, v, scale=64**-0.5)
        out = out.transpose(1, 2).reshape(xq.shape[0], xq.shape[1], 256)
        return self.to_out(out)


class SwiGLU(nn.Module):
    def __init__(self):
        super().__init__()
        self.proj_a = nn.Linear(256, 512)
        self.proj_b = nn.Linear(256, 512)

    def forward(self, x):
        return F.silu(self.proj_a(x)) * self.proj_b(x)


class AttentionLayer(nn.Module):
    """Pre-normalized attention and feed-forward residual block."""

    def __init__(self, attention):
        super().__init__()
        self.attn = attention
        self.ff_block = nn.Sequential(SwiGLU(), nn.Dropout(0), nn.Linear(512, 256), nn.Dropout(0))
        self.norm1 = nn.LayerNorm(256)
        self.norm2 = nn.LayerNorm(256)

    def forward(self, xq, xkv=None, sigma=None, normalize=False, noise_sampler=None):
        xkv = xq if xkv is None else xkv
        xq = xq + self.attn(self.norm1(xq), self.norm1(xkv), sigma, normalize, noise_sampler)
        return xq + self.ff_block(self.norm2(xq))


class PrivateSummaryModule(nn.Module):
    """Return the five summary states consumed by the target-side predictor."""

    def __init__(self):
        super().__init__()
        self.latents = nn.Parameter(torch.randn(128, 256))
        self.mhca_ctoq_layers = nn.ModuleList([AttentionLayer(DPMHCA()) for _ in range(3)])
        self.mhsa_layers = nn.ModuleList([AttentionLayer(SelfAttention()) for _ in range(3)])
        self.post_processing_mhsa_layers = nn.ModuleList(
            [AttentionLayer(SelfAttention()) for _ in range(2)]
        )

    def forward(self, context, mu, normalize=False, noise_sampler=torch.randn_like):
        """Summarize embedded context; ``mu`` has one GDP budget per batch item."""
        sigma = 2 * math.sqrt(128 * 3) / mu
        latents = self.latents.unsqueeze(0).expand(context.shape[0], -1, -1)
        states = []
        for private, self_attention in zip(self.mhca_ctoq_layers, self.mhsa_layers):
            latents = self_attention(
                private(latents, context, sigma, normalize, noise_sampler)
            )
            states.append(latents)
        for self_attention in self.post_processing_mhsa_layers:
            latents = self_attention(latents)
            states.append(latents)
        return torch.stack(states, dim=1)
\end{Verbatim}
\endgroup

\subsection{PrivTab Architecture}

PrivTab compresses the private context dataset $\calD_c$ into five fixed-size privatised summary states using privacy-preserving attention layers.
We denote their collection by $\tilde S$ and the state after layer $\ell$ by $\tilde{\bs}_{1:m}^{(\ell)}$.
The model then classifies public target queries using only non-private post-processing.
The detailed architecture is shown in \cref{fig:architecture}.
It has three components: transformer building blocks, a Perceiver-style private encoder, and an output decoder.

\subsubsection{Transformer Building Blocks}

We use a pre-normalisation configuration \citep{Ba2016LayerNorm}, in which Layer Normalisation ($\operatorname{LN}$) is applied before each attention or feed-forward block and the residual connection is added afterwards.

For standard multi-head self-attention ($\operatorname{MHSA}$) and multi-head cross-attention ($\operatorname{MHCA}$), we use the standard transformer formulation~\citep{Vaswani2017Transformers}.
Let $\mathbf{r}_{1:m}$ denote a sequence of query tokens and let $\mathbf{z}_{1:n}$ denote a sequence of key-value tokens, all in $\mathbb{R}^{d_{\mathrm{tok}}}$.
For multi-head cross-attention, the updated representation for query token $i \in \{1,\ldots,m\}$ is computed head-by-head and then concatenated:
\begin{align}
    \operatorname{MHCA}\left(\mathbf{r}_i; \mathbf{z}_{1:n}\right) &= \mathbf{W}_O \left[\mathbf{a}_i^{(1)}, \ldots, \mathbf{a}_i^{(H)}\right] + \mathbf{b}_O \in \mathbb{R}^{d_{\mathrm{tok}}},
\end{align}
where $\mathbf{W}_O \in \mathbb{R}^{d_{\mathrm{tok}} \times H d_h}$ and $\mathbf{b}_O \in \mathbb{R}^{d_{\mathrm{tok}}}$ are the learnable output projection parameters.
Each head output $\mathbf{a}_i^{(h)} \in \mathbb{R}^{d_h}$ for $h=1,\ldots,H$ is given by
\begin{align}
    \mathbf{a}_i^{(h)} &= \sum_{j=1}^n \alpha_{i,j}^{(h)} \mathbf{v}_j^{(h)},
\end{align}
with key, query, and value projections:
\begin{align}
    \mathbf{q}_i^{(h)} = \mathbf{W}_Q^{(h)} \mathbf{r}_i, \qquad
    \mathbf{k}_j^{(h)} = \mathbf{W}_K^{(h)} \mathbf{z}_j, \qquad
    \mathbf{v}_j^{(h)} = \mathbf{W}_V^{(h)} \mathbf{z}_j,
\end{align}
using learnable projections $\mathbf{W}_Q^{(h)}, \mathbf{W}_K^{(h)}, \mathbf{W}_V^{(h)} \in \mathbb{R}^{d_h \times d_{\mathrm{tok}}}$.
The attention weights $\alpha_{i,j}^{(h)}$ are
\begin{align}
    \alpha_{i,j}^{(h)} &= \frac{\exp\left(\frac{\langle \mathbf{q}_i^{(h)}, \mathbf{k}_j^{(h)} \rangle}{\sqrt{d_h}}\right)}{\sum_{j'=1}^n \exp\left(\frac{\langle \mathbf{q}_i^{(h)}, \mathbf{k}_{j'}^{(h)} \rangle}{\sqrt{d_h}}\right)}.
\end{align}
Multi-head self-attention is the special case in which the queries and key-values coincide:
\begin{align}
    \operatorname{MHSA}\left(\mathbf{r}_i; \mathbf{r}_{1:m}\right) &= \operatorname{MHCA}\left(\mathbf{r}_i; \mathbf{r}_{1:m}\right).
\end{align}

For the token-wise feed-forward network ($\operatorname{FFN}$), we use the SwiGLU variant of the gated linear unit \citep{Shazeer2020GLU}.
For a token representation $\mathbf{x} \in \mathbb{R}^{d_{\mathrm{tok}}}$, the SwiGLU activation and resulting FFN output are
\begin{align}
    \operatorname{SwiGLU}(\mathbf{x}) &= \operatorname{SiLU}\left(\mathbf{W}_a^\top \mathbf{x} + \mathbf{b}_a\right) \odot \left(\mathbf{W}_b^\top \mathbf{x} + \mathbf{b}_b\right), \\
    \operatorname{FFN}(\mathbf{x}) &= \mathbf{W}_o^\top \operatorname{SwiGLU}(\mathbf{x}) + \mathbf{b}_o,
\end{align}
where $\operatorname{SiLU}(\mathbf{t}) = \mathbf{t} \odot \operatorname{sigmoid}(\mathbf{t})$ is the sigmoid linear unit, with $\operatorname{sigmoid}(t)=(1+e^{-t})^{-1}$ applied coordinate-wise; $\mathbf{W}_a, \mathbf{W}_b \in \mathbb{R}^{d_{\mathrm{tok}} \times d_{\mathrm{ffn}}}$ and $\mathbf{W}_o \in \mathbb{R}^{d_{\mathrm{ffn}} \times d_{\mathrm{tok}}}$ are learnable projection matrices, $\mathbf{b}_a, \mathbf{b}_b \in \mathbb{R}^{d_{\mathrm{ffn}}}$ and $\mathbf{b}_o \in \mathbb{R}^{d_{\mathrm{tok}}}$ are bias vectors, and $\odot$ denotes the Hadamard product.
In all transformer feed-forward networks, we set $d_{\mathrm{ffn}} = 2d_{\mathrm{tok}}$, so the hidden dimension uses a $2\times$ expansion.

\subsubsection{Private Perceiver Encoder}

The private Perceiver encoder maps the context tokens $\bz_{1:n_c}^c$ and initial target tokens $\bz_{1:n_t}^{t,(0)} = \bz_{1:n_t}^t$ to updated target representations using $L_{\mathrm{main}} = 3$ main layers and $L_{\mathrm{post}} = 2$ post-processing layers.
In the benchmark configuration, the latent bottleneck contains $m=128$ learned pseudo-tokens.
In standard MHSA and MHCA, we use \(H_{\mathrm{std}}=4\) heads with per-head dimension \(d_{h,\mathrm{std}}=64\), so the internal attention width is \(H_{\mathrm{std}}d_{h,\mathrm{std}}=256\).
In DP-MHCA, we use \(H_{\mathrm{DP}}=1\) head with per-head dimension \(d_{h,\mathrm{DP}}=256\), so \(H_{\mathrm{DP}}d_{h,\mathrm{DP}}=256\).
The generic \(H\) and \(d_h\) in the DP-MHCA definitions and theorems above refer to \(H_{\mathrm{DP}}\) and \(d_{h,\mathrm{DP}}\), respectively.
Because the row embedding model is applied row-wise to the private context dataset $\calD_c$, a single-row change between adjacent context datasets affects at most one context token in $\bz_{1:n_c}^c$.
The DP-MHCA block is therefore the component responsible for controlling the sensitivity of the privatised summary.
We initialise the summary representations using the learned pseudo-tokens $\bu_{1:m}$:
\begin{equation}
    \tilde{\bs}_i^{(0)} = \bu_i, \qquad i = 1, \ldots, m.
\end{equation}

The complete blocks used in Methods are denoted by \(\mathrm{DP\text{-}MHCA}_{\mathrm{block}}^{(\ell)}\), \(\mathrm{MHSA}_{\mathrm{block}}^{(\ell)}\), and \(\mathrm{MHCA}_{\mathrm{block}}^{(\ell)}\).
The equations below expand these blocks into attention operations, pre-normalisation, residual updates and feed-forward networks; the attention operators \(\operatorname{DP-MHCA}\), \(\operatorname{MHSA}\), and \(\operatorname{MHCA}\) denote only the attention operations.

For each main layer $\ell = 1, \ldots, L_{\mathrm{main}}$, the summary and target tokens are updated in three stages:
\begin{enumerate}
    \item \textbf{DP Cross-Attention Update}: The summaries attend to the private context tokens:
    \begin{align}
        \bo_i^{(\ell)} &= \operatorname{DP-MHCA}\left(\operatorname{LN}\left(\tilde{\bs}_i^{(\ell-1)}\right); \operatorname{LN}\left(\bz_{1:n_c}^c\right)\right), \qquad i = 1,\ldots,m, \\
        \hat{\bs}_i^{(\ell)} &= \tilde{\bs}_i^{(\ell-1)} + \bo_i^{(\ell)}, \\
        \bs_i^{(\ell)} &= \hat{\bs}_i^{(\ell)} + \operatorname{FFN}_{\text{dp}}^{(\ell)}\left(\operatorname{LN}\left(\hat{\bs}_i^{(\ell)}\right)\right).
    \end{align}
    This DP-MHCA block is the only operation that transfers context information into the released summaries.

    \item \textbf{Latent Self-Attention Update}: The summary representations interact in the latent space:
    \begin{align}
        \mathbf{s}_{i,\text{self}}^{(\ell)} &= \operatorname{MHSA}\left(\operatorname{LN}\left(\bs_i^{(\ell)}\right); \operatorname{LN}\left(\bs_{1:m}^{(\ell)}\right)\right), \qquad i = 1,\ldots,m, \\
        \hat{\bs}_{i,\text{self}}^{(\ell)} &= \bs_i^{(\ell)} + \mathbf{s}_{i,\text{self}}^{(\ell)}, \\
        \tilde{\bs}_i^{(\ell)} &= \hat{\bs}_{i,\text{self}}^{(\ell)} + \operatorname{FFN}_{\text{self}}^{(\ell)}\left(\operatorname{LN}\left(\hat{\bs}_{i,\text{self}}^{(\ell)}\right)\right).
    \end{align}

    \item \textbf{Target Cross-Attention Update}: The target tokens attend to the updated privatised summaries:
    \begin{align}
        \mathbf{t}_j^{(\ell)} &= \operatorname{MHCA}\left(\operatorname{LN}\left(\bz_j^{t,(\ell-1)}\right); \operatorname{LN}\left(\tilde{\bs}_{1:m}^{(\ell)}\right)\right), \qquad j = 1,\ldots,n_t, \\
        \hat{\bz}_j^{t,(\ell)} &= \bz_j^{t,(\ell-1)} + \mathbf{t}_j^{(\ell)}, \\
        \bz_j^{t,(\ell)} &= \hat{\bz}_j^{t,(\ell)} + \operatorname{FFN}_{\text{tgt}}^{(\ell)}\left(\operatorname{LN}\left(\hat{\bz}_j^{t,(\ell)}\right)\right).
    \end{align}
\end{enumerate}

After the main layers, $L_{\mathrm{post}}$ additional post-processing layers refine the representations.
These layers do not access the private context set and therefore preserve the privacy guarantee established by the DP-MHCA blocks.
For each layer $\ell = L_{\mathrm{main}} + 1, \ldots, L_{\mathrm{main}} + L_{\mathrm{post}}$, we perform:
\begin{enumerate}
    \item \textbf{Latent Self-Attention Update}:
    \begin{align}
        \mathbf{s}_{i,\text{self}}^{(\ell)} &= \operatorname{MHSA}\left(\operatorname{LN}\left(\tilde{\bs}_i^{(\ell-1)}\right); \operatorname{LN}\left(\tilde{\bs}_{1:m}^{(\ell-1)}\right)\right), \qquad i = 1,\ldots,m, \\
        \hat{\bs}_i^{(\ell)} &= \tilde{\bs}_i^{(\ell-1)} + \mathbf{s}_{i,\text{self}}^{(\ell)}, \\
        \tilde{\bs}_i^{(\ell)} &= \hat{\bs}_i^{(\ell)} + \operatorname{FFN}_{\text{self}}^{(\ell)}\left(\operatorname{LN}\left(\hat{\bs}_i^{(\ell)}\right)\right).
    \end{align}

    \item \textbf{Target Cross-Attention Update}:
    \begin{align}
        \mathbf{t}_j^{(\ell)} &= \operatorname{MHCA}\left(\operatorname{LN}\left(\bz_j^{t,(\ell-1)}\right); \operatorname{LN}\left(\tilde{\bs}_{1:m}^{(\ell)}\right)\right), \qquad j = 1,\ldots,n_t, \\
        \hat{\bz}_j^{t,(\ell)} &= \bz_j^{t,(\ell-1)} + \mathbf{t}_j^{(\ell)}, \\
        \bz_j^{t,(\ell)} &= \hat{\bz}_j^{t,(\ell)} + \operatorname{FFN}_{\text{tgt}}^{(\ell)}\left(\operatorname{LN}\left(\hat{\bz}_j^{t,(\ell)}\right)\right).
    \end{align}
\end{enumerate}

\Cref{tab:layer-summary} summarises the token flow, query/key-value roles, and privacy status of the encoder sub-blocks.

\begin{table}[h]
\centering
\caption{Summary of operations and token flow in each layer $\ell$ of the PrivTab encoder.
Layer Normalisation ($\operatorname{LN}$) is applied before each operation in a pre-normalisation layout.}
\label{tab:layer-summary}
\small
\begin{tabular}{lllll}
\toprule
\textbf{Sub-block} & \textbf{Query Token(s)} & \textbf{Key-Value Token(s)} & \textbf{Updated Token(s)} & \textbf{Privacy Status} \\
\midrule
\multicolumn{5}{l}{\textit{Main Layers} ($\ell = 1, \ldots, L_{\mathrm{main}}$)} \\
DP Cross-Attention & Summaries $\tilde{\bs}_i^{(\ell-1)}$ & Context $\bz_j^c$ & $\bs_i^{(\ell)}$ & \textbf{Private} (DP-MHCA) \\
Latent Self-Attention & Summaries $\bs_i^{(\ell)}$ & Summaries $\bs_{1:m}^{(\ell)}$ & $\tilde{\bs}_i^{(\ell)}$ & Public (post-processing) \\
Target Cross-Attention & Targets $\bz_j^{t,(\ell-1)}$ & Summaries $\tilde{\bs}_{1:m}^{(\ell)}$ & $\bz_j^{t,(\ell)}$ & Public (post-processing) \\
\midrule
\multicolumn{5}{l}{\textit{Post-Processing Layers} ($\ell = L_{\mathrm{main}} + 1, \ldots, L_{\mathrm{main}} + L_{\mathrm{post}}$)} \\
Latent Self-Attention & Summaries $\tilde{\bs}_i^{(\ell-1)}$ & Summaries $\tilde{\bs}_{1:m}^{(\ell-1)}$ & $\tilde{\bs}_i^{(\ell)}$ & Public (post-processing) \\
Target Cross-Attention & Targets $\bz_j^{t,(\ell-1)}$ & Summaries $\tilde{\bs}_{1:m}^{(\ell)}$ & $\bz_j^{t,(\ell)}$ & Public (post-processing) \\
\bottomrule
\end{tabular}
\end{table}

\subsubsection{Output Decoder}

Let $L = L_{\mathrm{main}} + L_{\mathrm{post}}$ denote the total number of encoder layers.
The final target representations $\bz_{1:n_t}^{t,(L)}$ are decoded row-wise to obtain predictive logits.
The decoder is a multi-layer perceptron with GELU activations \citep{Hendrycks2016GELU}:
\begin{align}
    \mathbf{h}_j^{(1)} &= \operatorname{GELU}\left(\mathbf{W}_{d,1}^{\top}\bz_j^{t,(L)} + \mathbf{b}_{d,1}\right), \\
    \mathbf{h}_j^{(2)} &= \operatorname{GELU}\left(\mathbf{W}_{d,2}^{\top}\mathbf{h}_j^{(1)} + \mathbf{b}_{d,2}\right), \\
    \hat{\mathbf{y}}_j &= \mathbf{W}_{d,3}^{\top}\mathbf{h}_j^{(2)} + \mathbf{b}_{d,3},
\end{align}
where $\hat{\mathbf{y}}_j \in \mathbb{R}^{C_{\mathrm{max}}}$ are the raw output logits for target query $j$, $\mathbf{W}_{d,1} \in \mathbb{R}^{d_{\mathrm{tok}} \times d_{\mathrm{dec}}}$, $\mathbf{W}_{d,2} \in \mathbb{R}^{d_{\mathrm{dec}} \times d_{\mathrm{dec}}}$, and $\mathbf{W}_{d,3} \in \mathbb{R}^{d_{\mathrm{dec}} \times C_{\mathrm{max}}}$ are learnable weight matrices, and $\mathbf{b}_{d,1}, \mathbf{b}_{d,2}, \mathbf{b}_{d,3}$ are bias vectors.
We set $d_{\mathrm{dec}}=512$.
If a dataset has $C \le C_{\mathrm{max}}$ classes, only the first $C$ coordinates of $\hat{\mathbf{y}}_j$ are used.

\FloatBarrier
\section{PrivTab Pretraining}\label{app:dp-tnp-pretraining}\label{sec:dp-tnp-pretraining}\label{app:dp-tnp-pretraining-details}

\subsection{Noise-Aware Pretraining Objective}
We pretrain PrivTab by meta-learning across synthetic tabular classification tasks.
Each task \(\btau\) comprises a labelled context set \(\calD_c\) and a target set \(\calD_t\).
The DP-MHCA-based private encoder \(\calA_{\bphi}\) maps \(\calD_c\) to a released privatised summary \(\tilde{S}\), and the predictive network \(q_{\bpsi}\) maps \((\bx_{1:n_t}^{t},\tilde S)\) to a distribution over the target labels.
We use \(\bphi\) for parameters that determine the private summary and \(\bpsi\) for parameters that determine predictions.
The row embedding network is shared between context and query paths, so these parameter collections overlap in those weights and are trained jointly by minimising target cross-entropy, equivalently the population private negative log-likelihood
\begin{equation}
    \mathcal{L}^{\mathrm{priv}}_{\mathrm{NLL}}(\bpsi;\bphi)
    :=
    \mathbb{E}_{p_{\bphi}(\btau,\tilde{S})}
    \left[
        -\log q_{\bpsi}
        \left(
            y_{1:n_t}^{t}
            \mid
            \bx_{1:n_t}^{t},
            \tilde{S}
        \right)
    \right].
\end{equation}
As shown in \cref{app:noise-aware-amortized-inference}, this is a noise-aware objective: for a fixed private encoder, minimising it fits the posterior predictive distribution conditioned on the privatised summary rather than a predictor that ignores the privacy noise.

Let \(N_t(\btau) := |\mathcal{D}_t|\) denote the target-set size, and let \(p_{N_t}\) be its induced distribution under the task simulator.
For each fixed \(n_t\), let \(p_{\bphi}^{(n_t)}(\btau,\tilde{S})\) denote the joint law of \((\btau,\tilde{S})\) conditional on \(N_t=n_t\).
Applying \cref{thm:nll-noise-aware-predictive-kl} conditionally on \(N_t=n_t\) and then averaging over \(N_t\) gives
\begin{equation}
    \mathcal{L}^{\mathrm{priv}}_{\mathrm{NLL}}(\bpsi;\bphi)
    =
    \sum_{n_t \geq 1}
    p_{N_t}(n_t)
    \left(
        \mathrm{const}(\bphi,n_t)
        +
        \mathbb{E}_{p_{\bphi}^{(n_t)}(\btau,\tilde{S})}
        \left[
            \mathrm{KL}
            \left(
                p_{\bphi}(\cdot \mid \bx_{1:n_t}^{t},\tilde{S})
                \,\middle\|\,
                q_{\bpsi}(\cdot \mid \bx_{1:n_t}^{t},\tilde{S})
            \right)
        \right]
    \right).
\end{equation}
Thus, when \(\bphi\) is fixed, optimizing over \(\bpsi\) amounts to fitting \(q_{\bpsi}\) to the noise-aware posterior predictive induced by \(\calA_{\bphi}\).
Joint training is therefore the optimisation problem
\begin{equation}
    \min_{\bphi,\bpsi}
    \mathcal{L}^{\mathrm{priv}}_{\mathrm{NLL}}(\bpsi;\bphi),
\end{equation}
in which \(\bphi\) changes both the distribution of the released summary, through \(p_{\bphi}(\btau,\tilde{S})\), and the corresponding family of conditional noise-aware posterior predictives \(p_{\bphi}(\cdot \mid \bx_{1:n_t}^{t},\tilde{S})\).
A useful equivalent view is the profile objective
\begin{equation}
    \mathcal{J}(\bphi)
    :=
    \inf_{\bpsi}
    \mathcal{L}^{\mathrm{priv}}_{\mathrm{NLL}}(\bpsi;\bphi).
\end{equation}
This means that \(\bphi\) is chosen to define a private mechanism whose induced noise-aware posterior predictive is both informative for prediction and well approximated by the predictive family \(q_{\bpsi}\).

In practice, we first update the model jointly and then freeze the entire encoder late in training, including the shared row embedding, private summary blocks and query-side attention.
Only the output decoder remains trainable in this final stage, refining predictions for the fixed representations learned earlier.

\subsection{Simulator, Curriculum, and Optimisation}

To instantiate this objective, we draw the synthetic tasks from the mixed structural causal model (SCM) simulator introduced in TabICL \citep{Qu2025TabICL}.
Each generated task supplies the labelled context and target sets used by the objective above.

\paragraph{Synthetic Tabular Data Simulator.}
For each simulated task, we generate a synthetic tabular classification dataset:
\begin{align}
    \mathcal{D}_c &= \left\{ \left(\mathbf{x}_i^c, y_i^c\right) \right\}_{i=1}^{n_c}, \\
    \mathcal{D}_t &= \left\{ \left(\mathbf{x}_j^t, y_j^t\right) \right\}_{j=1}^{n_t},
\end{align}
where the number of active features $d$ is sampled uniformly from $[2, 120]$ and the input feature dimension is padded to $d_x = 120$.
The number of target classes $C$ is sampled uniformly up to $C_{\mathrm{max}} = 10$.
The context size $n_c$ and target size $n_t$ are sampled independently and uniformly from $[N_{\mathrm{min}}, N_{\mathrm{max}}]$ for each task.
To improve optimisation stability and sample efficiency, we employ a two-stage training curriculum that adjusts the range $[N_{\mathrm{min}}, N_{\mathrm{max}}]$:
\begin{enumerate}
    \item \textbf{Stage 1}: For the first $128{,}000$ steps, we set $N_{\mathrm{min}} = 100$ and $N_{\mathrm{max}} = 2048$.
In this stage, the scale normalisation in \cref{eq:dp-mhca-normalization} is disabled.
    \item \textbf{Stage 2}: For training steps $128{,}000$ to $256{,}000$, we set $N_{\mathrm{min}} = 1024$ and $N_{\mathrm{max}} = 8192$, and enable the scale normalisation in \cref{eq:dp-mhca-normalization}.
\end{enumerate}
This curriculum enables the model to first stabilise its representation learning on smaller datasets before adapting to larger context scales.

To optimise performance across different dataset scales, we train two separate model variants: one trained solely on Stage 1 tasks for an additional $256{,}000$ steps (yielding $384{,}000$ total steps on smaller contexts), and another trained sequentially on Stage 1 and Stage 2 tasks.
The small-context variant is trained without summary normalisation, which is enabled at inference; the large-context variant enables it during Stage 2 and at inference.
At inference time, we route queries dynamically based on the context size: the smaller-context model is used for datasets with $n_c < 4096$, while the large-context model is used for datasets with $n_c \geq 4096$.

\paragraph{Privacy Parameter Curriculum.}
To train the model across a wide range of privacy levels, each simulated task is assigned a privacy parameter $\mu$ according to a log-uniform curriculum during meta-training.
The overall range is $[\mu_{\mathrm{min}}, \mu_{\mathrm{max}}] = [0.16,64.0]$.
To ease optimisation at the start of training, we initialise the curriculum with minimal privacy noise by setting the lower bound to $\mu_{\mathrm{min}}^{(\mathrm{curr})} = 64.0$.
From steps $256$ to $38{,}400$, the lower bound $\mu_{\mathrm{min}}^{(\mathrm{curr})}$ decreases log-linearly from $64.0$ to $0.16$, while the upper bound remains fixed at $64.0$.
Afterwards, the curriculum reaches its final state and $\mu$ is sampled according to
\begin{equation}
    \log \mu = u \left(\log \mu_{\mathrm{max}} - \log \mu_{\mathrm{min}}\right) + \log \mu_{\mathrm{min}},
\end{equation}
where $u \sim \operatorname{Uniform}(0, 1)$.
Equivalently, in the final regime \(\mu\) is sampled from the density
\begin{equation}
    p(\mu)
    =
    \frac{1}{\mu\left(\log \mu_{\mathrm{max}} - \log \mu_{\mathrm{min}}\right)}
    \mathbf{1}_{[\mu_{\mathrm{min}},\,\mu_{\mathrm{max}}]}(\mu).
\end{equation}
The resulting pretraining objective averages the private negative log-likelihood over both target sizes and privacy levels.
Applying \cref{thm:nll-noise-aware-predictive-kl} pointwise in \((\mu,n_t)\) shows that, in the final regime, training minimises the KL divergence to the corresponding noise-aware posterior predictive on average over both tasks and privacy levels.
The derivation below makes this averaging explicit.
The curriculum therefore exposes the model first to lower-noise tasks and gradually introduces more challenging high-noise settings.

\paragraph{Objective across privacy levels and target sizes.}
The resulting pretraining objective is
\begin{equation}
    \mathcal{L}^{\mathrm{priv}}_{\mathrm{NLL}}(\bpsi;\bphi)
    =
    \int_{\mu_{\mathrm{min}}}^{\mu_{\mathrm{max}}}
    \sum_{n_t \geq 1}
    \mathcal{L}^{\mathrm{priv}}_{\mathrm{NLL}}(\bpsi;\bphi,\mu,n_t)\,
    p_{N_t}(n_t)
    p(\mu)\,\mathrm{d}\mu,
\end{equation}
where
\begin{equation}
    \mathcal{L}^{\mathrm{priv}}_{\mathrm{NLL}}(\bpsi;\bphi,\mu,n_t)
    :=
    \mathbb{E}_{p_{\bphi,\mu}^{(n_t)}(\btau,\tilde{S})}
    \left[
        -\log q_{\bpsi}
        \left(
            y_{1:n_t}^{t}
            \mid
            \bx_{1:n_t}^{t},
            \tilde{S}
        \right)
    \right].
\end{equation}
Here \(p_{\bphi,\mu}^{(n_t)}(\btau,\tilde{S})\) denotes the joint law conditional on both the privacy level \(\mu\) and the target size \(N_t=n_t\).
Applying \cref{thm:nll-noise-aware-predictive-kl} for each fixed pair \((\mu,n_t)\) gives
\begin{equation}
    \begin{split}
    \mathcal{L}^{\mathrm{priv}}_{\mathrm{NLL}}(\bpsi;\bphi)
    =
    &\int_{\mu_{\mathrm{min}}}^{\mu_{\mathrm{max}}}
    \sum_{n_t \geq 1}
    \left(
        \mathrm{const}(\bphi,\mu,n_t)
        \right.
        \\
    &\qquad\left.
        +
        \mathbb{E}_{p_{\bphi,\mu}^{(n_t)}(\btau,\tilde{S})}
        \left[
            \mathrm{KL}
            \left(
                p_{\bphi,\mu}(\cdot \mid \bx_{1:n_t}^{t},\tilde{S})
                \,\middle\|\,
                q_{\bpsi}(\cdot \mid \bx_{1:n_t}^{t},\tilde{S})
            \right)
        \right]
    \right)
    p_{N_t}(n_t)\, p(\mu)\,\mathrm{d}\mu,
    \end{split}
\end{equation}
where \(\mathrm{const}(\bphi,\mu,n_t)\) does not depend on \(\bpsi\).
Equivalently, writing
\begin{equation}
    \overline{\mathrm{const}}(\bphi)
    :=
    \int_{\mu_{\mathrm{min}}}^{\mu_{\mathrm{max}}}
    \sum_{n_t \geq 1}
    \mathrm{const}(\bphi,\mu,n_t)\,
    p_{N_t}(n_t)\,
    p(\mu)\,\mathrm{d}\mu,
\end{equation}
we obtain
\begin{equation}
    \begin{split}
    \mathcal{L}^{\mathrm{priv}}_{\mathrm{NLL}}(\bpsi;\bphi)
    =
    &\overline{\mathrm{const}}(\bphi)
    +
    \int_{\mu_{\mathrm{min}}}^{\mu_{\mathrm{max}}}
    \sum_{n_t \geq 1}
    \\
    &\qquad
    \mathbb{E}_{p_{\bphi,\mu}^{(n_t)}(\btau,\tilde{S})}
    \left[
        \mathrm{KL}
        \left(
            p_{\bphi,\mu}(\cdot \mid \bx_{1:n_t}^{t},\tilde{S})
            \,\middle\|\,
            q_{\bpsi}(\cdot \mid \bx_{1:n_t}^{t},\tilde{S})
        \right)
    \right]
    p_{N_t}(n_t)\, p(\mu)\,\mathrm{d}\mu.
    \end{split}
\end{equation}

\paragraph{Optimisation and Loss Function.}
The model receives the context set $\calD_c$ and target features $\mathbf{x}_{1:n_t}^t$ as inputs.
The training loss is the cross-entropy, equivalently the negative log-likelihood, computed on the target labels $y_{1:n_t}^t$ for the active classes.
We apply label smoothing \citep{Szegedy2016Rethinking} with smoothing factor $0.1$.
The theoretical result in \Cref{thm:nll-noise-aware-predictive-kl} characterises the corresponding unsmoothed NLL objective; label smoothing is used only as a training regulariser.

We optimise with AdamW \citep{Loshchilov2017AdamW} using peak learning rate $10^{-4}$, weight decay $10^{-3}$, $\beta_1 = 0.9$, $\beta_2 = 0.98$, and $\epsilon = 10^{-8}$.
We apply gradient clipping with maximum norm $1.0$.
The learning rate follows a warmup-cosine-constant schedule: it increases linearly from $0$ to $10^{-4}$ over the first $6400$ steps, decays with a cosine schedule to $\eta_{\mathrm{min}} = 10^{-5}$ over $51{,}200$ optimiser steps, and remains constant at $10^{-5}$ thereafter.

The large-context variant is trained for $256{,}000$ total steps with a batch size of 128 tasks (datasets).
At step $243{,}200$, we freeze the entire encoder and continue training only the output decoder for the remaining $12{,}800$ steps.
The small-context variant is trained for $384{,}000$ total steps; its final encoder-freezing stage starts at step $371{,}200$, also leaving $12{,}800$ steps to refine the decoder.
These final stages stabilise the predictive approximation relative to the learned privatised representation.

\FloatBarrier
\section{Baseline Details}\label{app:baseline-details}

\subsection{Accounting for DP Stochastic Optimisation under Substitute Adjacency}
\label{app:dp-stochastic-optimisation-accounting}

For DP stochastic optimisation \citep{Song2013DPSGD,Bassily2014DPEmpiricalRisk,Abadi2016DPSGD}, privacy is defined at the row level, so we use the same substitute adjacency relation as in the rest of the paper.
Let \(\calD=\{\bz_i\}_{i=1}^{N}\) denote the training set, let \(q\) be the Poisson subsampling rate, let \(\bar B=\max\{1,\lfloor qN\rfloor\}\) be the normalisation based on the expected minibatch size, let \(T\) be the number of optimisation steps, let \(C>0\) be the clipping norm, and let \(\sigma>0\) be the standard deviation of the noise added to the clipped gradient sum.
At each step \(t\), DP Adam samples a minibatch \(B_t \subseteq \calD\), clips each per-example gradient to have \(\ell_2\)-norm at most \(C\), and applies Adam to a noisy gradient of the form
\begin{equation}
    \tilde{\mathbf{g}}_t
    =
    \frac{1}{\bar B}
    \sum_{\bz_i \in B_t}
    \operatorname{clip}_C\!\left(\nabla_{\bw}\ell(\bw_t; \bz_i)\right)
    +
    \frac{\sigma}{\bar B}\,\boldsymbol{\xi}_t,
    \qquad
    \boldsymbol{\xi}_t \sim \mathcal{N}(\mathbf{0}, \mathbf{I}).
\end{equation}

To report privacy in the same form as for PrivTab, we do not use the asymptotic \(\mu\)-GDP approximation for subsampled Gaussian-gradient mechanisms \citep{Bu2020GDP}.
Instead, we compute the privacy profile of the Poisson-subsampled Gaussian mechanism under the replace-one neighbouring relation using a numerical privacy-loss-distribution accountant~\citep{Gomez2025GaussianDP}, compose it over \(T\) steps, and then convert the resulting trade-off function \(T\) to the smallest valid non-asymptotic \(\mu\)-GDP guarantee,
\begin{equation}
    \mu^\star
    =
    \inf\left\{
    \mu \geq 0 :
    T_\mu(\alpha) \leq T(\alpha)
    \text{ for all } \alpha \in [0,1]
    \right\}.
\end{equation}
where \(T_\mu\) is the GDP trade-off function from \cref{eq:mu-gdp-def-eq-1}.
For each target privacy budget \(\mu\), the DP Adam noise standard deviation is then chosen by numerically inverting this map so that the full training procedure satisfies \(\mu\)-GDP under substitute adjacency.

\subsection{Hyperparameter Calibration and Private Selection}

Search spaces, private learning-rate grids and fixed baseline hyperparameters are reported in \cref{tab:baseline_hyperparameters,tab:dp-baseline-lr-grids,tab:dp-baseline-fixed-hypers}.

\begin{table}[h!]
\centering
\caption{Baseline hyperparameter search spaces.
Bayesian optimisation is run separately for each private baseline.}
\label{tab:baseline_hyperparameters}
\begin{tabular}{lll}
\toprule
\textbf{Baseline} & \textbf{Hyperparameter} & \textbf{Search space} \\
\midrule
DP-LR
    & Learning rate & $[10^{-4}, 10^{-2}]$ log-uniform \\
    & Batch size & $\{128,256,512,1024\}$ \\
    & Clipping norm & $\{0.1,0.5,1,2,5\}$ \\
    & Epochs & $\{100,300,500,1000\}$ \\
\midrule
DP-MLP
    & Learning rate & $[10^{-4}, 10^{-2}]$ log-uniform \\
    & Batch size & $\{128,256,512,1024\}$ \\
    & Clipping norm & $\{0.1,0.5,1,2,5\}$ \\
    & Hidden width & $\{64,128,256,512\}$ \\
    & Layers & $\{1,2,3\}$ \\
    & Epochs & $\{100,300,500,1000\}$ \\
\bottomrule
\end{tabular}
\end{table}

\begin{figure*}[!ht]
\centering
\includegraphics[width=\textwidth]{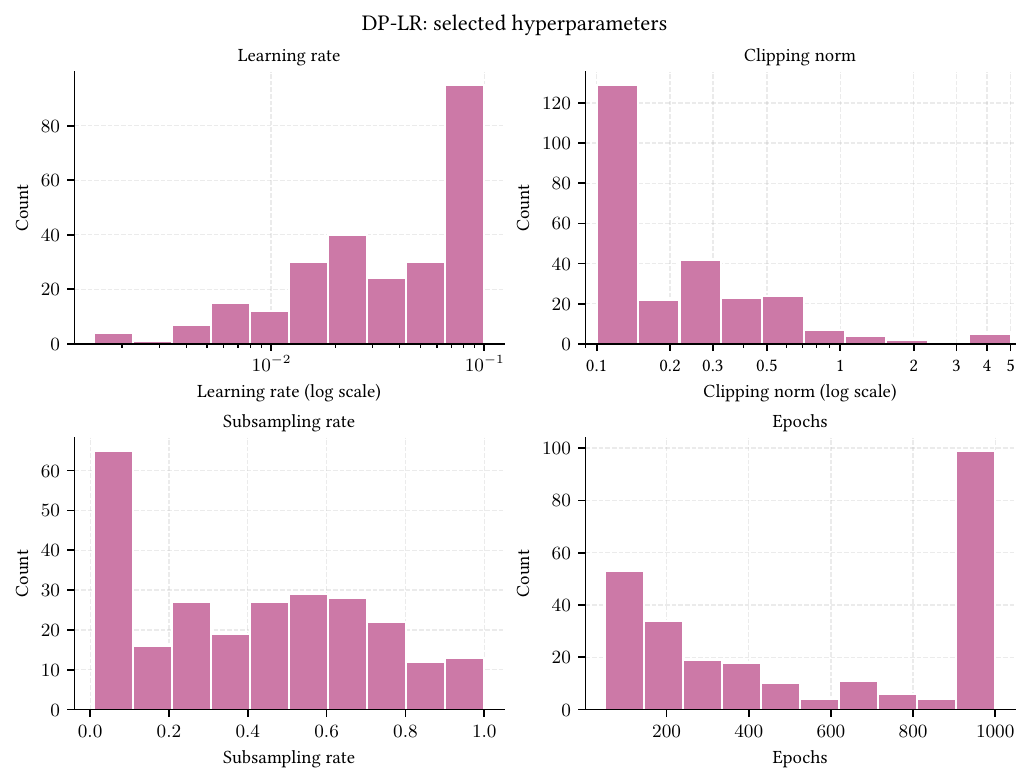}
\caption{Histogram of the hyperparameters selected for DP-LR across all datasets and context--target splits.
The selected values are concentrated enough to justify fixing the non-learning-rate hyperparameters in the final experiments.}
\label{fig:dp-logreg-hypers-tabllm}
\end{figure*}

\begin{figure*}[!ht]
\centering
\includegraphics[width=\textwidth]{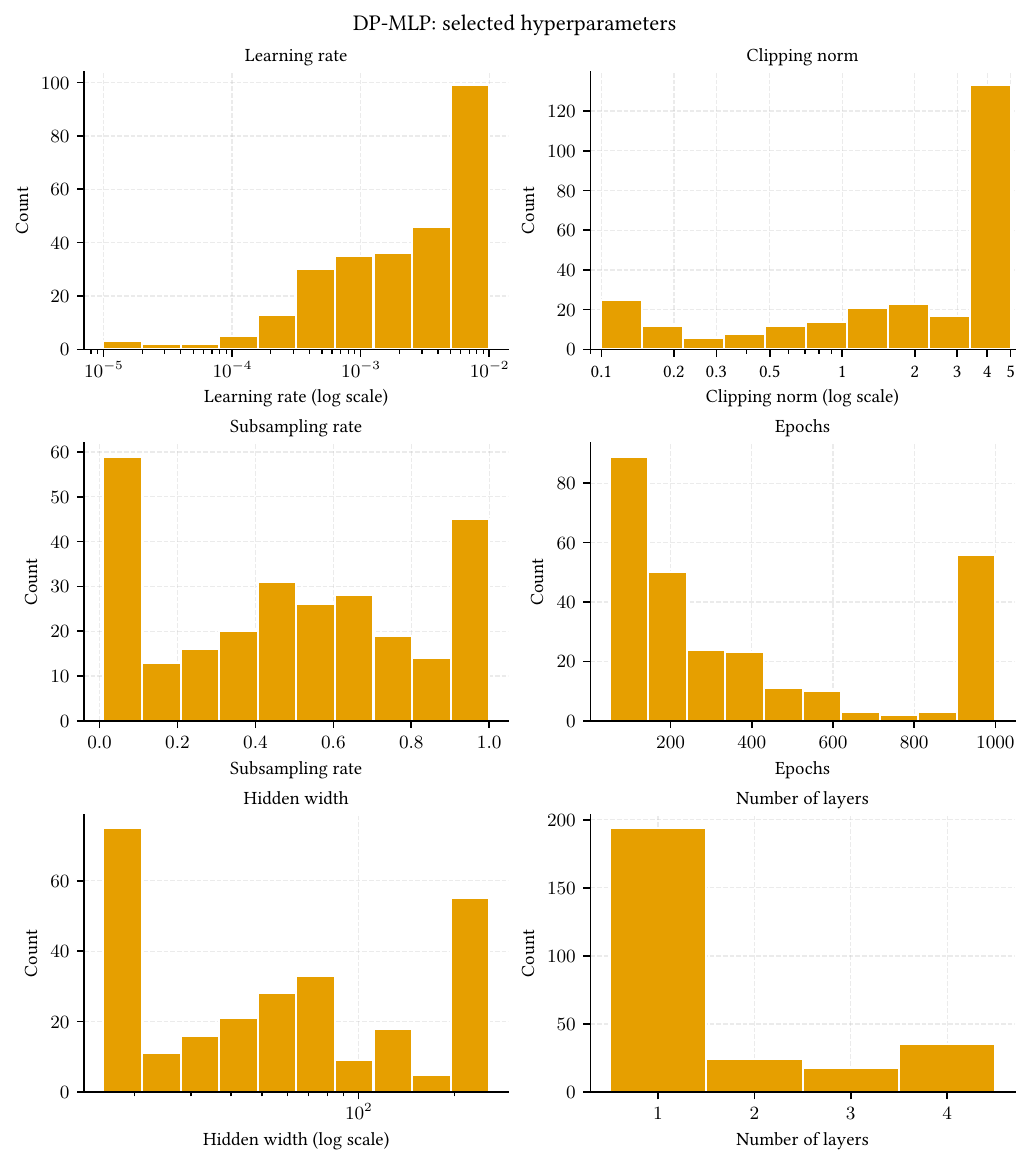}
\caption{Histogram of the hyperparameters selected for DP-MLP across all datasets and context--target splits.
As for DP-LR, the selected values are concentrated enough to justify fixing the non-learning-rate hyperparameters in the final experiments.}
\label{fig:dp-mlp-hypers-tabllm}
\end{figure*}

\paragraph{DP baseline hyperparameter selection.}
Based on \cref{fig:dp-logreg-hypers-tabllm,fig:dp-mlp-hypers-tabllm}, we fix the main hyperparameters of both baselines and tune only the learning rate under DP.
For each method, the learning rate is selected from a small grid using a private validation loss.
Concretely, after the outer 80/20 context--target split, we divide the context portion into subtraining and validation sets, using a validation fraction of $0.16$ within the context set.
Relative to the full dataset, this corresponds approximately to a $67/13/20$ split into subtraining, validation and target data.
For any single learning-rate candidate, the DP Adam training run acts only on the subtraining subset while the noisy validation-loss evaluation acts only on the validation subset, so these two components compose by parallel composition \citep{McSherry2018ParallelComp}.
Across the learning-rate grid, however, the same subtraining subset is reused for the different candidate training runs and the same validation subset is reused for the different candidate validation evaluations, so we also account for composition over all learning-rate candidates, that is, over the grid size.
After private learning-rate selection, we perform one additional final DP Adam training run on the union of the subtraining and validation sets using the selected learning rate.
In the final protocol, we reserve a fraction $0.5$ of the total \(\mu\) budget for private learning-rate selection and use the remaining budget for the final training run through \(\ell_2\)-composition in \(\mu\)-GDP.
The private validation criterion is the mean cross-entropy loss after clipping each per-example validation loss at $5.0$.

\begin{table}[!ht]
\centering
\caption{Learning-rate grids used for private model selection.
Only the learning rate is selected under DP; all other hyperparameters are fixed from the aggregate hyperparameter-optimisation results in \cref{fig:dp-logreg-hypers-tabllm,fig:dp-mlp-hypers-tabllm}.}
\label{tab:dp-baseline-lr-grids}
\begin{tabular}{ll}
\toprule
\textbf{Method} & \textbf{Learning-rate grid} \\
\midrule
DP-LR
& $\{0.005,\;0.02,\;0.05,\;0.1\}$ \\
DP-MLP
& $\{3{\times}10^{-4},\;10^{-3},\;3{\times}10^{-3},\;10^{-2}\}$ \\
\bottomrule
\end{tabular}
\end{table}

\begin{table}[!ht]
\centering
\caption{Fixed hyperparameters used for the DP baselines after inspecting the aggregate hyperparameter-selection histograms.}
\label{tab:dp-baseline-fixed-hypers}
\begin{tabular}{lll}
\toprule
\textbf{Method} & \textbf{Hyperparameter} & \textbf{Value} \\
\midrule
DP-LR & Clipping norm & $0.1$ \\
DP-LR & Subsampling rate $q$ & $0.05$ \\
DP-LR & Epochs & $1000$ \\
\midrule
DP-MLP & Clipping norm & $5.0$ \\
DP-MLP & Subsampling rate $q$ & $0.05$ \\
DP-MLP & Epochs & $250$ \\
DP-MLP & Hidden width & $64$ \\
DP-MLP & Hidden layers & $1$ \\
\bottomrule
\end{tabular}
\end{table}

\FloatBarrier
\section{Additional Experiment Details}\label{app:additional-experiment-details}

\subsection{Membership-Inference Audit Results}\label{app:membership-inference-results}

\Cref{tab:membership-inference} reports exact FPC-corrected means and complementary 95th-percentile point estimates, while \cref{fig:membership-inference-summary}a emphasises the maximum lower record-wise 95\% confidence endpoint.
The record-level curves in \cref{fig:membership-inference} provide a threshold-specific view of the exposure summarised by the record AUC values.

\begin{table}[!ht]
\centering
\scriptsize
\caption{FPC-corrected membership-inference audit on Maternal Health Risk.
Aggregate AUC measures attack performance over 100 held-out context draws; record AUC measures each row's fitted exposure across 4,000 draws.
Means are reported with bootstrap 95\% CIs over held-out draws or rows, respectively, and the record-level 95th percentile summarises the most exposed 5\% of rows.
Values near \(0.5\) indicate perfect privacy.}
\label{tab:membership-inference}
\begin{tabular}{lccc}
\toprule
Method & Aggregate AUC (mean [95\% CI]) & Record AUC (mean [95\% CI]) & Record AUC (95th percentile) \\
\midrule
PrivTab (\(\mu=0.05\)) & \(0.501\ [0.498,0.505]\) & \(0.504\ [0.504,0.504]\) & \(0.513\) \\
PrivTab (\(\mu=0.1\)) & \(0.503\ [0.500,0.507]\) & \(0.505\ [0.505,0.506]\) & \(0.515\) \\
PrivTab (\(\mu=0.2\)) & \(0.508\ [0.504,0.511]\) & \(0.508\ [0.508,0.509]\) & \(0.519\) \\
PrivTab (\(\mu=0.4\)) & \(0.515\ [0.511,0.518]\) & \(0.514\ [0.514,0.515]\) & \(0.526\) \\
PrivTab (\(\mu=0.8\)) & \(0.527\ [0.524,0.530]\) & \(0.523\ [0.522,0.523]\) & \(0.537\) \\
PrivTab (\(\mu=1.6\)) & \(0.543\ [0.540,0.546]\) & \(0.533\ [0.532,0.534]\) & \(0.554\) \\
TabPFN v3 (non-private) & \(0.752\ [0.749,0.755]\) & \(0.686\ [0.680,0.692]\) & \(0.864\) \\
TabICL v2 (non-private) & \(0.765\ [0.762,0.767]\) & \(0.708\ [0.701,0.715]\) & \(0.977\) \\
\bottomrule
\end{tabular}
\end{table}

\begin{figure}[!ht]
    \centering
    \includegraphics[width=\textwidth]{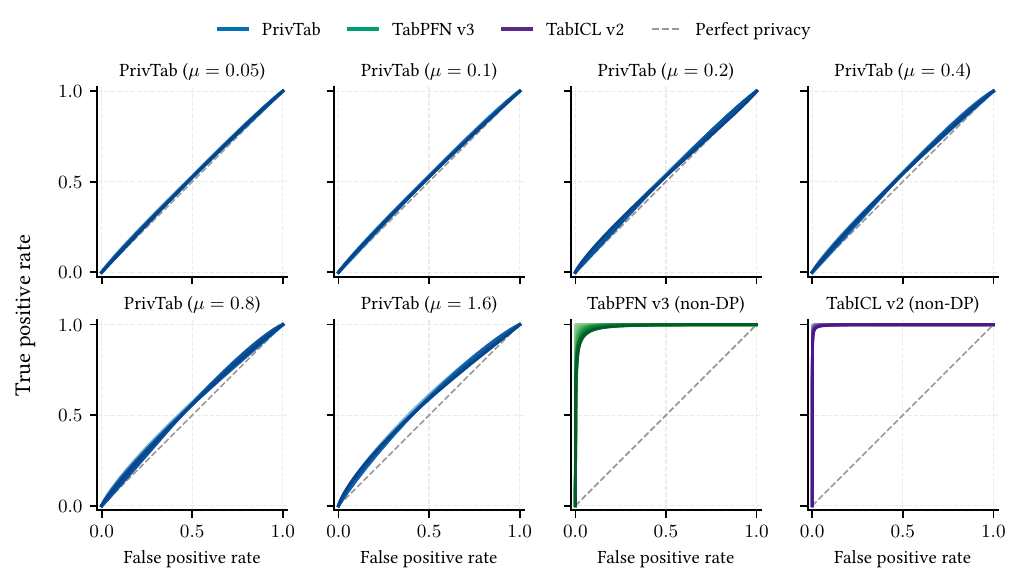}
    \caption{FPC-corrected record-level membership-inference curves.
ROC curves are shown for the ten most vulnerable Maternal Health Risk records at each setting.
Shades distinguish records, and the diagonal denotes perfect privacy.
The fitted IN/OUT standard deviations use the same FPC as \cref{fig:membership-inference-summary}a.
PrivTab remains substantially closer to the diagonal than non-private TabPFN v3 and TabICL v2, whose most vulnerable records are nearly perfectly distinguishable.}
    \label{fig:membership-inference}
\end{figure}

\paragraph{Fitted balanced attack accuracy and its uncertainty.}
For a fixed record \(i\), let \(S_{i,\mathrm{IN}}\) and \(S_{i,\mathrm{OUT}}\) denote its attack score conditional on the record being a member or a non-member, respectively.
We fit separate Gaussian distributions,
\[
 S_{i,\mathrm{IN}}\sim\mathcal N(\mu_{i,\mathrm{IN}},\sigma_{i,\mathrm{IN}}^2),
 \qquad
 S_{i,\mathrm{OUT}}\sim\mathcal N(\mu_{i,\mathrm{OUT}},\sigma_{i,\mathrm{OUT}}^2),
\]
where \(\sigma_{i,\mathrm{IN}}\) and \(\sigma_{i,\mathrm{OUT}}\) are the finite-population-corrected standard deviations defined in the Methods.
A score above threshold \(t\) predicts membership, so, writing \(\Phi\) for the standard-normal CDF,
\[
 \operatorname{TPR}_i(t)
 =\Pr(S_{i,\mathrm{IN}}>t)
 =1-\Phi\!\left(\frac{t-\mu_{i,\mathrm{IN}}}{\sigma_{i,\mathrm{IN}}}\right),
 \qquad
 \operatorname{TNR}_i(t)
 =\Pr(S_{i,\mathrm{OUT}}\leq t)
 =\Phi\!\left(\frac{t-\mu_{i,\mathrm{OUT}}}{\sigma_{i,\mathrm{OUT}}}\right).
\]
With \(P=N=2{,}000\) IN and OUT draws for each record, ordinary accuracy and balanced accuracy have the same form:
\[
 \frac{\mathrm{TP}+\mathrm{TN}}{P+N}
 =\frac12\left(\frac{\mathrm{TP}}{P}+\frac{\mathrm{TN}}{N}\right).
\]
Replacing the two observed fractions by the fitted Gaussian probabilities gives the fitted balanced accuracy
\[
 A_i(t;\boldsymbol\omega_i)
 =\frac12\left[\operatorname{TPR}_i(t)+\operatorname{TNR}_i(t)\right]
 =\frac12\left[
 \Phi\!\left(\frac{\mu_{i,\mathrm{IN}}-t}{\sigma_{i,\mathrm{IN}}}\right)
 +\Phi\!\left(\frac{t-\mu_{i,\mathrm{OUT}}}{\sigma_{i,\mathrm{OUT}}}\right)
 \right],
\]
Here \(\boldsymbol\omega_i=(\mu_{i,\mathrm{IN}},\mu_{i,\mathrm{OUT}},\sigma_{i,\mathrm{IN}},\sigma_{i,\mathrm{OUT}})^{\mathsf T}\) contains the score-model parameters; \(\btheta\) denotes the foundation-model parameters.
We choose \(\hat t_i\in\arg\max_t A_i(t;\hat{\boldsymbol\omega}_i)\) and write \(\hat A_i^*=A_i(\hat t_i;\hat{\boldsymbol\omega}_i)\), including the limiting accuracy of \(0.5\) from predicting a single class.

To quantify uncertainty in this fitted accuracy, let \(n_{i,\mathrm{IN}}\) and \(n_{i,\mathrm{OUT}}\) be the numbers of fitted scores, and let \(s_{i,\mathrm{IN}}\) and \(s_{i,\mathrm{OUT}}\) be their uncorrected sample standard deviations.
The fitted corrected standard deviations are \(\hat\sigma_{i,\mathrm{IN}}=s_{i,\mathrm{IN}}/\sqrt f\) and \(\hat\sigma_{i,\mathrm{OUT}}=s_{i,\mathrm{OUT}}/\sqrt f\), with \(f=0.5\).
Under independent Gaussian IN and OUT score samples, a large-sample diagonal covariance approximation for \(\hat{\boldsymbol\omega}_i\) is
\[
 \widehat V_i=\operatorname{diag}\!\left(
 \frac{s_{i,\mathrm{IN}}^2}{n_{i,\mathrm{IN}}},
 \frac{s_{i,\mathrm{OUT}}^2}{n_{i,\mathrm{OUT}}},
 \frac{\hat\sigma_{i,\mathrm{IN}}^2}{2n_{i,\mathrm{IN}}},
 \frac{\hat\sigma_{i,\mathrm{OUT}}^2}{2n_{i,\mathrm{OUT}}}
 \right).
\]
The first two entries use the variance of the observed score means; the last two propagate the fixed FPC scaling of the fitted standard deviations.
For a fixed threshold, the delta method gives
\[
 \widehat{\operatorname{Var}}\!\left[A_i(t;\hat{\boldsymbol\omega}_i)\right]
 =\boldsymbol g_i(t)^{\mathsf T}\widehat V_i\boldsymbol g_i(t),
 \qquad
 \boldsymbol g_i(t)=\nabla_{\boldsymbol\omega}A_i(t;\hat{\boldsymbol\omega}_i),
\]
and its standard error is the square root of this variance; we obtain the gradient by automatic differentiation.
At a unique interior maximum, \(\partial A_i/\partial t=0\), so the envelope theorem gives the first-order gradient of \(\max_t A_i(t;\boldsymbol\omega_i)\) by evaluating \(\boldsymbol g_i(t)\) at \(\hat t_i\), without differentiating through the selected threshold.
Thus the nominal record-wise 95\% interval for the maximised fitted accuracy is
\[
 \hat A_i^*\ \pm\ 1.96\,
 \sqrt{\boldsymbol g_i(\hat t_i)^{\mathsf T}\widehat V_i\boldsymbol g_i(\hat t_i)}.
\]
For \cref{fig:dp-tnp-overview}e, we clip its lower endpoint at \(0.5\) and plot the largest lower endpoint across records for each method.
These intervals are pointwise for each record; choosing the largest endpoint across records does not provide a simultaneous 95\% coverage guarantee.

\FloatBarrier

\subsection{Verification and Empirical Audits}\label{app:empirical-audits}

We verify the private summary mechanism at three complementary levels: implementation, formal verification and empirical auditing.
Together, these checks connect the mathematical privacy argument to the code that implements the mechanism.
Empirical agreement cannot replace a proof, but it can reveal implementation errors or violated assumptions that the abstract proof alone cannot detect.

\paragraph{Implementation.}
DP-MHCA localises the sensitivity-critical query, key and value transformations, bounded attention computation and Gaussian perturbation in one module.
The noise scale is calibrated from the number of private layers, attention heads and summary tokens according to the sensitivity and composition results in \cref{thm:dp-mhca-sensitivity,thm:dp-tnp-gdp}.
We also check that every transformation before the first DP-MHCA layer is applied independently to each context row, with no positional encoding or cross-row normalisation that could create an unaccounted dependency.
A self-contained PyTorch implementation that closely follows the implementation used in our experiments is provided in \cref{app:dp-mhca-reference-implementation}.

\paragraph{Formal verification.}
We provide a Lean 4 formalisation in \texttt{DPMHCA.Lean} \citep{deMouraUllrich2021Lean4}.
It formally verifies the \(2\sqrt{Hm}\) sensitivity bound for bounded attention and an abstract adaptive \(\mu\)-GDP composition theorem that permits deterministic post-processing between private layers.
This complements the manuscript proofs of the single-layer sensitivity bound and the privacy guarantee of the complete private summary mechanism.
The correspondence between the formal objects and the running implementation remains subject to implementation review, motivating the additional empirical checks below.

\paragraph{Empirical audits.}
We perform two empirical checks.
First, a gradient audit tests that computation before the first DP-MHCA layer is row-wise and detects unintended dependencies between context rows.
Second, a sensitivity audit uses gradient-based optimisation to search for adjacent token sets whose pre-noise summaries are far apart.
For each optimised pair, we repeatedly sample privatised outputs and estimate the hypothesis-testing trade-off curve using a likelihood-ratio test with access to the two unnoised means and the calibrated Gaussian variance.
This gives the auditor the strongest test available for that pair.
Agreement with the intended trade-off curve provides evidence against errors in sensitivity control or noise calibration, but is not a substitute for the formal guarantee.
The full mechanism-audit plots across privacy levels are shown in \cref{fig:dp-mhca-audit}.

\begin{figure}[!ht]
    \centering
    \begin{subfigure}[t]{0.31\textwidth}
        \centering
        \includegraphics[width=\textwidth]{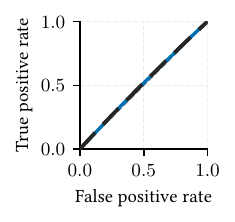}
        \caption{\(\mu=0.05\)}
    \end{subfigure}
    \hfill
    \begin{subfigure}[t]{0.31\textwidth}
        \centering
        \includegraphics[width=\textwidth]{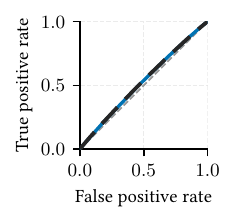}
        \caption{\(\mu=0.1\)}
    \end{subfigure}
    \hfill
    \begin{subfigure}[t]{0.31\textwidth}
        \centering
        \includegraphics[width=\textwidth]{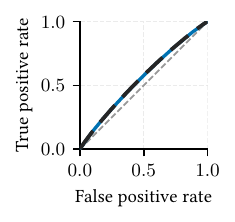}
        \caption{\(\mu=0.2\)}
    \end{subfigure}
    \par
    \vspace{0.3em}

    \begin{subfigure}[t]{0.31\textwidth}
        \centering
        \includegraphics[width=\textwidth]{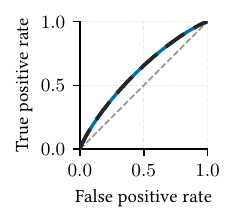}
        \caption{\(\mu=0.4\)}
    \end{subfigure}
    \hfill
    \begin{subfigure}[t]{0.31\textwidth}
        \centering
        \includegraphics[width=\textwidth]{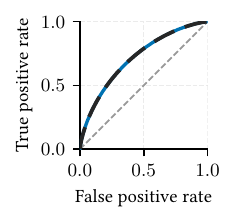}
        \caption{\(\mu=0.8\)}
    \end{subfigure}
    \hfill
    \begin{subfigure}[t]{0.31\textwidth}
        \centering
        \includegraphics[width=\textwidth]{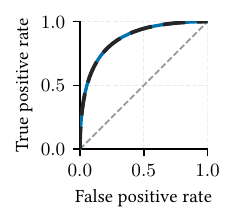}
        \caption{\(\mu=1.6\)}
    \end{subfigure}
    \par
    \vspace{0.1em}
    \includegraphics[width=0.90\textwidth]{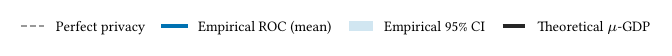}
    \caption{Empirical audit of the DP-MHCA mechanism across privacy levels.
For each \(\mu\), the audit optimises a pair of adjacent token sets to maximise the pre-noise output difference and then estimates the trade-off curve from repeated samples of the privatised outputs, using 10 repeated audit runs; shaded bands show pointwise bootstrap 95\% CIs over those runs.
The resulting curves are compared against the strongest likelihood-ratio attack with access to the unnoised means and the calibrated noise standard deviation.
The observed behaviour is consistent with the intended privacy calibration across the full range of \(\mu\).}
    \label{fig:dp-mhca-audit}
\end{figure}

\FloatBarrier
\subsection{TabArena Evaluation Details}\label{app:tabarena-evaluation-details}

\subsubsection{Preprocessing and Elo calibration}

The purpose of this benchmark is to compare the pure classification performance of the methods under a common preprocessing convention, so we treat $z$-score standardisation by population-level means and standard deviations as an idealised public preprocessing step and apply that convention to PrivTab and the baselines alike.
This is also a natural benchmark idealisation because the TabICL simulator standardises each generated dataset using that dataset's own mean and standard deviation before splitting it into context and target subsets.
In the actual evaluation on real datasets, the unknown population means and standard deviations are approximated by the empirical mean and standard deviation of the context set, and those context-set statistics are then used to standardise both context and target features.

The DP-LR and DP-MLP baselines are described in the Methods.
We rank PrivTab, DP-LR, DP-MLP and non-private logistic regression using the maximum-likelihood Elo procedure.
For interpretability, the fitted ratings are linearly recalibrated so that DP-LR scores 1,000 separately at each privacy level and in the aggregate comparison.
This transformation changes the origin of the Elo scale but not the ranking of the methods.
We use \(1 - \mathrm{AUC}\) as the error metric for binary datasets and log loss for multiclass datasets~\citep{EricksonTabArena2025}.
The dataset-level breakdowns evaluate discrimination on a common scale across all tasks.
For dataset \(d\), split \(r\), privacy level \(\mu\) and baseline \(b\), they compute \(\mathrm{AUC}_{\mathrm{PrivTab},d,r,\mu}-\mathrm{AUC}_{b,d,r,\mu}\), using macro-averaged one-versus-rest AUC when \(d\) is multiclass.
Marker colours average this difference over the ten splits within a dataset, whereas panel titles average it over all 330 paired splits.

\begin{figure}[!ht]
    \centering
    \includegraphics[width=\textwidth]{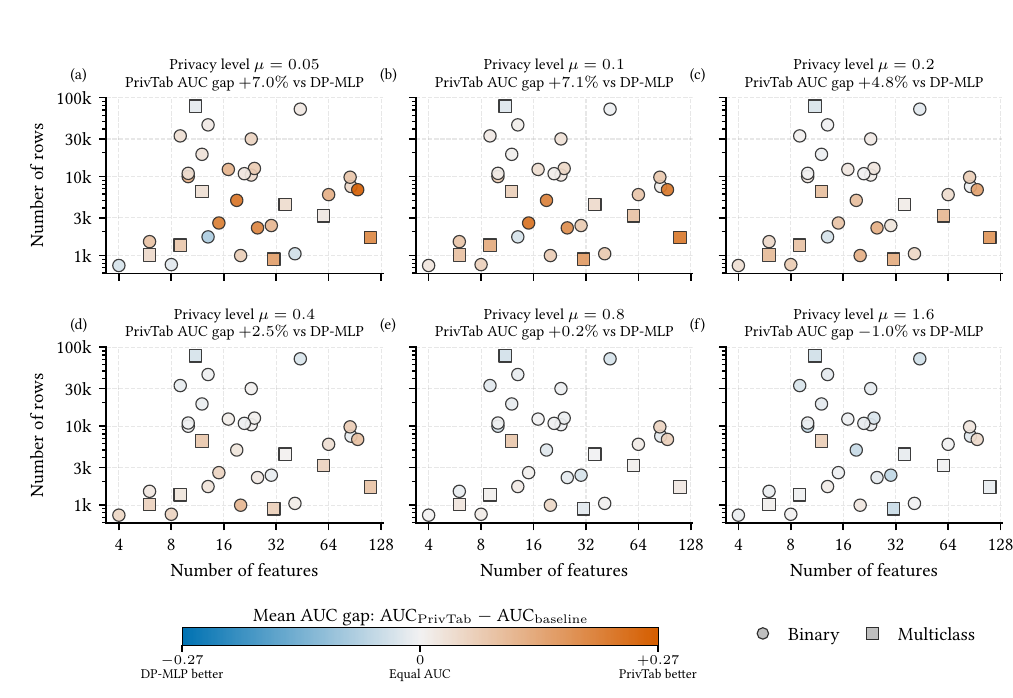}
    \caption{\textbf{Dataset-level AUC gap between PrivTab and DP-MLP across all privacy levels.} Each marker represents a dataset, positioned by its number of features and rows.
Marker colour gives the mean paired AUC gap over ten context--target splits; positive values favour PrivTab.
Binary AUC is used for binary datasets and macro-averaged one-versus-rest AUC for multiclass datasets.}
    \label{fig:supplement-auc-gap-dp-mlp}
\end{figure}

\begin{figure}[!ht]
    \centering
    \includegraphics[width=\textwidth]{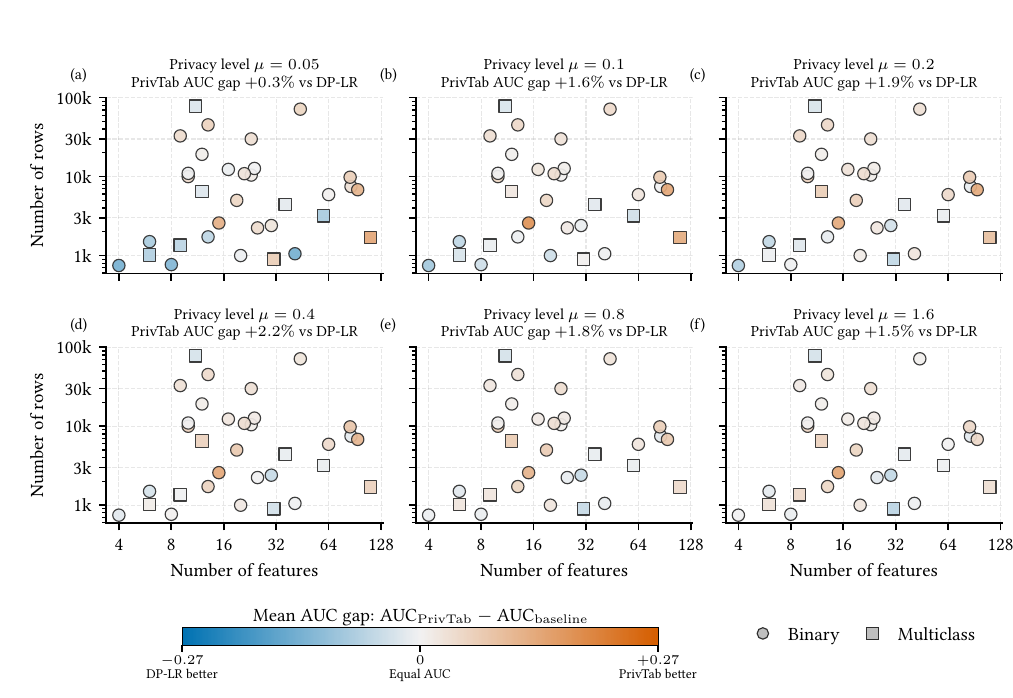}
    \caption{\textbf{Dataset-level AUC gap between PrivTab and DP-LR across all privacy levels.} Plotting conventions match \cref{fig:supplement-auc-gap-dp-mlp}; positive values favour PrivTab.}
    \label{fig:supplement-auc-gap-dp-lr}
\end{figure}

\begin{figure}[!ht]
    \centering
    \includegraphics[width=\textwidth]{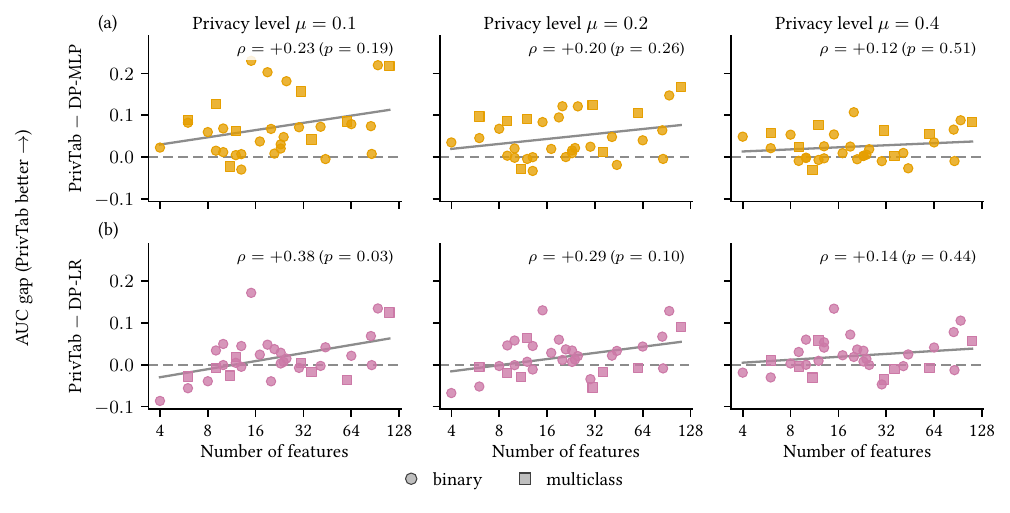}
    \caption{
    \textbf{AUC gaps versus feature count across 33 TabArena datasets.}
    \textbf{a}, PrivTab minus DP-MLP AUC; \textbf{b}, PrivTab minus DP-LR AUC.
    Columns show privacy levels \(\mu=0.1, 0.2\) and \(0.4\).
    Each point is a dataset's mean paired AUC gap across ten context--target splits, plotted against its number of features on a logarithmic scale; positive gaps favour PrivTab.
    Circles denote binary datasets and squares denote multiclass datasets; AUC is macro-averaged one-versus-rest for multiclass tasks.
    Dashed horizontal lines mark equal AUC, grey lines are least-squares fits against log feature count, and annotations report Spearman's \(\rho\) and its \(p\)-value.
    }
    \label{fig:supplement-auc-gap-features}
\end{figure}

\subsubsection{Derived comparison metrics}

To interpret the comparison plots, let \(m\) denote a method, let \(b\) denote the reference baseline, let \(d\) denote a dataset, let \(r\) denote a context--target split, and let \(\mu\) denote the privacy level.
Writing \(\mathrm{AUC}_{m,d,r,\mu}\) and \(\ell_{m,d,r,\mu}\) for the corresponding target-set AUC and log loss, the split-level AUC improvement over the baseline is
\begin{equation}
    \Delta_{\mathrm{AUC}}(m,b;d,r,\mu)
    :=
    \mathrm{AUC}_{m,d,r,\mu}
    -
    \mathrm{AUC}_{b,d,r,\mu}.
\end{equation}
For the non-private logistic-regression reference, we report the relative log-loss reduction
\begin{equation}
    \Delta_{\mathrm{LL}}^{\mathrm{nonDP}}(m;d,r,\mu)
    :=
    \frac{
        \ell_{\mathrm{non\text{-}private\,LR},d,r}
        -
        \ell_{m,d,r,\mu}
    }{
        \ell_{\mathrm{non\text{-}private\,LR},d,r}
    },
\end{equation}
so positive values indicate lower loss than the non-private reference.
Dividing by the reference loss makes the quantity comparable across datasets with different numbers of classes and hence different natural log-loss scales.
For comparisons against DP-LR, we first average over context--target splits within each dataset and then report the relative log-loss reduction
\begin{equation}
    \Delta_{\mathrm{LL}}^{\mathrm{rel}}(m,\mathrm{DP\text{-}LR};d,\mu)
    :=
    \frac{
        \bar{\ell}_{\mathrm{DP\text{-}LR},d,\mu}
        -
        \bar{\ell}_{m,d,\mu}
    }{
        \bar{\ell}_{\mathrm{DP\text{-}LR},d,\mu}
    },
\end{equation}
where \(\bar{\ell}_{m,d,\mu}\) denotes the average loss over context--target splits.
Finally, the win-rate plots summarise the indicator comparisons
\begin{equation}
    W_{\mathrm{AUC}}(m,b;d,r,\mu)
    :=
    \begin{cases}
        1, & \mathrm{AUC}_{m,d,r,\mu} > \mathrm{AUC}_{b,d,r,\mu}, \\
        0.5, & \mathrm{AUC}_{m,d,r,\mu} = \mathrm{AUC}_{b,d,r,\mu}, \\
        0, & \mathrm{AUC}_{m,d,r,\mu} < \mathrm{AUC}_{b,d,r,\mu},
    \end{cases}
\end{equation}
and analogously \(W_{\mathrm{LL}}(m,b;d,r,\mu)\), where smaller log loss is better.
Dataset-level win rates are obtained by averaging these quantities over context--target splits.

\FloatBarrier
\clearpage
\subsection{Aggregate Benchmark Comparisons}\label{app:aggregate-benchmark-tables}

Elo ratings, comparisons with non-private LR and dataset-level runtimes are reported in \cref{tab:supplement-elo-by-mu,tab:supplement-aggregate-elo,tab:supplement-auc-gap-non-dp,tab:supplement-logloss-gap-non-dp,tab:supplement-runtime-source}.

\scriptsize
\begin{longtable}{llrrr}
\caption{Numerical values underlying the ELO curves in Figure~\ref{fig:exp-elo}(c).}\label{tab:supplement-elo-by-mu}\\
\toprule
$\mu$ & Method & ELO & CI min & CI max \\
\midrule
\endfirsthead
\caption[]{Numerical values underlying the ELO curves in Figure~\ref{fig:exp-elo}(c). (continued)}\\
\toprule
$\mu$ & Method & ELO & CI min & CI max \\
\midrule
\endhead
\midrule
\multicolumn{5}{r}{Continued on next page} \\
\midrule
\endfoot
\bottomrule
\endlastfoot
\rowcolor{gray!8} 0.05 & PrivTab & 1142.8 & 1125.0 & 1161.3 \\
\rowcolor{gray!8} 0.05 & DP-LR & 1000.0 & 970.4 & 1027.8 \\
\rowcolor{gray!8} 0.05 & DP-MLP & 910.7 & 882.1 & 937.9 \\
0.1 & PrivTab & 1172.4 & 1155.4 & 1190.4 \\
0.1 & DP-LR & 1000.0 & 969.7 & 1028.7 \\
0.1 & DP-MLP & 966.7 & 939.5 & 993.0 \\
\rowcolor{gray!8} 0.2 & PrivTab & 1189.5 & 1171.7 & 1208.3 \\
\rowcolor{gray!8} 0.2 & DP-MLP & 1038.7 & 1011.6 & 1065.2 \\
\rowcolor{gray!8} 0.2 & DP-LR & 1000.0 & 968.9 & 1029.0 \\
0.4 & PrivTab & 1185.7 & 1168.3 & 1204.3 \\
0.4 & DP-MLP & 1131.1 & 1104.6 & 1158.7 \\
0.4 & DP-LR & 1000.0 & 968.7 & 1030.3 \\
\rowcolor{gray!8} 0.8 & DP-MLP & 1243.5 & 1216.3 & 1273.3 \\
\rowcolor{gray!8} 0.8 & PrivTab & 1174.6 & 1157.4 & 1193.0 \\
\rowcolor{gray!8} 0.8 & DP-LR & 1000.0 & 968.2 & 1029.8 \\
1.6 & DP-MLP & 1296.3 & 1265.8 & 1330.9 \\
1.6 & PrivTab & 1155.8 & 1136.8 & 1176.1 \\
1.6 & DP-LR & 1000.0 & 969.9 & 1029.4 \\
\end{longtable}
\normalsize

\begin{table}[t]
\centering
\small
\caption{Numerical values underlying the aggregate ELO coordinates in Figure~\ref{fig:exp-elo}(d), with ratings aggregated over all privacy budgets.}
\label{tab:supplement-aggregate-elo}
\begin{tabular}{lrrr}
\toprule
Method & ELO & CI min & CI max \\
\midrule
PrivTab & 1166.5 & 1159.3 & 1173.7 \\
DP-MLP & 1097.5 & 1086.1 & 1109.2 \\
DP-LR & 1000.0 & 988.1 & 1011.9 \\
\bottomrule
\end{tabular}
\normalsize
\end{table}

\scriptsize
\begin{longtable}{llrrrrr}
\caption{Numerical values underlying Figure~\ref{fig:exp-nondp-gap}(a).}\label{tab:supplement-auc-gap-non-dp}\\
\toprule
Method & $\mu$ & Mean & CI low & CI high & $n$ datasets & $n$ splits \\
\midrule
\endfirsthead
\caption[]{Numerical values underlying Figure~\ref{fig:exp-nondp-gap}(a). (continued)}\\
\toprule
Method & $\mu$ & Mean & CI low & CI high & $n$ datasets & $n$ splits \\
\midrule
\endhead
\midrule
\multicolumn{7}{r}{Continued on next page} \\
\midrule
\endfoot
\bottomrule
\endlastfoot
\rowcolor{gray!8} DP-LR & 0.05 & -0.1249 & -0.1579 & -0.0950 & 33 & 10 \\
DP-LR & 0.1 & -0.0933 & -0.1210 & -0.0669 & 33 & 10 \\
\rowcolor{gray!8} DP-LR & 0.2 & -0.0663 & -0.0880 & -0.0460 & 33 & 10 \\
DP-LR & 0.4 & -0.0471 & -0.0643 & -0.0314 & 33 & 10 \\
\rowcolor{gray!8} DP-LR & 0.8 & -0.0335 & -0.0471 & -0.0211 & 33 & 10 \\
DP-LR & 1.6 & -0.0238 & -0.0359 & -0.0131 & 33 & 10 \\
\rowcolor{gray!8} DP-MLP & 0.05 & -0.1924 & -0.2359 & -0.1519 & 33 & 10 \\
DP-MLP & 0.1 & -0.1482 & -0.1883 & -0.1092 & 33 & 10 \\
\rowcolor{gray!8} DP-MLP & 0.2 & -0.0948 & -0.1255 & -0.0656 & 33 & 10 \\
DP-MLP & 0.4 & -0.0505 & -0.0709 & -0.0320 & 33 & 10 \\
\rowcolor{gray!8} DP-MLP & 0.8 & -0.0169 & -0.0312 & -0.0030 & 33 & 10 \\
DP-MLP & 1.6 & 0.0007 & -0.0120 & 0.0137 & 33 & 10 \\
\rowcolor{gray!8} PrivTab & 0.05 & -0.1222 & -0.1586 & -0.0876 & 33 & 10 \\
PrivTab & 0.1 & -0.0773 & -0.1025 & -0.0532 & 33 & 10 \\
\rowcolor{gray!8} PrivTab & 0.2 & -0.0472 & -0.0665 & -0.0297 & 33 & 10 \\
PrivTab & 0.4 & -0.0255 & -0.0386 & -0.0132 & 33 & 10 \\
\rowcolor{gray!8} PrivTab & 0.8 & -0.0150 & -0.0273 & -0.0036 & 33 & 10 \\
PrivTab & 1.6 & -0.0088 & -0.0201 & 0.0014 & 33 & 10 \\
\end{longtable}
\normalsize

\scriptsize
\begin{longtable}{llrrrrr}
\caption{Numerical values underlying the relative log-loss improvements in Figure~\ref{fig:exp-nondp-gap}(b).}\label{tab:supplement-logloss-gap-non-dp}\\
\toprule
Method & $\mu$ & Mean & CI low & CI high & $n$ datasets & $n$ splits \\
\midrule
\endfirsthead
\caption[]{Numerical values underlying the relative log-loss improvements in Figure~\ref{fig:exp-nondp-gap}(b). (continued)}\\
\toprule
Method & $\mu$ & Mean & CI low & CI high & $n$ datasets & $n$ splits \\
\midrule
\endhead
\midrule
\multicolumn{7}{r}{Continued on next page} \\
\midrule
\endfoot
\bottomrule
\endlastfoot
\rowcolor{gray!8} DP-LR & 0.05 & -12.0599 & -20.0871 & -6.3234 & 33 & 10 \\
DP-LR & 0.1 & -6.4823 & -10.8479 & -3.1667 & 33 & 10 \\
\rowcolor{gray!8} DP-LR & 0.2 & -3.1523 & -4.8574 & -1.8595 & 33 & 10 \\
DP-LR & 0.4 & -1.5109 & -1.8378 & -1.2354 & 33 & 10 \\
\rowcolor{gray!8} DP-LR & 0.8 & -1.1520 & -1.2902 & -1.0277 & 33 & 10 \\
DP-LR & 1.6 & -1.0440 & -1.1426 & -0.9516 & 33 & 10 \\
\rowcolor{gray!8} DP-MLP & 0.05 & -1.5079 & -2.5324 & -0.8257 & 33 & 10 \\
DP-MLP & 0.1 & -1.0130 & -1.6921 & -0.5475 & 33 & 10 \\
\rowcolor{gray!8} DP-MLP & 0.2 & -0.7080 & -1.3381 & -0.3270 & 33 & 10 \\
DP-MLP & 0.4 & -0.3822 & -0.6844 & -0.1853 & 33 & 10 \\
\rowcolor{gray!8} DP-MLP & 0.8 & -0.1543 & -0.3202 & -0.0466 & 33 & 10 \\
DP-MLP & 1.6 & -0.0560 & -0.1257 & -0.0007 & 33 & 10 \\
\rowcolor{gray!8} PrivTab & 0.05 & -0.5217 & -0.8835 & -0.2618 & 33 & 10 \\
PrivTab & 0.1 & -0.4293 & -0.7594 & -0.1951 & 33 & 10 \\
\rowcolor{gray!8} PrivTab & 0.2 & -0.3435 & -0.6366 & -0.1307 & 33 & 10 \\
PrivTab & 0.4 & -0.2787 & -0.5365 & -0.0878 & 33 & 10 \\
\rowcolor{gray!8} PrivTab & 0.8 & -0.2218 & -0.4303 & -0.0657 & 33 & 10 \\
PrivTab & 1.6 & -0.1926 & -0.3746 & -0.0561 & 33 & 10 \\
\end{longtable}
\normalsize

\scriptsize
\begin{longtable}{lrrccc}
\caption{Dataset sizes and per-context--target-split runtimes in seconds underlying Figure~\ref{fig:exp-runtime}.}\label{tab:supplement-runtime-source}\\
\toprule
Dataset & Rows & Features & PrivTab & DP-MLP & DP-LR \\
\midrule
\endfirsthead
\caption[]{Dataset sizes and per-context--target-split runtimes in seconds underlying Figure~\ref{fig:exp-runtime}. (continued)}\\
\toprule
Dataset & Rows & Features & PrivTab & DP-MLP & DP-LR \\
\midrule
\endhead
\midrule
\multicolumn{6}{r}{Continued on next page} \\
\midrule
\endfoot
\bottomrule
\endlastfoot
Amazon employee access & 32769 & 9 & 0.037 & 524.350 & 1540.025 \\
Anneal & 898 & 31 & 0.012 & 92.250 & 284.975 \\
Bank marketing & 45211 & 13 & 0.021 & 724.675 & 2105.300 \\
Bank customer churn & 10000 & 10 & 0.013 & 202.975 & 616.075 \\
Blood transfusion & 748 & 4 & 0.013 & 91.700 & 299.800 \\
Churn & 5000 & 19 & 0.013 & 145.150 & 452.825 \\
COIL2000 & 9822 & 85 & 0.013 & 202.075 & 624.475 \\
Credit-g & 1000 & 20 & 0.013 & 90.425 & 300.750 \\
Credit card default & 30000 & 23 & 0.015 & 498.500 & 1401.675 \\
Diabetes & 768 & 8 & 0.013 & 92.350 & 296.650 \\
Diabetes130US & 71518 & 44 & 0.030 & 1067.925 & 3078.550 \\
E-commerce shipping & 10999 & 10 & 0.013 & 208.600 & 634.600 \\
Fitness club & 1500 & 6 & 0.013 & 92.125 & 311.000 \\
Hazelnut contaminant & 2400 & 30 & 0.013 & 93.250 & 299.800 \\
HELOC & 10459 & 23 & 0.013 & 203.550 & 629.150 \\
HR analytics job change & 19158 & 12 & 0.014 & 321.525 & 948.675 \\
In-vehicle coupon & 12684 & 24 & 0.014 & 241.775 & 753.800 \\
Good customer & 1723 & 13 & 0.013 & 93.250 & 297.550 \\
Marketing campaign & 2240 & 25 & 0.013 & 93.250 & 301.350 \\
Maternal health risk & 1014 & 6 & 0.012 & 93.250 & 300.450 \\
NATICUSdroid & 7491 & 86 & 0.013 & 172.000 & 533.475 \\
Online shoppers & 12330 & 17 & 0.014 & 238.725 & 713.125 \\
Polish bankruptcy & 5910 & 64 & 0.013 & 150.850 & 454.375 \\
QSAR biodeg & 1054 & 41 & 0.013 & 92.475 & 307.175 \\
SDSS17 & 78053 & 11 & 0.029 & 1151.625 & 3282.500 \\
Seismic bumps & 2584 & 15 & 0.013 & 91.850 & 296.300 \\
Splice & 3190 & 60 & 0.012 & 100.950 & 316.675 \\
Student dropout & 4424 & 36 & 0.012 & 138.450 & 447.400 \\
Taiwanese bankruptcy & 6819 & 94 & 0.013 & 156.200 & 485.800 \\
Website phishing & 1353 & 9 & 0.012 & 93.975 & 300.450 \\
Wine quality & 6497 & 12 & 0.012 & 156.200 & 480.025 \\
MIC & 1699 & 111 & 0.012 & 91.650 & 303.275 \\
JM1 & 10885 & 21 & 0.013 & 204.850 & 629.275 \\
\end{longtable}
\normalsize

\begin{figure}[!ht]
    \centering
    \includegraphics[width=0.84\textwidth]{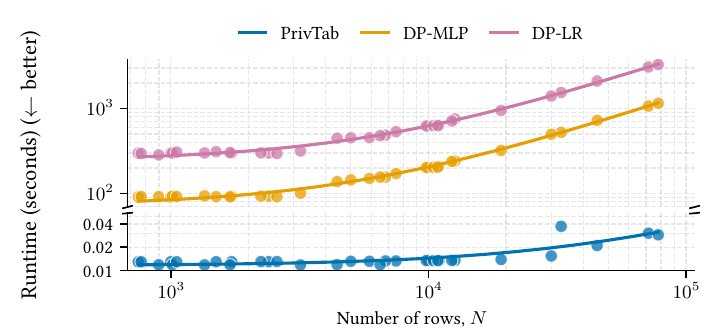}
    \caption{\textbf{Fitting time as a function of dataset size.} Points are dataset-level measurements and curves are fitted shifted power laws.
PrivTab is approximately four orders of magnitude faster than the DP Adam baselines over the studied range.}
    \label{fig:exp-runtime}
\end{figure}

\FloatBarrier
\clearpage
\paragraph{Prediction time on a laptop.}
We additionally measured post-fit prediction time on a laptop with a 13th-generation Intel Core i5 CPU across the same 33 datasets, six privacy levels and ten context--target splits.
Each configuration used one warm-up call followed by three timed runs, with a target batch size of 1,024; we report the mean of the per-configuration median times.
Timing covers prediction for all target rows, and excludes preprocessing, model loading, fitting and private-summary construction.
These measurements assess prediction cost rather than predictive performance.
The dataset-level results are reported in \cref{tab:supplement-inference-timing,fig:inference-timing}.

\scriptsize
\begin{longtable}{lrrccc}
\caption{Dataset sizes and post-fit prediction times in milliseconds on a laptop with a 13th-generation Intel Core i5 CPU, underlying Figure~\ref{fig:inference-timing}. Values average the per-configuration median of three timed runs over six privacy levels and ten context--target splits.}\label{tab:supplement-inference-timing}\\
\toprule
Dataset & Target rows & Features & PrivTab & DP-MLP & DP-LR \\
\midrule
\endfirsthead
\caption[]{Dataset sizes and post-fit prediction times in milliseconds on a laptop with a 13th-generation Intel Core i5 CPU, underlying Figure~\ref{fig:inference-timing}. Values average the per-configuration median of three timed runs over six privacy levels and ten context--target splits. (continued)}\\
\toprule
Dataset & Target rows & Features & PrivTab & DP-MLP & DP-LR \\
\midrule
\endhead
\midrule
\multicolumn{6}{r}{Continued on next page} \\
\midrule
\endfoot
\bottomrule
\endlastfoot
Amazon employee access & 6554 & 9 & 355 & 1.73 & 0.727 \\
Anneal & 180 & 31 & 15.2 & 0.159 & 0.114 \\
Bank marketing & 9042 & 13 & 582 & 2.83 & 0.928 \\
Bank customer churn & 2000 & 10 & 105 & 0.585 & 0.2 \\
Blood transfusion & 150 & 4 & 13 & 0.124 & 0.0849 \\
Churn & 1000 & 19 & 64.4 & 0.346 & 0.115 \\
COIL2000 & 1964 & 85 & 106 & 0.549 & 0.192 \\
Credit-g & 200 & 20 & 15.1 & 0.0715 & 0.0381 \\
Credit card default & 6000 & 23 & 394 & 1.7 & 0.664 \\
Diabetes & 154 & 8 & 11.9 & 0.107 & 0.0687 \\
Diabetes130US & 14304 & 44 & 754 & 3.62 & 1.12 \\
E-commerce shipping & 2200 & 10 & 112 & 0.588 & 0.259 \\
Fitness club & 300 & 6 & 19.1 & 0.164 & 0.116 \\
Hazelnut contaminant & 480 & 30 & 29.8 & 0.232 & 0.182 \\
HELOC & 2092 & 23 & 111 & 0.605 & 0.245 \\
HR analytics job change & 3832 & 12 & 197 & 1.06 & 0.361 \\
In-vehicle coupon & 2537 & 24 & 147 & 0.772 & 0.355 \\
Good customer & 345 & 13 & 24.1 & 0.174 & 0.123 \\
JM1 & 2177 & 21 & 127 & 1.67 & 1.06 \\
Marketing campaign & 448 & 25 & 34.5 & 0.212 & 0.152 \\
Maternal health risk & 203 & 6 & 18.9 & 0.13 & 0.062 \\
MIC & 340 & 111 & 35.4 & 0.243 & 0.0644 \\
NATICUSdroid & 1498 & 86 & 96.4 & 0.653 & 0.233 \\
Online shoppers & 2466 & 17 & 142 & 0.661 & 0.313 \\
Polish bankruptcy & 1182 & 64 & 66.6 & 0.354 & 0.17 \\
QSAR biodeg & 211 & 41 & 18.8 & 0.114 & 0.0754 \\
SDSS17 & 15611 & 11 & 805 & 3.46 & 1.46 \\
Seismic bumps & 517 & 15 & 28.3 & 0.173 & 0.116 \\
Splice & 638 & 60 & 32.7 & 0.202 & 0.0834 \\
Student dropout & 885 & 36 & 46.9 & 0.196 & 0.207 \\
Taiwanese bankruptcy & 1364 & 94 & 81.8 & 0.448 & 0.528 \\
Website phishing & 271 & 9 & 20.9 & 0.244 & 0.133 \\
Wine quality & 1299 & 12 & 76.5 & 1.09 & 0.18 \\
\end{longtable}
\normalsize

\begin{figure}[!ht]
    \centering
    \captionsetup{skip=3pt}
    \includegraphics[width=0.84\textwidth]{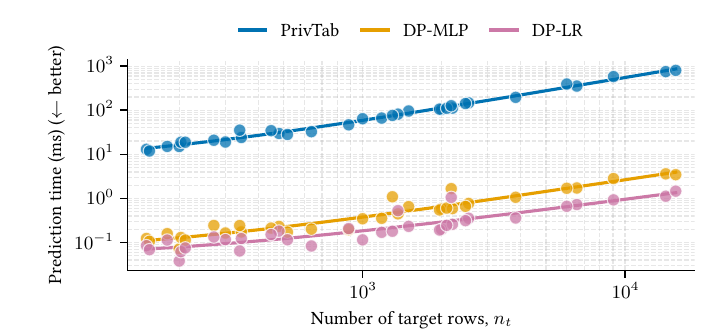}
    \caption{\textbf{Prediction time as a function of target-set size on a laptop CPU.}
    Points show dataset-level mean post-fit prediction times in milliseconds (ms) on a 13th-generation Intel Core i5, averaged over six privacy levels and ten context--target splits.
    Curves are shifted power laws fitted in log time as visual guides.
    Both axes are logarithmic. All methods predict in batches of up to 1,024 target rows.}
    \label{fig:inference-timing}
\end{figure}
\FloatBarrier

The following figure summarises the direct comparisons with DP-LR; the corresponding numerical values are reported in \cref{tab:supplement-auc-improvement-logreg,tab:supplement-auc-winrate-logreg,tab:supplement-logloss-improvement-logreg,tab:supplement-loss-winrate-logreg}.

\begin{figure}[!ht]
    \centering
    \begin{subfigure}[t]{0.48\textwidth}
        \centering
        \includegraphics[width=\textwidth]{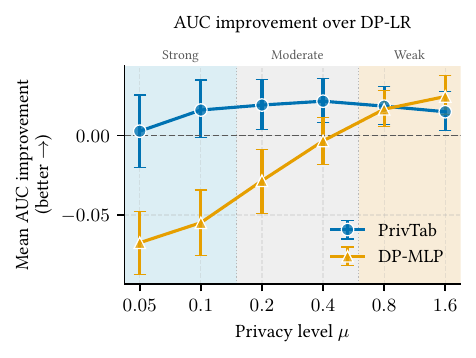}
        \caption{Mean AUC improvement.}
    \end{subfigure}
    \hfill
    \begin{subfigure}[t]{0.48\textwidth}
        \centering
        \includegraphics[width=\textwidth]{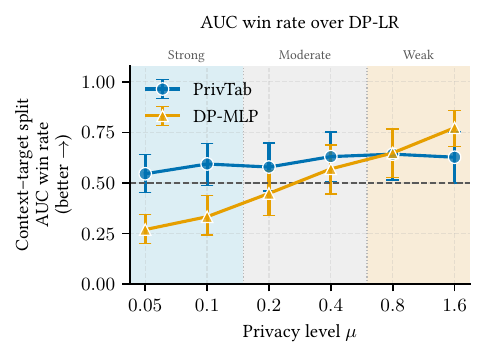}
        \caption{AUC win rate.}
    \end{subfigure}
    \vspace{0.6em}
    \begin{subfigure}[t]{0.48\textwidth}
        \centering
        \includegraphics[width=\textwidth]{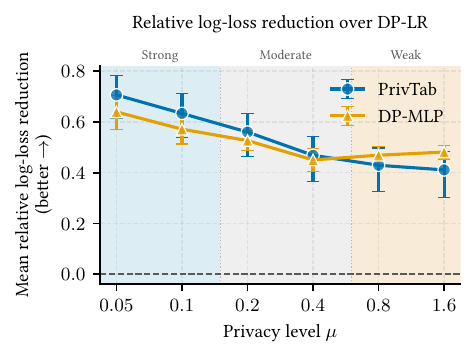}
        \caption{Relative log-loss reduction.}
    \end{subfigure}
    \hfill
    \begin{subfigure}[t]{0.48\textwidth}
        \centering
        \includegraphics[width=\textwidth]{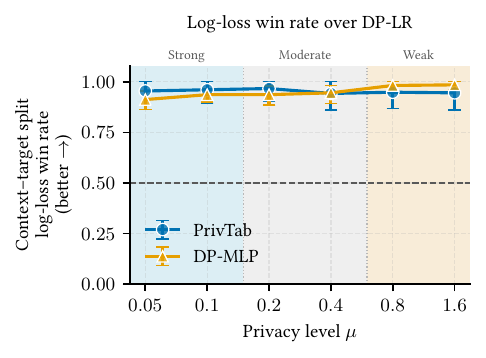}
        \caption{Log-loss win rate.}
    \end{subfigure}
    \caption{Performance relative to DP-LR.
Results aggregate ten context--target splits per dataset; error bars show bootstrap 95\% CIs over datasets.
Positive improvements and win rates above $0.5$ favour the compared method.
Bands are visual guides, not formal privacy thresholds: strong (\(\mu<0.15\)), moderate (\(0.15\leq\mu<0.6\)) and weak (\(\mu\geq0.6\)) formal privacy.}
    \label{fig:exp-dp-logreg}
\end{figure}

\scriptsize
\begin{longtable}{llrrrl}
\caption{Numerical values underlying Figure~\ref{fig:exp-dp-logreg}(a) (AUC improvement over DP-LR).}\label{tab:supplement-auc-improvement-logreg}\\
\toprule
Method & $\mu$ & Mean & CI low & CI high & $n$ datasets \\
\midrule
\endfirsthead
\caption[]{Numerical values underlying Figure~\ref{fig:exp-dp-logreg}(a) (AUC improvement over DP-LR). (continued)}\\
\toprule
Method & $\mu$ & Mean & CI low & CI high & $n$ datasets \\
\midrule
\endhead
\midrule
\multicolumn{6}{r}{Continued on next page} \\
\midrule
\endfoot
\bottomrule
\endlastfoot
\rowcolor{gray!8} PrivTab & 0.05 & 0.0027 & -0.0198 & 0.0257 & 33 \\
PrivTab & 0.1 & 0.0160 & -0.0011 & 0.0352 & 33 \\
\rowcolor{gray!8} PrivTab & 0.2 & 0.0192 & 0.0040 & 0.0355 & 33 \\
PrivTab & 0.4 & 0.0216 & 0.0086 & 0.0355 & 33 \\
\rowcolor{gray!8} PrivTab & 0.8 & 0.0185 & 0.0068 & 0.0307 & 33 \\
PrivTab & 1.6 & 0.0150 & 0.0035 & 0.0280 & 33 \\
\rowcolor{gray!8} DP-MLP & 0.05 & -0.0675 & -0.0871 & -0.0476 & 33 \\
DP-MLP & 0.1 & -0.0548 & -0.0758 & -0.0342 & 33 \\
\rowcolor{gray!8} DP-MLP & 0.2 & -0.0284 & -0.0486 & -0.0087 & 33 \\
DP-MLP & 0.4 & -0.0034 & -0.0184 & 0.0108 & 33 \\
\rowcolor{gray!8} DP-MLP & 0.8 & 0.0166 & 0.0060 & 0.0283 & 33 \\
DP-MLP & 1.6 & 0.0245 & 0.0133 & 0.0376 & 33 \\
\end{longtable}
\normalsize

\scriptsize
\begin{longtable}{llrrrl}
\caption{Numerical values underlying Figure~\ref{fig:exp-dp-logreg}(b) (AUC win rate over DP-LR).}\label{tab:supplement-auc-winrate-logreg}\\
\toprule
Method & $\mu$ & Mean & CI low & CI high & $n$ datasets \\
\midrule
\endfirsthead
\caption[]{Numerical values underlying Figure~\ref{fig:exp-dp-logreg}(b) (AUC win rate over DP-LR). (continued)}\\
\toprule
Method & $\mu$ & Mean & CI low & CI high & $n$ datasets \\
\midrule
\endhead
\midrule
\multicolumn{6}{r}{Continued on next page} \\
\midrule
\endfoot
\bottomrule
\endlastfoot
\rowcolor{gray!8} PrivTab & 0.05 & 0.5455 & 0.4485 & 0.6424 & 33 \\
PrivTab & 0.1 & 0.5939 & 0.4909 & 0.6970 & 33 \\
\rowcolor{gray!8} PrivTab & 0.2 & 0.5788 & 0.4576 & 0.6939 & 33 \\
PrivTab & 0.4 & 0.6303 & 0.5061 & 0.7455 & 33 \\
\rowcolor{gray!8} PrivTab & 0.8 & 0.6424 & 0.5152 & 0.7606 & 33 \\
PrivTab & 1.6 & 0.6273 & 0.5000 & 0.7545 & 33 \\
\rowcolor{gray!8} DP-MLP & 0.05 & 0.2697 & 0.2000 & 0.3424 & 33 \\
DP-MLP & 0.1 & 0.3333 & 0.2394 & 0.4303 & 33 \\
\rowcolor{gray!8} DP-MLP & 0.2 & 0.4485 & 0.3394 & 0.5636 & 33 \\
DP-MLP & 0.4 & 0.5697 & 0.4455 & 0.6879 & 33 \\
\rowcolor{gray!8} DP-MLP & 0.8 & 0.6485 & 0.5242 & 0.7636 & 33 \\
DP-MLP & 1.6 & 0.7727 & 0.6818 & 0.8606 & 33 \\
\end{longtable}
\normalsize

\scriptsize
\begin{longtable}{llrrrl}
\caption{Numerical values underlying Figure~\ref{fig:exp-dp-logreg}(c) (Relative log-loss reduction over DP-LR).}\label{tab:supplement-logloss-improvement-logreg}\\
\toprule
Method & $\mu$ & Mean & CI low & CI high & $n$ datasets \\
\midrule
\endfirsthead
\caption[]{Numerical values underlying Figure~\ref{fig:exp-dp-logreg}(c) (Relative log-loss reduction over DP-LR). (continued)}\\
\toprule
Method & $\mu$ & Mean & CI low & CI high & $n$ datasets \\
\midrule
\endhead
\midrule
\multicolumn{6}{r}{Continued on next page} \\
\midrule
\endfoot
\bottomrule
\endlastfoot
\rowcolor{gray!8} PrivTab & 0.05 & 0.7061 & 0.6100 & 0.7827 & 33 \\
PrivTab & 0.1 & 0.6333 & 0.5370 & 0.7092 & 33 \\
\rowcolor{gray!8} PrivTab & 0.2 & 0.5598 & 0.4586 & 0.6342 & 33 \\
PrivTab & 0.4 & 0.4684 & 0.3679 & 0.5398 & 33 \\
\rowcolor{gray!8} PrivTab & 0.8 & 0.4292 & 0.3305 & 0.4971 & 33 \\
PrivTab & 1.6 & 0.4109 & 0.3036 & 0.4811 & 33 \\
\rowcolor{gray!8} DP-MLP & 0.05 & 0.6392 & 0.5725 & 0.7031 & 33 \\
DP-MLP & 0.1 & 0.5712 & 0.5105 & 0.6295 & 33 \\
\rowcolor{gray!8} DP-MLP & 0.2 & 0.5263 & 0.4869 & 0.5669 & 33 \\
DP-MLP & 0.4 & 0.4502 & 0.4050 & 0.4943 & 33 \\
\rowcolor{gray!8} DP-MLP & 0.8 & 0.4688 & 0.4317 & 0.5027 & 33 \\
DP-MLP & 1.6 & 0.4812 & 0.4547 & 0.5067 & 33 \\
\end{longtable}
\normalsize

\scriptsize
\begin{longtable}{llrrrl}
\caption{Numerical values underlying Figure~\ref{fig:exp-dp-logreg}(d) (Log-loss win rate over DP-LR).}\label{tab:supplement-loss-winrate-logreg}\\
\toprule
Method & $\mu$ & Mean & CI low & CI high & $n$ datasets \\
\midrule
\endfirsthead
\caption[]{Numerical values underlying Figure~\ref{fig:exp-dp-logreg}(d) (Log-loss win rate over DP-LR). (continued)}\\
\toprule
Method & $\mu$ & Mean & CI low & CI high & $n$ datasets \\
\midrule
\endhead
\midrule
\multicolumn{6}{r}{Continued on next page} \\
\midrule
\endfoot
\bottomrule
\endlastfoot
\rowcolor{gray!8} PrivTab & 0.05 & 0.9545 & 0.8788 & 1.0000 & 33 \\
PrivTab & 0.1 & 0.9606 & 0.8909 & 1.0000 & 33 \\
\rowcolor{gray!8} PrivTab & 0.2 & 0.9667 & 0.9030 & 1.0000 & 33 \\
PrivTab & 0.4 & 0.9424 & 0.8606 & 1.0000 & 33 \\
\rowcolor{gray!8} PrivTab & 0.8 & 0.9485 & 0.8667 & 1.0000 & 33 \\
PrivTab & 1.6 & 0.9455 & 0.8606 & 1.0000 & 33 \\
\rowcolor{gray!8} DP-MLP & 0.05 & 0.9121 & 0.8636 & 0.9545 & 33 \\
DP-MLP & 0.1 & 0.9364 & 0.9000 & 0.9667 & 33 \\
\rowcolor{gray!8} DP-MLP & 0.2 & 0.9364 & 0.8848 & 0.9788 & 33 \\
DP-MLP & 0.4 & 0.9455 & 0.8939 & 0.9818 & 33 \\
\rowcolor{gray!8} DP-MLP & 0.8 & 0.9818 & 0.9515 & 1.0000 & 33 \\
DP-MLP & 1.6 & 0.9848 & 0.9636 & 1.0000 & 33 \\
\end{longtable}
\normalsize

\FloatBarrier
\subsection{Per-dataset Benchmark Results}\label{app:per-dataset-results}

Per-dataset AUC and log loss are reported for PrivTab in \cref{tab:supplement-privtab-auc,tab:supplement-privtab-loss}, DP-LR in \cref{tab:supplement-dp-logistic-regression-auc,tab:supplement-dp-logistic-regression-loss}, DP-MLP in \cref{tab:supplement-dp-mlp-auc,tab:supplement-dp-mlp-loss} and non-private LR in \cref{tab:supplement-non-dp-logistic-regression-auc,tab:supplement-non-dp-logistic-regression-loss}.

\scriptsize
\begin{longtable}{lcccccc}
\caption{Per-dataset AUC for PrivTab; entries report the mean and bootstrap 95\% CI over context--target splits.}\label{tab:supplement-privtab-auc}\\
\toprule
Dataset & $\mu=0.05$ & $\mu=0.1$ & $\mu=0.2$ & $\mu=0.4$ & $\mu=0.8$ & $\mu=1.6$ \\
\midrule
\endfirsthead
\caption[]{Per-dataset AUC for PrivTab; entries report the mean and bootstrap 95\% CI over context--target splits. (continued)}\\
\toprule
Dataset & $\mu=0.05$ & $\mu=0.1$ & $\mu=0.2$ & $\mu=0.4$ & $\mu=0.8$ & $\mu=1.6$ \\
\midrule
\endhead
\midrule
\multicolumn{7}{r}{Continued on next page} \\
\midrule
\endfoot
\bottomrule
\endlastfoot
Amazon employee access & \shortstack{0.562\\{}[0.553,0.571]} & \shortstack{0.560\\{}[0.546,0.573]} & \shortstack{0.570\\{}[0.561,0.579]} & \shortstack{0.569\\{}[0.561,0.578]} & \shortstack{0.569\\{}[0.561,0.577]} & \shortstack{0.570\\{}[0.563,0.576]} \\
Anneal & \shortstack{0.638\\{}[0.596,0.673]} & \shortstack{0.685\\{}[0.660,0.708]} & \shortstack{0.735\\{}[0.721,0.749]} & \shortstack{0.827\\{}[0.812,0.844]} & \shortstack{0.852\\{}[0.827,0.875]} & \shortstack{0.882\\{}[0.861,0.906]} \\
Bank customer churn & \shortstack{0.753\\{}[0.739,0.766]} & \shortstack{0.776\\{}[0.769,0.783]} & \shortstack{0.785\\{}[0.779,0.792]} & \shortstack{0.790\\{}[0.783,0.797]} & \shortstack{0.793\\{}[0.786,0.800]} & \shortstack{0.794\\{}[0.786,0.801]} \\
Bank marketing & \shortstack{0.715\\{}[0.710,0.720]} & \shortstack{0.726\\{}[0.723,0.728]} & \shortstack{0.729\\{}[0.725,0.732]} & \shortstack{0.730\\{}[0.726,0.734]} & \shortstack{0.730\\{}[0.726,0.734]} & \shortstack{0.730\\{}[0.726,0.733]} \\
Blood transfusion & \shortstack{0.532\\{}[0.479,0.588]} & \shortstack{0.616\\{}[0.577,0.652]} & \shortstack{0.656\\{}[0.625,0.685]} & \shortstack{0.709\\{}[0.686,0.733]} & \shortstack{0.725\\{}[0.704,0.750]} & \shortstack{0.734\\{}[0.712,0.759]} \\
Churn & \shortstack{0.748\\{}[0.722,0.770]} & \shortstack{0.797\\{}[0.786,0.808]} & \shortstack{0.820\\{}[0.815,0.825]} & \shortstack{0.839\\{}[0.833,0.845]} & \shortstack{0.842\\{}[0.835,0.850]} & \shortstack{0.839\\{}[0.831,0.847]} \\
COIL2000 & \shortstack{0.603\\{}[0.587,0.620]} & \shortstack{0.625\\{}[0.613,0.636]} & \shortstack{0.650\\{}[0.627,0.669]} & \shortstack{0.692\\{}[0.677,0.706]} & \shortstack{0.700\\{}[0.689,0.711]} & \shortstack{0.701\\{}[0.690,0.712]} \\
Credit card default & \shortstack{0.743\\{}[0.737,0.748]} & \shortstack{0.742\\{}[0.736,0.748]} & \shortstack{0.750\\{}[0.745,0.755]} & \shortstack{0.750\\{}[0.746,0.754]} & \shortstack{0.752\\{}[0.747,0.756]} & \shortstack{0.751\\{}[0.747,0.756]} \\
\end{longtable}
\normalsize

\scriptsize
\begin{longtable}{lcccccc}
\caption{Per-dataset log loss for PrivTab; entries report the mean and bootstrap 95\% CI over context--target splits.}\label{tab:supplement-privtab-loss}\\
\toprule
Dataset & $\mu=0.05$ & $\mu=0.1$ & $\mu=0.2$ & $\mu=0.4$ & $\mu=0.8$ & $\mu=1.6$ \\
\midrule
\endfirsthead
\caption[]{Per-dataset log loss for PrivTab; entries report the mean and bootstrap 95\% CI over context--target splits. (continued)}\\
\toprule
Dataset & $\mu=0.05$ & $\mu=0.1$ & $\mu=0.2$ & $\mu=0.4$ & $\mu=0.8$ & $\mu=1.6$ \\
\midrule
\endhead
\midrule
\multicolumn{7}{r}{Continued on next page} \\
\midrule
\endfoot
\bottomrule
\endlastfoot
Amazon employee access & \shortstack{0.230\\{}[0.228,0.232]} & \shortstack{0.232\\{}[0.231,0.234]} & \shortstack{0.232\\{}[0.231,0.232]} & \shortstack{0.232\\{}[0.232,0.232]} & \shortstack{0.232\\{}[0.231,0.232]} & \shortstack{0.232\\{}[0.231,0.232]} \\
Anneal & \shortstack{0.968\\{}[0.926,1.015]} & \shortstack{0.890\\{}[0.863,0.919]} & \shortstack{0.791\\{}[0.765,0.819]} & \shortstack{0.702\\{}[0.689,0.715]} & \shortstack{0.538\\{}[0.518,0.557]} & \shortstack{0.432\\{}[0.418,0.446]} \\
Bank customer churn & \shortstack{0.450\\{}[0.444,0.457]} & \shortstack{0.433\\{}[0.428,0.438]} & \shortstack{0.425\\{}[0.422,0.429]} & \shortstack{0.420\\{}[0.416,0.423]} & \shortstack{0.420\\{}[0.416,0.423]} & \shortstack{0.420\\{}[0.417,0.424]} \\
Bank marketing & \shortstack{0.345\\{}[0.344,0.346]} & \shortstack{0.341\\{}[0.340,0.342]} & \shortstack{0.339\\{}[0.338,0.340]} & \shortstack{0.338\\{}[0.337,0.339]} & \shortstack{0.337\\{}[0.337,0.338]} & \shortstack{0.337\\{}[0.337,0.338]} \\
Blood transfusion & \shortstack{0.562\\{}[0.553,0.570]} & \shortstack{0.541\\{}[0.533,0.550]} & \shortstack{0.532\\{}[0.523,0.541]} & \shortstack{0.506\\{}[0.498,0.514]} & \shortstack{0.497\\{}[0.487,0.507]} & \shortstack{0.488\\{}[0.476,0.499]} \\
Churn & \shortstack{0.391\\{}[0.386,0.397]} & \shortstack{0.354\\{}[0.346,0.361]} & \shortstack{0.329\\{}[0.325,0.333]} & \shortstack{0.318\\{}[0.314,0.323]} & \shortstack{0.319\\{}[0.315,0.322]} & \shortstack{0.321\\{}[0.317,0.324]} \\
COIL2000 & \shortstack{0.238\\{}[0.232,0.245]} & \shortstack{0.228\\{}[0.225,0.230]} & \shortstack{0.232\\{}[0.229,0.236]} & \shortstack{0.230\\{}[0.228,0.231]} & \shortstack{0.231\\{}[0.230,0.232]} & \shortstack{0.231\\{}[0.230,0.233]} \\
Credit card default & \shortstack{0.457\\{}[0.454,0.461]} & \shortstack{0.453\\{}[0.449,0.457]} & \shortstack{0.451\\{}[0.448,0.455]} & \shortstack{0.450\\{}[0.447,0.454]} & \shortstack{0.450\\{}[0.447,0.454]} & \shortstack{0.450\\{}[0.447,0.453]} \\
\end{longtable}
\normalsize

\scriptsize
\begin{longtable}{lcccccc}
\caption{Per-dataset AUC for DP-LR; entries report the mean and bootstrap 95\% CI over context--target splits.}\label{tab:supplement-dp-logistic-regression-auc}\\
\toprule
Dataset & $\mu=0.05$ & $\mu=0.1$ & $\mu=0.2$ & $\mu=0.4$ & $\mu=0.8$ & $\mu=1.6$ \\
\midrule
\endfirsthead
\caption[]{Per-dataset AUC for DP-LR; entries report the mean and bootstrap 95\% CI over context--target splits. (continued)}\\
\toprule
Dataset & $\mu=0.05$ & $\mu=0.1$ & $\mu=0.2$ & $\mu=0.4$ & $\mu=0.8$ & $\mu=1.6$ \\
\midrule
\endhead
\midrule
\multicolumn{7}{r}{Continued on next page} \\
\midrule
\endfoot
\bottomrule
\endlastfoot
Amazon employee access & \shortstack{0.523\\{}[0.508,0.538]} & \shortstack{0.526\\{}[0.509,0.542]} & \shortstack{0.524\\{}[0.502,0.545]} & \shortstack{0.539\\{}[0.524,0.552]} & \shortstack{0.548\\{}[0.531,0.562]} & \shortstack{0.553\\{}[0.532,0.567]} \\
Anneal & \shortstack{0.573\\{}[0.526,0.618]} & \shortstack{0.681\\{}[0.637,0.729]} & \shortstack{0.790\\{}[0.751,0.831]} & \shortstack{0.863\\{}[0.819,0.904]} & \shortstack{0.899\\{}[0.862,0.936]} & \shortstack{0.943\\{}[0.919,0.963]} \\
Bank customer churn & \shortstack{0.709\\{}[0.681,0.735]} & \shortstack{0.726\\{}[0.702,0.746]} & \shortstack{0.727\\{}[0.701,0.746]} & \shortstack{0.730\\{}[0.711,0.747]} & \shortstack{0.728\\{}[0.710,0.745]} & \shortstack{0.739\\{}[0.730,0.748]} \\
Bank marketing & \shortstack{0.659\\{}[0.625,0.687]} & \shortstack{0.681\\{}[0.664,0.696]} & \shortstack{0.684\\{}[0.663,0.701]} & \shortstack{0.689\\{}[0.674,0.703]} & \shortstack{0.701\\{}[0.693,0.708]} & \shortstack{0.706\\{}[0.697,0.714]} \\
Blood transfusion & \shortstack{0.667\\{}[0.631,0.698]} & \shortstack{0.703\\{}[0.673,0.727]} & \shortstack{0.724\\{}[0.700,0.749]} & \shortstack{0.728\\{}[0.708,0.750]} & \shortstack{0.731\\{}[0.709,0.754]} & \shortstack{0.735\\{}[0.712,0.758]} \\
Churn & \shortstack{0.698\\{}[0.661,0.734]} & \shortstack{0.749\\{}[0.724,0.772]} & \shortstack{0.760\\{}[0.730,0.787]} & \shortstack{0.767\\{}[0.744,0.788]} & \shortstack{0.776\\{}[0.756,0.796]} & \shortstack{0.787\\{}[0.772,0.803]} \\
COIL2000 & \shortstack{0.540\\{}[0.516,0.566]} & \shortstack{0.556\\{}[0.533,0.580]} & \shortstack{0.583\\{}[0.559,0.606]} & \shortstack{0.614\\{}[0.592,0.634]} & \shortstack{0.637\\{}[0.618,0.656]} & \shortstack{0.656\\{}[0.641,0.670]} \\
Credit card default & \shortstack{0.709\\{}[0.706,0.712]} & \shortstack{0.713\\{}[0.710,0.718]} & \shortstack{0.716\\{}[0.712,0.720]} & \shortstack{0.716\\{}[0.712,0.721]} & \shortstack{0.716\\{}[0.711,0.721]} & \shortstack{0.716\\{}[0.711,0.721]} \\
\end{longtable}
\normalsize

\scriptsize
\begin{longtable}{lcccccc}
\caption{Per-dataset log loss for DP-LR; entries report the mean and bootstrap 95\% CI over context--target splits.}\label{tab:supplement-dp-logistic-regression-loss}\\
\toprule
Dataset & $\mu=0.05$ & $\mu=0.1$ & $\mu=0.2$ & $\mu=0.4$ & $\mu=0.8$ & $\mu=1.6$ \\
\midrule
\endfirsthead
\caption[]{Per-dataset log loss for DP-LR; entries report the mean and bootstrap 95\% CI over context--target splits. (continued)}\\
\toprule
Dataset & $\mu=0.05$ & $\mu=0.1$ & $\mu=0.2$ & $\mu=0.4$ & $\mu=0.8$ & $\mu=1.6$ \\
\midrule
\endhead
\midrule
\multicolumn{7}{r}{Continued on next page} \\
\midrule
\endfoot
\bottomrule
\endlastfoot
Amazon employee access & \shortstack{0.510\\{}[0.407,0.649]} & \shortstack{0.462\\{}[0.404,0.534]} & \shortstack{0.419\\{}[0.388,0.459]} & \shortstack{0.396\\{}[0.382,0.413]} & \shortstack{0.386\\{}[0.377,0.396]} & \shortstack{0.378\\{}[0.374,0.383]} \\
Anneal & \shortstack{17.709\\{}[7.089,30.296]} & \shortstack{7.697\\{}[3.262,13.514]} & \shortstack{4.202\\{}[1.611,7.595]} & \shortstack{0.867\\{}[0.564,1.284]} & \shortstack{0.576\\{}[0.434,0.803]} & \shortstack{0.425\\{}[0.325,0.568]} \\
Bank customer churn & \shortstack{1.089\\{}[0.900,1.395]} & \shortstack{0.914\\{}[0.873,0.961]} & \shortstack{0.921\\{}[0.871,0.988]} & \shortstack{0.921\\{}[0.878,0.972]} & \shortstack{0.950\\{}[0.892,1.011]} & \shortstack{0.931\\{}[0.901,0.967]} \\
Bank marketing & \shortstack{0.721\\{}[0.662,0.796]} & \shortstack{0.686\\{}[0.654,0.723]} & \shortstack{0.673\\{}[0.653,0.695]} & \shortstack{0.673\\{}[0.662,0.684]} & \shortstack{0.658\\{}[0.650,0.667]} & \shortstack{0.645\\{}[0.639,0.651]} \\
Blood transfusion & \shortstack{1.859\\{}[0.925,2.898]} & \shortstack{1.438\\{}[0.898,2.109]} & \shortstack{1.188\\{}[0.910,1.530]} & \shortstack{0.989\\{}[0.889,1.110]} & \shortstack{0.894\\{}[0.822,0.980]} & \shortstack{0.884\\{}[0.820,0.959]} \\
Churn & \shortstack{1.496\\{}[0.715,2.575]} & \shortstack{1.053\\{}[0.683,1.709]} & \shortstack{0.908\\{}[0.695,1.266]} & \shortstack{0.741\\{}[0.703,0.779]} & \shortstack{0.725\\{}[0.672,0.793]} & \shortstack{0.694\\{}[0.668,0.722]} \\
COIL2000 & \shortstack{1.565\\{}[0.611,3.008]} & \shortstack{0.700\\{}[0.540,0.972]} & \shortstack{0.589\\{}[0.511,0.713]} & \shortstack{0.502\\{}[0.483,0.523]} & \shortstack{0.475\\{}[0.461,0.489]} & \shortstack{0.464\\{}[0.449,0.483]} \\
Credit card default & \shortstack{0.986\\{}[0.958,1.019]} & \shortstack{0.955\\{}[0.938,0.975]} & \shortstack{0.959\\{}[0.929,0.996]} & \shortstack{0.951\\{}[0.925,0.989]} & \shortstack{0.961\\{}[0.936,0.986]} & \shortstack{0.945\\{}[0.933,0.958]} \\
\end{longtable}
\normalsize

\scriptsize
\begin{longtable}{lcccccc}
\caption{Per-dataset AUC for DP-MLP; entries report the mean and bootstrap 95\% CI over context--target splits.}\label{tab:supplement-dp-mlp-auc}\\
\toprule
Dataset & $\mu=0.05$ & $\mu=0.1$ & $\mu=0.2$ & $\mu=0.4$ & $\mu=0.8$ & $\mu=1.6$ \\
\midrule
\endfirsthead
\caption[]{Per-dataset AUC for DP-MLP; entries report the mean and bootstrap 95\% CI over context--target splits. (continued)}\\
\toprule
Dataset & $\mu=0.05$ & $\mu=0.1$ & $\mu=0.2$ & $\mu=0.4$ & $\mu=0.8$ & $\mu=1.6$ \\
\midrule
\endhead
\midrule
\multicolumn{7}{r}{Continued on next page} \\
\midrule
\endfoot
\bottomrule
\endlastfoot
Amazon employee access & \shortstack{0.528\\{}[0.511,0.543]} & \shortstack{0.545\\{}[0.532,0.557]} & \shortstack{0.567\\{}[0.557,0.577]} & \shortstack{0.579\\{}[0.571,0.587]} & \shortstack{0.587\\{}[0.576,0.597]} & \shortstack{0.599\\{}[0.587,0.610]} \\
Anneal & \shortstack{0.489\\{}[0.431,0.542]} & \shortstack{0.528\\{}[0.467,0.580]} & \shortstack{0.610\\{}[0.539,0.675]} & \shortstack{0.764\\{}[0.714,0.805]} & \shortstack{0.870\\{}[0.836,0.895]} & \shortstack{0.928\\{}[0.909,0.944]} \\
Bank customer churn & \shortstack{0.633\\{}[0.595,0.671]} & \shortstack{0.707\\{}[0.685,0.728]} & \shortstack{0.765\\{}[0.752,0.779]} & \shortstack{0.792\\{}[0.777,0.809]} & \shortstack{0.826\\{}[0.813,0.838]} & \shortstack{0.847\\{}[0.839,0.854]} \\
Bank marketing & \shortstack{0.698\\{}[0.688,0.707]} & \shortstack{0.719\\{}[0.713,0.724]} & \shortstack{0.729\\{}[0.724,0.734]} & \shortstack{0.733\\{}[0.728,0.738]} & \shortstack{0.740\\{}[0.735,0.744]} & \shortstack{0.745\\{}[0.741,0.748]} \\
Blood transfusion & \shortstack{0.563\\{}[0.487,0.630]} & \shortstack{0.594\\{}[0.518,0.659]} & \shortstack{0.621\\{}[0.550,0.688]} & \shortstack{0.661\\{}[0.606,0.712]} & \shortstack{0.723\\{}[0.696,0.752]} & \shortstack{0.741\\{}[0.722,0.761]} \\
Churn & \shortstack{0.523\\{}[0.467,0.566]} & \shortstack{0.594\\{}[0.537,0.647]} & \shortstack{0.726\\{}[0.680,0.767]} & \shortstack{0.814\\{}[0.793,0.836]} & \shortstack{0.862\\{}[0.846,0.878]} & \shortstack{0.887\\{}[0.877,0.896]} \\
COIL2000 & \shortstack{0.524\\{}[0.501,0.548]} & \shortstack{0.551\\{}[0.527,0.574]} & \shortstack{0.587\\{}[0.563,0.610]} & \shortstack{0.626\\{}[0.604,0.647]} & \shortstack{0.646\\{}[0.628,0.664]} & \shortstack{0.679\\{}[0.667,0.690]} \\
Credit card default & \shortstack{0.689\\{}[0.680,0.698]} & \shortstack{0.712\\{}[0.702,0.722]} & \shortstack{0.734\\{}[0.724,0.743]} & \shortstack{0.748\\{}[0.743,0.753]} & \shortstack{0.758\\{}[0.753,0.762]} & \shortstack{0.766\\{}[0.762,0.770]} \\
\end{longtable}
\normalsize

\scriptsize
\begin{longtable}{lcccccc}
\caption{Per-dataset log loss for DP-MLP; entries report the mean and bootstrap 95\% CI over context--target splits.}\label{tab:supplement-dp-mlp-loss}\\
\toprule
Dataset & $\mu=0.05$ & $\mu=0.1$ & $\mu=0.2$ & $\mu=0.4$ & $\mu=0.8$ & $\mu=1.6$ \\
\midrule
\endfirsthead
\caption[]{Per-dataset log loss for DP-MLP; entries report the mean and bootstrap 95\% CI over context--target splits. (continued)}\\
\toprule
Dataset & $\mu=0.05$ & $\mu=0.1$ & $\mu=0.2$ & $\mu=0.4$ & $\mu=0.8$ & $\mu=1.6$ \\
\midrule
\endhead
\midrule
\multicolumn{7}{r}{Continued on next page} \\
\midrule
\endfoot
\bottomrule
\endlastfoot
Amazon employee access & \shortstack{0.283\\{}[0.229,0.358]} & \shortstack{0.250\\{}[0.223,0.298]} & \shortstack{0.224\\{}[0.221,0.227]} & \shortstack{0.221\\{}[0.220,0.223]} & \shortstack{0.220\\{}[0.219,0.221]} & \shortstack{0.219\\{}[0.218,0.220]} \\
Anneal & \shortstack{2.720\\{}[1.402,4.559]} & \shortstack{1.983\\{}[1.203,3.363]} & \shortstack{1.709\\{}[1.020,2.640]} & \shortstack{0.826\\{}[0.584,1.141]} & \shortstack{0.513\\{}[0.402,0.668]} & \shortstack{0.309\\{}[0.273,0.345]} \\
Bank customer churn & \shortstack{0.510\\{}[0.487,0.536]} & \shortstack{0.479\\{}[0.464,0.496]} & \shortstack{0.435\\{}[0.424,0.445]} & \shortstack{0.410\\{}[0.396,0.422]} & \shortstack{0.384\\{}[0.372,0.394]} & \shortstack{0.377\\{}[0.367,0.388]} \\
Bank marketing & \shortstack{0.339\\{}[0.336,0.343]} & \shortstack{0.330\\{}[0.328,0.334]} & \shortstack{0.325\\{}[0.322,0.329]} & \shortstack{0.322\\{}[0.319,0.326]} & \shortstack{0.317\\{}[0.315,0.319]} & \shortstack{0.316\\{}[0.315,0.318]} \\
Blood transfusion & \shortstack{0.764\\{}[0.653,0.909]} & \shortstack{0.708\\{}[0.607,0.831]} & \shortstack{0.610\\{}[0.565,0.651]} & \shortstack{0.582\\{}[0.540,0.619]} & \shortstack{0.532\\{}[0.504,0.562]} & \shortstack{0.502\\{}[0.483,0.521]} \\
Churn & \shortstack{0.653\\{}[0.452,1.015]} & \shortstack{0.547\\{}[0.407,0.790]} & \shortstack{0.371\\{}[0.354,0.392]} & \shortstack{0.324\\{}[0.310,0.337]} & \shortstack{0.288\\{}[0.270,0.305]} & \shortstack{0.271\\{}[0.256,0.288]} \\
COIL2000 & \shortstack{0.472\\{}[0.279,0.813]} & \shortstack{0.398\\{}[0.246,0.677]} & \shortstack{0.250\\{}[0.239,0.266]} & \shortstack{0.242\\{}[0.234,0.254]} & \shortstack{0.232\\{}[0.229,0.236]} & \shortstack{0.230\\{}[0.227,0.233]} \\
Credit card default & \shortstack{0.488\\{}[0.480,0.498]} & \shortstack{0.473\\{}[0.466,0.480]} & \shortstack{0.466\\{}[0.458,0.473]} & \shortstack{0.457\\{}[0.453,0.461]} & \shortstack{0.455\\{}[0.451,0.459]} & \shortstack{0.457\\{}[0.453,0.461]} \\
\end{longtable}
\normalsize

\scriptsize
\begin{longtable}{lc}
\caption{Per-dataset AUC for Non-private LR; entries report the mean and bootstrap 95\% CI over context--target splits.}\label{tab:supplement-non-dp-logistic-regression-auc}\\
\toprule
Dataset & Non-DP \\
\midrule
\endfirsthead
\caption[]{Per-dataset AUC for Non-private LR; entries report the mean and bootstrap 95\% CI over context--target splits. (continued)}\\
\toprule
Dataset & Non-DP \\
\midrule
\endhead
\midrule
\multicolumn{2}{r}{Continued on next page} \\
\midrule
\endfoot
\bottomrule
\endlastfoot
Amazon employee access & \shortstack{0.582\\{}[0.574,0.590]} \\
Anneal & \shortstack{0.985\\{}[0.979,0.990]} \\
Bank customer churn & \shortstack{0.767\\{}[0.758,0.776]} \\
Bank marketing & \shortstack{0.732\\{}[0.729,0.736]} \\
Blood transfusion & \shortstack{0.728\\{}[0.710,0.745]} \\
Churn & \shortstack{0.780\\{}[0.771,0.788]} \\
COIL2000 & \shortstack{0.738\\{}[0.730,0.748]} \\
Credit card default & \shortstack{0.742\\{}[0.738,0.746]} \\
\end{longtable}
\normalsize

\scriptsize
\begin{longtable}{lc}
\caption{Per-dataset log loss for Non-private LR; entries report the mean and bootstrap 95\% CI over context--target splits.}\label{tab:supplement-non-dp-logistic-regression-loss}\\
\toprule
Dataset & Non-DP \\
\midrule
\endfirsthead
\caption[]{Per-dataset log loss for Non-private LR; entries report the mean and bootstrap 95\% CI over context--target splits. (continued)}\\
\toprule
Dataset & Non-DP \\
\midrule
\endhead
\midrule
\multicolumn{2}{r}{Continued on next page} \\
\midrule
\endfoot
\bottomrule
\endlastfoot
Amazon employee access & \shortstack{0.219\\{}[0.218,0.219]} \\
Anneal & \shortstack{0.170\\{}[0.150,0.194]} \\
Bank customer churn & \shortstack{0.426\\{}[0.421,0.432]} \\
Bank marketing & \shortstack{0.316\\{}[0.314,0.317]} \\
Blood transfusion & \shortstack{0.494\\{}[0.482,0.508]} \\
Churn & \shortstack{0.341\\{}[0.337,0.345]} \\
COIL2000 & \shortstack{0.206\\{}[0.204,0.208]} \\
Credit card default & \shortstack{0.455\\{}[0.452,0.457]} \\
\end{longtable}
\normalsize

\FloatBarrier
\subsection{Private Preprocessing Pipeline Details}
\label{app:end-to-end-details}

This subsection supports the private-preprocessing evaluations in \cref{fig:end-to-end-case-auc,fig:supplement-preprocessing-logloss,fig:end-to-end-workflows} with the private moment mechanism, details of LLM-bound inference, numerical results, Elo ratings and per-feature clipping bounds.

\begin{figure}[!ht]
    \centering
    \includegraphics[width=0.82\textwidth]{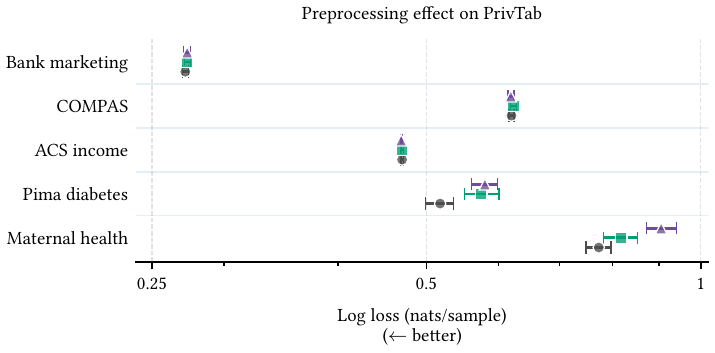}
    \par\vspace{0.1em}
    \includegraphics[width=\textwidth]{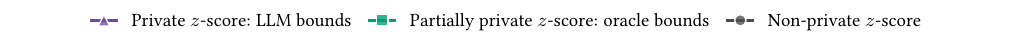}
    \caption{
    \textbf{Effect of preprocessing on PrivTab.}
    Mean log loss across ten context--target splits on five datasets, comparing non-private standardisation, partially private standardisation with oracle bounds, and private standardisation with LLM-derived bounds.
    Lower log loss is better. Error bars show bootstrap 95\% confidence intervals. The legend beneath the plot identifies the preprocessing variants.
    }
    \label{fig:supplement-preprocessing-logloss}
\end{figure}

\begin{figure}[!ht]
    \centering
    \begin{minipage}[t]{0.74\textwidth}
        \includegraphics[width=\linewidth]{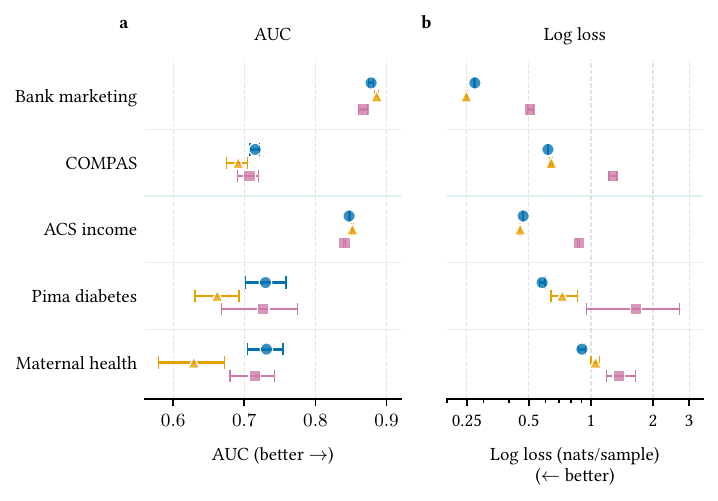}
    \end{minipage}\hfill
    \begin{minipage}[t]{0.23\textwidth}
        \includegraphics[width=\linewidth]{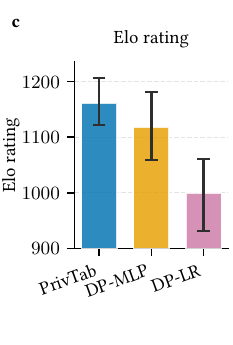}
    \end{minipage}
    \par\vspace{0.1em}
    \includegraphics[width=\textwidth]{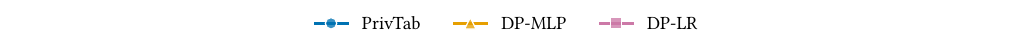}
    \caption{
    \textbf{Performance of privately preprocessed models at \(\mu=0.4\).}
    \textbf{a,b}, AUC and log loss for PrivTab, DP-MLP and DP-LR using LLM-derived bounds and private standardisation across five datasets.
    \textbf{c}, Elo ratings of these models; DP-LR is calibrated to 1,000.
    Higher AUC and Elo and lower log loss are better. Error bars show bootstrap 95\% confidence intervals. The method legend is beneath all three panels.
    }
    \label{fig:end-to-end-workflows}
\end{figure}

\FloatBarrier

\subsubsection{Private moment mechanism}

Let $n_c$ be the context size and $[a_k,b_k]$ the fixed clipping interval for input coordinate $k=1,\ldots,d$.
Coordinates with $b_k-a_k<10^{-6}$ are set to zero.
Otherwise, we clip and rescale each context input as
\begin{equation}
  c_{ik}=\min\{1,\max\{0,(x_{ik}-a_k)/(b_k-a_k)\}\},
\end{equation}
and apply the same transformation to target inputs.
We add independent Gaussian noise to the context sums and sums of squares:
\begin{equation}
    \tilde m_{1k}=\sum_{i=1}^{n_c}c_{ik}+\xi_{1,k},
    \qquad
    \tilde m_{2k}=\sum_{i=1}^{n_c}c_{ik}^{2}+\xi_{2,k},
    \qquad
    \xi_{1,k}\sim\mathcal N(0,\sigma_1^2),\quad
    \xi_{2,k}\sim\mathcal N(0,\sigma_2^2).
\end{equation}
Replacing one context row changes each sum vector by at most $\sqrt d$ in $\ell_2$ norm.
We allocate $\mu_1=\mu_{\mathrm{prep}}/2$ to the sums and $\mu_2=\sqrt{\mu_{\mathrm{prep}}^2-\mu_1^2}$ to the sums of squares, with $\sigma_1=\sqrt d/\mu_1$ and $\sigma_2=\sqrt d/\mu_2$.
By GDP composition, the two releases satisfy $\mu_{\mathrm{prep}}$-GDP because $\sqrt{\mu_1^2+\mu_2^2}=\mu_{\mathrm{prep}}$.

We compute the noisy mean $\tilde m_k=\tilde m_{1k}/n_c$ and variance $\tilde v_k=\tilde m_{2k}/n_c-\tilde m_k^2$.
If $\tilde v_k<10^{-6}$, including when it is negative, or the clipping interval is constant, we set that coordinate to zero for both context and target rows.
Otherwise, we obtain the standardised input $z_{ik}$ by clipping $(c_{ik}-\tilde m_k)/\sqrt{\tilde v_k}$ to $[-10,10]$.
The LLM-derived bounds use public descriptions; the oracle-bound workflow is partially private because its empirical dataset minima and maxima are selected without privacy accounting.

\subsubsection{LLM Bound-Inference Details}

For the end-to-end experiments described in the main-text section on private preprocessing, public clipping bounds for numeric features are inferred with Gemini 2.5 Flash.
The Gemini call uses temperature $0$, top-$k=1$, and a maximum output budget of 8192 tokens.
The response is constrained by an explicit JSON schema, and the code issues a plain \texttt{generate\_content} call without enabling any search or retrieval tool.
Accordingly, the model is prompted to rely only on the public dataset context, the public column descriptions, the provided units, and general background knowledge, rather than on any online lookup of the dataset.

The prompt is structured so that the model explicitly separates \emph{documented public bounds} from \emph{suggested clipping bounds}.
For each numeric feature, the response must contain documented lower and upper bounds, a documented-bound status, an evidence field, suggested lower and upper clipping bounds, a suggested-clip status, a guess-rationale field, and a single confidence label.
The documented bounds are allowed to be \texttt{null} when the column description does not justify an exact or semantically implied numeric limit.
For example, the prompt may allow the model to infer that a quantity is non-negative, or that an age variable refers to adults, while still leaving one side of the documented range unspecified.
The evidence field records why a documented bound is justified from the prompt alone.
The confidence field is not attached only to the documented part; it is a single feature-level label for the overall returned assessment.

The suggested clipping bounds are the values actually used by preprocessing, and they must always be finite and form a valid interval.
When exact public bounds are available, the suggested clipping bounds can simply inherit them.
Otherwise, the suggested clipping interval is chosen conservatively from public semantics, units, and general knowledge about the feature meaning.
This is why the response distinguishes ``what is justified directly by the public description'' from ``what is a conservative suggested clipping choice for DP preprocessing.'' The distinction matters because many features admit only partial public evidence, so one side of the documented range may remain \texttt{null} even though a finite suggested clipping interval is still required operationally.
In the implementation, the single confidence label is then conservatively post-processed: if the documented-bound status is \texttt{semantic\_public\_bound} or \texttt{insufficient\_information}, a returned \texttt{high} confidence is downgraded to \texttt{medium}; likewise, if the suggested-clip status is \texttt{chosen\_clipping\_bound}, a returned \texttt{high} confidence is also downgraded to \texttt{medium}.
In addition, when the documented-bound status is \texttt{insufficient\_information}, the validator forces both documented bounds to \texttt{null}.

Only columns that genuinely require semantic reasoning are sent to Gemini.
Categorical features and columns with direct public bounds, such as publicly documented ordinal codes, bypass the LLM entirely and are handled deterministically in code.
For those columns, the implementation sets both documented and suggested bounds directly from the public definition, marks both statuses as \texttt{exact\_public\_bound}, and assigns \texttt{high} confidence.
The LLM is therefore queried only for non-categorical columns that do not already have direct public bounds.
Prompts are cached before inference, and the resulting bounds are also cached and reused unless the cache is explicitly refreshed.
The implementation additionally validates that Gemini returns exactly the requested feature names and, if a batch request fails, falls back to retrying one column at a time.

\paragraph{Example Prompt.}
\begingroup
\fontencoding{T1}\selectfont
\begin{Verbatim}[fontsize=\scriptsize,breaklines=true,breakanywhere=true,frame=single]
SYSTEM:
You are extracting PUBLIC clipping bounds for differentially private preprocessing.

Return strict JSON only.
Do not include markdown.
Do not include explanations outside JSON.

USER:
Rules:
1. Analyze each target column independently.
2. Do NOT claim exact bounds unless they are explicit or follow from definitions.
3. Do NOT use empirical min/max values unless explicitly provided in the public description.
4. If true documented bounds are unavailable, you MUST still provide finite suggested clipping bounds.
5. The suggested clipping bounds must be conservative and based on public semantics, units, and dataset context.
6. Clearly distinguish documented bounds from guessed clipping bounds.
7. If a public semantic lower bound is clear, include it even when the upper bound is not documented.
8. All suggested clipping bounds must be finite numeric values.
9. Do not omit a target column. If uncertain, use documented_bound_status="insufficient_information"
   and suggested_clip_status="chosen_clipping_bound".
10. The top-level JSON value must be an array, not an object wrapper.
11. Use the dataset-specific guidance in the Dataset context when it defines units,
    special values, or semantic lower/upper bounds.

Return strict JSON only as an array:
[
  {
    "feature": string,
    "unit": string | null,
    "documented_lower_bound": number | null,
    "documented_upper_bound": number | null,
    "documented_bound_status": "exact_public_bound" | "documented_special_value"
                               | "semantic_public_bound" | "insufficient_information",
    "suggested_lower_clip": number,
    "suggested_upper_clip": number,
    "suggested_clip_status": "exact_public_bound" | "semantic_public_bound"
                             | "chosen_clipping_bound",
    "evidence": string,
    "guess_rationale": string,
    "confidence": "low" | "medium" | "high"
  }
]

Dataset context:
Pima Indians Diabetes dataset from OpenML/UCI. The task is binary prediction of diabetes onset
from diagnostic measurements. Numeric guidance: pregnancies, glucose, blood pressure, skin
thickness, insulin, BMI, diabetes pedigree function, and age are non-negative; age is adult
age in years. Use only public medical/column semantics for clipping bounds.
\end{Verbatim}
\endgroup

\paragraph{Example Response.}
\begingroup
\fontencoding{T1}\selectfont
\begin{Verbatim}[fontsize=\scriptsize,breaklines=true,breakanywhere=true,frame=single]
{
  "dataset": "Pima Indians Diabetes",
  "provider": "gemini",
  "model": "gemini-2.5-flash",
  "results": {
    "preg": {
      "documented_lower_bound": 0,
      "documented_upper_bound": null,
      "documented_bound_status": "semantic_public_bound",
      "suggested_lower_clip": 0,
      "suggested_upper_clip": 17,
      "suggested_clip_status": "chosen_clipping_bound",
      "evidence": "Dataset context states that pregnancies are non-negative.",
      "guess_rationale": "No explicit upper bound is given, so 17 is chosen as a conservative
                          medically plausible clipping value.",
      "confidence": "medium"
    },
    "plas": {
      "documented_lower_bound": 0,
      "documented_upper_bound": null,
      "documented_bound_status": "semantic_public_bound",
      "suggested_lower_clip": 40,
      "suggested_upper_clip": 400,
      "suggested_clip_status": "chosen_clipping_bound",
      "evidence": "The prompt states that glucose is non-negative.",
      "guess_rationale": "The final interval is chosen from general medical knowledge about
                          physiologically plausible glucose values.",
      "confidence": "medium"
    },
    "age": {
      "documented_lower_bound": 18,
      "documented_upper_bound": null,
      "documented_bound_status": "semantic_public_bound",
      "suggested_lower_clip": 18,
      "suggested_upper_clip": 100,
      "suggested_clip_status": "chosen_clipping_bound",
      "evidence": "The prompt states that age is adult age in years.",
      "guess_rationale": "The lower bound is inherited from the semantic evidence; the upper
                          bound is a conservative clipping choice based on general knowledge
                          about adult human ages.",
      "confidence": "medium"
    }
  }
}
\end{Verbatim}
\endgroup

\paragraph{Reading the Bound Tables.}
The supplement tables of public and LLM-derived bounds use the following column meanings.
\texttt{Feature} is the preprocessing feature name.
\texttt{Type} distinguishes columns that require LLM-based semantic inference from those that do not: \texttt{numeric} means that the feature was sent to Gemini for bound inference, whereas \texttt{categorical} means that the feature was handled deterministically from public category codes or another direct public definition.
\texttt{Exact min} and \texttt{Exact max} are the empirical minimum and maximum values observed in the dataset and are reported only for comparison; they are not treated as public bounds by the end-to-end pipeline.
\texttt{Source} records where the operational clipping interval comes from: \texttt{llm} means that the final suggested clipping bounds came from the Gemini response, while \texttt{categorical} means that the LLM was bypassed and the bounds were fixed directly from public information in code.
\texttt{Lower} and \texttt{Upper} are the final clipping bounds actually used by preprocessing.

The \texttt{Confidence} column is a feature-level label for the final bound assessment.
For rows with \texttt{Source} equal to \texttt{llm}, the values \texttt{low}, \texttt{medium}, and \texttt{high} are taken from the validated Gemini response after the conservative post-processing implemented in \texttt{end\_to\_end\_cases.py}.
Concretely, a response does not retain \texttt{high} confidence when the documented part is only semantic rather than exact, when the documented status is \texttt{insufficient\_\allowbreak information}, or when the suggested interval is marked as \texttt{chosen\_\allowbreak clipping\_\allowbreak bound} rather than directly justified by the public description.
The label \texttt{exact} appears only for rows with \texttt{Source} equal to \texttt{categorical}; it indicates that the bounds were not inferred by the LLM and instead came directly from an exact public feature definition handled deterministically in code.

Method-level results, Elo ratings and preprocessing comparisons are reported in \cref{tab:supplement-end-to-end-summary,tab:end-to-end-elo,tab:supplement-end-to-end-case-auc}.
Per-feature clipping bounds are listed in \cref{tab:supplement-acs-income-bounds,tab:supplement-bank-marketing-bounds,tab:supplement-compas-two-years-bounds,tab:supplement-maternal-health-risk-bounds,tab:supplement-pima-diabetes-bounds}.

\scriptsize
\begin{longtable}{llrrrrrr}
\caption{Numerical values underlying Figure~\ref{fig:end-to-end-case-auc}(c,d).}\label{tab:supplement-end-to-end-summary}\\
\toprule
Dataset & Method & AUC mean & CI low & CI high & Loss mean & CI low & CI high \\
\midrule
\endfirsthead
\caption[]{Numerical values underlying Figure~\ref{fig:end-to-end-case-auc}(c,d). (continued)}\\
\toprule
Dataset & Method & AUC mean & CI low & CI high & Loss mean & CI low & CI high \\
\midrule
\endhead
\midrule
\multicolumn{8}{r}{Continued on next page} \\
\midrule
\endfoot
\bottomrule
\endlastfoot
Bank marketing & PrivTab & 0.878 & 0.875 & 0.882 & 0.273 & 0.271 & 0.276 \\
Bank marketing & DP-LR & 0.868 & 0.861 & 0.873 & 0.507 & 0.490 & 0.528 \\
Bank marketing & DP-MLP & 0.886 & 0.883 & 0.889 & 0.249 & 0.245 & 0.252 \\
COMPAS & PrivTab & 0.715 & 0.708 & 0.722 & 0.619 & 0.615 & 0.624 \\
COMPAS & DP-LR & 0.707 & 0.690 & 0.720 & 1.282 & 1.224 & 1.340 \\
COMPAS & DP-MLP & 0.691 & 0.675 & 0.705 & 0.642 & 0.633 & 0.652 \\
ACS income & PrivTab & 0.847 & 0.846 & 0.848 & 0.469 & 0.468 & 0.470 \\
ACS income & DP-LR & 0.841 & 0.840 & 0.842 & 0.873 & 0.864 & 0.883 \\
ACS income & DP-MLP & 0.852 & 0.851 & 0.853 & 0.454 & 0.450 & 0.457 \\
Pima diabetes & PrivTab & 0.730 & 0.701 & 0.759 & 0.580 & 0.561 & 0.598 \\
Pima diabetes & DP-LR & 0.726 & 0.667 & 0.775 & 1.655 & 0.956 & 2.688 \\
Pima diabetes & DP-MLP & 0.662 & 0.631 & 0.692 & 0.726 & 0.641 & 0.861 \\
Maternal health & PrivTab & 0.731 & 0.704 & 0.754 & 0.905 & 0.872 & 0.940 \\
Maternal health & DP-LR & 0.715 & 0.680 & 0.743 & 1.364 & 1.187 & 1.641 \\
Maternal health & DP-MLP & 0.629 & 0.580 & 0.672 & 1.053 & 1.000 & 1.098 \\
\end{longtable}
\normalsize

\begin{table}[!ht]
\centering
\caption{ELO ratings for the private-preprocessing evaluation underlying \cref{fig:end-to-end-case-auc}e.
Ratings combine AUC for binary datasets and log loss for multiclass datasets across the five datasets and ten context--target splits at total privacy level \(\mu=0.4\).
The scale is calibrated so that DP-LR has an ELO rating of 1,000.
Intervals are 95\% confidence intervals.}
\label{tab:end-to-end-elo}
\begin{tabular}{lrrr}
\toprule
Method & ELO rating & 95\% CI lower & 95\% CI upper \\
\midrule
PrivTab & 1,161.5 & 1,122.0 & 1,206.2 \\
DP-MLP & 1,117.5 & 1,059.4 & 1,181.1 \\
DP-LR & 1,000.0 & 930.9 & 1,060.0 \\
\bottomrule
\end{tabular}
\end{table}

\footnotesize
\begin{longtable}{llp{0.19\textwidth}p{0.19\textwidth}p{0.19\textwidth}}
\caption{Numerical values underlying Figure~\ref{fig:end-to-end-case-auc}(a,b). Entries report means with bootstrap 95\% CIs over context--target splits; log loss is in nats/sample.}\label{tab:supplement-end-to-end-case-auc}\\
\toprule
Dataset & Metric & Non-DP $z$-score & \shortstack[l]{Partially private $z$-score\\(oracle bounds)} & \shortstack[l]{Private $z$-score\\(LLM bounds)} \\
\midrule
\endfirsthead
\caption[]{Numerical values underlying Figure~\ref{fig:end-to-end-case-auc}(a,b). Entries report means with bootstrap 95\% CIs over context--target splits; log loss is in nats/sample. (continued)}\\
\toprule
Dataset & Metric & Non-DP $z$-score & \shortstack[l]{Partially private $z$-score\\(oracle bounds)} & \shortstack[l]{Private $z$-score\\(LLM bounds)} \\
\midrule
\endhead
\midrule
\multicolumn{5}{r}{Continued on next page} \\
\midrule
\endfoot
\bottomrule
\endlastfoot
Bank marketing & AUC & 0.880 [0.878, 0.882] & 0.877 [0.874, 0.880] & 0.878 [0.875, 0.882] \\
Bank marketing & Log loss & 0.272 [0.270, 0.274] & 0.273 [0.271, 0.275] & 0.273 [0.271, 0.276] \\
COMPAS & AUC & 0.715 [0.708, 0.721] & 0.711 [0.704, 0.718] & 0.715 [0.708, 0.722] \\
COMPAS & Log loss & 0.620 [0.616, 0.624] & 0.623 [0.618, 0.628] & 0.619 [0.615, 0.624] \\
ACS income & AUC & 0.847 [0.846, 0.848] & 0.847 [0.846, 0.848] & 0.847 [0.846, 0.848] \\
ACS income & Log loss & 0.470 [0.469, 0.472] & 0.470 [0.469, 0.472] & 0.469 [0.468, 0.470] \\
Pima diabetes & AUC & 0.801 [0.782, 0.820] & 0.724 [0.649, 0.773] & 0.730 [0.701, 0.759] \\
Pima diabetes & Log loss & 0.518 [0.499, 0.536] & 0.574 [0.551, 0.601] & 0.580 [0.561, 0.598] \\
Maternal health & AUC & 0.807 [0.790, 0.822] & 0.789 [0.765, 0.811] & 0.731 [0.704, 0.754] \\
Maternal health & Log loss & 0.773 [0.749, 0.797] & 0.818 [0.783, 0.853] & 0.905 [0.872, 0.940] \\
\end{longtable}
\normalsize

\footnotesize
\begin{longtable}{llrrp{0.12\textwidth}rrl}
\caption{Public and LLM-derived bounds for ACS income.}\label{tab:supplement-acs-income-bounds}\\
\toprule
Feature & Type & Exact min & Exact max & Source & Lower & Upper & Confidence \\
\midrule
\endfirsthead
\caption[]{Public and LLM-derived bounds for ACS income. (continued)}\\
\toprule
Feature & Type & Exact min & Exact max & Source & Lower & Upper & Confidence \\
\midrule
\endhead
\midrule
\multicolumn{8}{r}{Continued on next page} \\
\midrule
\endfoot
\bottomrule
\endlastfoot
AGEP & numeric & 17 & 95 & llm & 16 & 90 & medium \\
SEX & categorical & 1 & 2 & categorical & 1 & 2 & exact \\
RAC1P & categorical & 1 & 9 & categorical & 1 & 9 & exact \\
SCHL & categorical & 1 & 24 & categorical & 1 & 24 & exact \\
MAR & categorical & 1 & 5 & categorical & 1 & 5 & exact \\
RELP & categorical & 0 & 38 & categorical & 0 & 17 & exact \\
COW & categorical & 1 & 8 & categorical & 1 & 9 & exact \\
OCCP & categorical & 0 & 626 & categorical & 0 & 626 & exact \\
WKHP & numeric & 1 & 99 & llm & 0 & 100 & medium \\
STATE & categorical & 0 & 50 & categorical & 0 & 50 & exact \\
\end{longtable}
\normalsize

\footnotesize
\begin{longtable}{llrrp{0.12\textwidth}rrl}
\caption{Public and LLM-derived bounds for Bank marketing.}\label{tab:supplement-bank-marketing-bounds}\\
\toprule
Feature & Type & Exact min & Exact max & Source & Lower & Upper & Confidence \\
\midrule
\endfirsthead
\caption[]{Public and LLM-derived bounds for Bank marketing. (continued)}\\
\toprule
Feature & Type & Exact min & Exact max & Source & Lower & Upper & Confidence \\
\midrule
\endhead
\midrule
\multicolumn{8}{r}{Continued on next page} \\
\midrule
\endfoot
\bottomrule
\endlastfoot
age & numeric & 18 & 95 & llm & 18 & 100 & medium \\
job & categorical & 0 & 11 & categorical & 0 & 11 & exact \\
marital & categorical & 0 & 2 & categorical & 0 & 2 & exact \\
education & categorical & 0 & 3 & categorical & 0 & 3 & exact \\
default & categorical & 0 & 1 & categorical & 0 & 1 & exact \\
balance & numeric & -8019 & 102127 & llm & -100000 & 1e+06 & low \\
housing & categorical & 0 & 1 & categorical & 0 & 1 & exact \\
loan & categorical & 0 & 1 & categorical & 0 & 1 & exact \\
contact & categorical & 0 & 2 & categorical & 0 & 2 & exact \\
day & numeric & 1 & 31 & llm & 1 & 31 & high \\
month & categorical & 0 & 11 & categorical & 0 & 11 & exact \\
duration & numeric & 0 & 4918 & llm & 0 & 3600 & medium \\
campaign & numeric & 1 & 63 & llm & 1 & 50 & medium \\
pdays & numeric & -1 & 871 & llm & -1 & 7300 & medium \\
previous & numeric & 0 & 275 & llm & 0 & 50 & medium \\
poutcome & categorical & 0 & 3 & categorical & 0 & 3 & exact \\
\end{longtable}
\normalsize

\footnotesize
\begin{longtable}{llrrp{0.12\textwidth}rrl}
\caption{Public and LLM-derived bounds for COMPAS.}\label{tab:supplement-compas-two-years-bounds}\\
\toprule
Feature & Type & Exact min & Exact max & Source & Lower & Upper & Confidence \\
\midrule
\endfirsthead
\caption[]{Public and LLM-derived bounds for COMPAS. (continued)}\\
\toprule
Feature & Type & Exact min & Exact max & Source & Lower & Upper & Confidence \\
\midrule
\endhead
\midrule
\multicolumn{8}{r}{Continued on next page} \\
\midrule
\endfoot
\bottomrule
\endlastfoot
sex & categorical & 0 & 1 & categorical & 0 & 1 & exact \\
age & numeric & 18 & 80 & llm & 18 & 100 & medium \\
juv\_fel\_count & numeric & 0 & 10 & llm & 0 & 10 & medium \\
juv\_misd\_count & numeric & 0 & 13 & llm & 0 & 10 & medium \\
juv\_other\_count & numeric & 0 & 7 & llm & 0 & 10 & medium \\
priors\_count & numeric & 0 & 38 & llm & 0 & 50 & medium \\
age\_cat\_25-45 & categorical & 0 & 1 & categorical & 0 & 1 & exact \\
age\_cat\_Greaterthan45 & categorical & 0 & 1 & categorical & 0 & 1 & exact \\
age\_cat\_Lessthan25 & categorical & 0 & 1 & categorical & 0 & 1 & exact \\
race\_African-American & categorical & 0 & 1 & categorical & 0 & 1 & exact \\
race\_Caucasian & categorical & 0 & 1 & categorical & 0 & 1 & exact \\
c\_charge\_degree\_F & categorical & 0 & 1 & categorical & 0 & 1 & exact \\
c\_charge\_degree\_M & categorical & 0 & 1 & categorical & 0 & 1 & exact \\
\end{longtable}
\normalsize

\footnotesize
\begin{longtable}{llrrp{0.12\textwidth}rrl}
\caption{Public and LLM-derived bounds for Maternal health.}\label{tab:supplement-maternal-health-risk-bounds}\\
\toprule
Feature & Type & Exact min & Exact max & Source & Lower & Upper & Confidence \\
\midrule
\endfirsthead
\caption[]{Public and LLM-derived bounds for Maternal health. (continued)}\\
\toprule
Feature & Type & Exact min & Exact max & Source & Lower & Upper & Confidence \\
\midrule
\endhead
\midrule
\multicolumn{8}{r}{Continued on next page} \\
\midrule
\endfoot
\bottomrule
\endlastfoot
Age & numeric & 10 & 70 & llm & 10 & 60 & medium \\
SystolicBP & numeric & 70 & 160 & llm & 70 & 250 & medium \\
DiastolicBP & numeric & 49 & 100 & llm & 40 & 150 & medium \\
BS & numeric & 6 & 19 & llm & 0 & 40 & medium \\
BodyTemp & numeric & 98 & 103 & llm & 80 & 110 & medium \\
HeartRate & numeric & 7 & 90 & llm & 0 & 250 & medium \\
\end{longtable}
\normalsize

\footnotesize
\begin{longtable}{llrrp{0.12\textwidth}rrl}
\caption{Public and LLM-derived bounds for Pima diabetes.}\label{tab:supplement-pima-diabetes-bounds}\\
\toprule
Feature & Type & Exact min & Exact max & Source & Lower & Upper & Confidence \\
\midrule
\endfirsthead
\caption[]{Public and LLM-derived bounds for Pima diabetes. (continued)}\\
\toprule
Feature & Type & Exact min & Exact max & Source & Lower & Upper & Confidence \\
\midrule
\endhead
\midrule
\multicolumn{8}{r}{Continued on next page} \\
\midrule
\endfoot
\bottomrule
\endlastfoot
preg & numeric & 0 & 17 & llm & 0 & 17 & medium \\
plas & numeric & 0 & 199 & llm & 40 & 400 & medium \\
pres & numeric & 0 & 122 & llm & 30 & 130 & medium \\
skin & numeric & 0 & 99 & llm & 5 & 80 & medium \\
insu & numeric & 0 & 846 & llm & 10 & 500 & medium \\
mass & numeric & 0 & 67.1 & llm & 10 & 70 & medium \\
pedi & numeric & 0.078 & 2.42 & llm & 0 & 3 & medium \\
age & numeric & 21 & 81 & llm & 18 & 100 & medium \\
\end{longtable}
\normalsize

\FloatBarrier
\subsection{Ablations}\label{app:ablations}

Results for noise-aware pretraining, the privacy curriculum, layer and head counts, the attention function, and normalisation and model combination are reported in \cref{tab:supplement-ablation-noise-training,tab:supplement-ablation-mu-curriculum,tab:supplement-ablation-layers,tab:supplement-ablation-heads,tab:supplement-ablation-attention-function,tab:supplement-ablation-normalization-combination}, respectively.

\subsubsection{Ablation: Noise-Aware Pretraining}

We compare PrivTab trained with the same privacy-noise mechanism used at inference against a variant trained without noise-aware pretraining. The noise-aware model ranks substantially higher at every privacy level, which indicates that the effect of privatized summaries should be incorporated during pretraining rather than treated as a perturbation added only at deployment time.

\footnotesize
\begin{longtable}{lp{0.53\textwidth}rrr}
\caption{ELO rankings for the noise-aware-pretraining ablation.}\label{tab:supplement-ablation-noise-training}\\
\toprule
$\mu$ & Method & ELO & CI min & CI max \\
\midrule
\endfirsthead
\caption[]{ELO rankings for the noise-aware-pretraining ablation. (continued)}\\
\toprule
$\mu$ & Method & ELO & CI min & CI max \\
\midrule
\endhead
\midrule
\multicolumn{5}{r}{Continued on next page} \\
\midrule
\endfoot
\bottomrule
\endlastfoot
\rowcolor{gray!8} overall & PrivTab with noise-aware pretraining & 1440.2 & 1430.9 & 1449.5 \\
\rowcolor{gray!8} overall & PrivTab without noise-aware pretraining & 1188.1 & 1176.2 & 1200.0 \\
0.05 & PrivTab with noise-aware pretraining & 1404.5 & 1381.8 & 1427.4 \\
0.05 & PrivTab without noise-aware pretraining & 1108.9 & 1079.7 & 1135.6 \\
\rowcolor{gray!8} 0.1 & PrivTab with noise-aware pretraining & 1459.1 & 1435.9 & 1482.6 \\
\rowcolor{gray!8} 0.1 & PrivTab without noise-aware pretraining & 1084.6 & 1055.2 & 1111.6 \\
0.2 & PrivTab with noise-aware pretraining & 1508.7 & 1487.3 & 1530.3 \\
0.2 & PrivTab without noise-aware pretraining & 1139.5 & 1111.3 & 1164.8 \\
\rowcolor{gray!8} 0.4 & PrivTab with noise-aware pretraining & 1492.1 & 1467.7 & 1516.1 \\
\rowcolor{gray!8} 0.4 & PrivTab without noise-aware pretraining & 1236.1 & 1205.8 & 1264.0 \\
0.8 & PrivTab with noise-aware pretraining & 1450.8 & 1426.0 & 1475.0 \\
0.8 & PrivTab without noise-aware pretraining & 1279.1 & 1247.1 & 1308.3 \\
\rowcolor{gray!8} 1.6 & PrivTab with noise-aware pretraining & 1432.1 & 1405.9 & 1457.5 \\
\rowcolor{gray!8} 1.6 & PrivTab without noise-aware pretraining & 1328.4 & 1298.9 & 1356.4 \\
\end{longtable}
\normalsize

\subsubsection{Ablation: \texorpdfstring{$\mu$}{mu}-Range Curriculum}

We compare different privacy-range curricula used during PrivTab pretraining. The benchmark configuration with $\mu \in [0.16,64]$ is strongest overall; nearby ranges remain competitive, but shifting the curriculum too far toward either weaker or stronger privacy leads to a measurable degradation in the aggregate ranking.

\footnotesize
\begin{longtable}{lp{0.53\textwidth}rrr}
\caption{ELO rankings for the $\mu$-range pretraining ablation.}\label{tab:supplement-ablation-mu-curriculum}\\
\toprule
$\mu$ & Method & ELO & CI min & CI max \\
\midrule
\endfirsthead
\caption[]{ELO rankings for the $\mu$-range pretraining ablation. (continued)}\\
\toprule
$\mu$ & Method & ELO & CI min & CI max \\
\midrule
\endhead
\midrule
\multicolumn{5}{r}{Continued on next page} \\
\midrule
\endfoot
\bottomrule
\endlastfoot
\rowcolor{gray!8} overall & PrivTab ($\mu \in [0.16,64]$) & 1533.1 & 1523.6 & 1542.6 \\
\rowcolor{gray!8} overall & PrivTab ($\mu \in [0.2,80]$) & 1501.6 & 1493.1 & 1510.2 \\
\rowcolor{gray!8} overall & PrivTab ($\mu \in [0.64,256]$) & 1489.9 & 1480.7 & 1498.9 \\
\rowcolor{gray!8} overall & PrivTab ($\mu \in [0.1,40]$) & 1486.0 & 1478.0 & 1494.0 \\
\rowcolor{gray!8} overall & PrivTab ($\mu \in [0.4,160]$) & 1480.9 & 1472.9 & 1489.0 \\
\rowcolor{gray!8} overall & PrivTab ($\mu \in [0.05,20]$) & 1470.4 & 1462.8 & 1478.2 \\
\rowcolor{gray!8} overall & PrivTab ($\mu \in [0.04,16]$) & 1450.9 & 1442.8 & 1458.9 \\
\rowcolor{gray!8} overall & PrivTab ($\mu \in [0.8,320]$) & 1434.4 & 1426.5 & 1442.5 \\
\rowcolor{gray!8} overall & PrivTab ($\mu \in [0.8,16]$) & 1399.6 & 1390.7 & 1408.3 \\
\rowcolor{gray!8} overall & PrivTab ($\mu \in [2.56,1024]$) & 1358.2 & 1348.5 & 1367.9 \\
0.05 & PrivTab ($\mu \in [0.16,64]$) & 1483.8 & 1459.1 & 1509.5 \\
0.05 & PrivTab ($\mu \in [0.05,20]$) & 1461.8 & 1442.0 & 1483.3 \\
0.05 & PrivTab ($\mu \in [0.1,40]$) & 1455.5 & 1433.9 & 1478.3 \\
0.05 & PrivTab ($\mu \in [0.04,16]$) & 1449.2 & 1429.0 & 1470.2 \\
0.05 & PrivTab ($\mu \in [0.2,80]$) & 1438.1 & 1418.0 & 1459.6 \\
0.05 & PrivTab ($\mu \in [0.4,160]$) & 1432.3 & 1412.7 & 1453.3 \\
0.05 & PrivTab ($\mu \in [0.64,256]$) & 1391.9 & 1371.1 & 1412.5 \\
0.05 & PrivTab ($\mu \in [0.8,320]$) & 1347.3 & 1327.4 & 1367.6 \\
0.05 & PrivTab ($\mu \in [0.8,16]$) & 1332.7 & 1311.7 & 1353.6 \\
0.05 & PrivTab ($\mu \in [2.56,1024]$) & 1225.2 & 1201.6 & 1248.0 \\
\rowcolor{gray!8} 0.1 & PrivTab ($\mu \in [0.1,40]$) & 1521.6 & 1500.6 & 1543.7 \\
\rowcolor{gray!8} 0.1 & PrivTab ($\mu \in [0.16,64]$) & 1519.8 & 1497.9 & 1543.5 \\
\rowcolor{gray!8} 0.1 & PrivTab ($\mu \in [0.2,80]$) & 1497.0 & 1476.5 & 1518.8 \\
\rowcolor{gray!8} 0.1 & PrivTab ($\mu \in [0.05,20]$) & 1495.8 & 1476.4 & 1515.2 \\
\rowcolor{gray!8} 0.1 & PrivTab ($\mu \in [0.04,16]$) & 1493.8 & 1473.4 & 1515.2 \\
\rowcolor{gray!8} 0.1 & PrivTab ($\mu \in [0.4,160]$) & 1482.6 & 1463.4 & 1502.6 \\
\rowcolor{gray!8} 0.1 & PrivTab ($\mu \in [0.64,256]$) & 1453.2 & 1433.1 & 1474.5 \\
\rowcolor{gray!8} 0.1 & PrivTab ($\mu \in [0.8,320]$) & 1414.2 & 1394.8 & 1433.8 \\
\rowcolor{gray!8} 0.1 & PrivTab ($\mu \in [0.8,16]$) & 1392.1 & 1370.5 & 1414.0 \\
\rowcolor{gray!8} 0.1 & PrivTab ($\mu \in [2.56,1024]$) & 1297.5 & 1272.2 & 1321.2 \\
0.2 & PrivTab ($\mu \in [0.16,64]$) & 1559.4 & 1537.8 & 1581.9 \\
0.2 & PrivTab ($\mu \in [0.2,80]$) & 1547.9 & 1526.6 & 1570.7 \\
0.2 & PrivTab ($\mu \in [0.1,40]$) & 1545.5 & 1526.7 & 1565.9 \\
0.2 & PrivTab ($\mu \in [0.05,20]$) & 1524.5 & 1505.1 & 1545.6 \\
0.2 & PrivTab ($\mu \in [0.4,160]$) & 1509.6 & 1489.8 & 1529.7 \\
0.2 & PrivTab ($\mu \in [0.04,16]$) & 1504.2 & 1485.0 & 1524.0 \\
0.2 & PrivTab ($\mu \in [0.64,256]$) & 1490.2 & 1468.2 & 1512.0 \\
0.2 & PrivTab ($\mu \in [0.8,320]$) & 1452.7 & 1432.1 & 1474.0 \\
0.2 & PrivTab ($\mu \in [0.8,16]$) & 1433.6 & 1412.3 & 1454.2 \\
0.2 & PrivTab ($\mu \in [2.56,1024]$) & 1336.7 & 1311.5 & 1360.4 \\
\rowcolor{gray!8} 0.4 & PrivTab ($\mu \in [0.16,64]$) & 1595.2 & 1572.5 & 1619.2 \\
\rowcolor{gray!8} 0.4 & PrivTab ($\mu \in [0.2,80]$) & 1564.8 & 1544.4 & 1586.1 \\
\rowcolor{gray!8} 0.4 & PrivTab ($\mu \in [0.64,256]$) & 1557.6 & 1535.7 & 1580.4 \\
\rowcolor{gray!8} 0.4 & PrivTab ($\mu \in [0.1,40]$) & 1536.4 & 1518.5 & 1554.8 \\
\rowcolor{gray!8} 0.4 & PrivTab ($\mu \in [0.4,160]$) & 1533.0 & 1512.7 & 1554.1 \\
\rowcolor{gray!8} 0.4 & PrivTab ($\mu \in [0.05,20]$) & 1516.0 & 1497.8 & 1534.9 \\
\rowcolor{gray!8} 0.4 & PrivTab ($\mu \in [0.04,16]$) & 1493.3 & 1473.7 & 1512.8 \\
\rowcolor{gray!8} 0.4 & PrivTab ($\mu \in [0.8,320]$) & 1489.5 & 1469.7 & 1509.8 \\
\rowcolor{gray!8} 0.4 & PrivTab ($\mu \in [0.8,16]$) & 1459.8 & 1438.0 & 1481.8 \\
\rowcolor{gray!8} 0.4 & PrivTab ($\mu \in [2.56,1024]$) & 1404.0 & 1379.8 & 1428.2 \\
0.8 & PrivTab ($\mu \in [0.16,64]$) & 1578.5 & 1554.7 & 1602.8 \\
0.8 & PrivTab ($\mu \in [0.64,256]$) & 1567.1 & 1543.8 & 1590.9 \\
0.8 & PrivTab ($\mu \in [0.2,80]$) & 1540.7 & 1520.1 & 1561.3 \\
0.8 & PrivTab ($\mu \in [0.4,160]$) & 1518.6 & 1498.1 & 1539.9 \\
0.8 & PrivTab ($\mu \in [0.8,320]$) & 1484.6 & 1465.1 & 1504.3 \\
0.8 & PrivTab ($\mu \in [0.1,40]$) & 1483.7 & 1465.5 & 1502.6 \\
0.8 & PrivTab ($\mu \in [0.05,20]$) & 1466.8 & 1449.3 & 1484.7 \\
0.8 & PrivTab ($\mu \in [2.56,1024]$) & 1451.3 & 1427.1 & 1475.9 \\
0.8 & PrivTab ($\mu \in [0.04,16]$) & 1444.0 & 1425.6 & 1462.1 \\
0.8 & PrivTab ($\mu \in [0.8,16]$) & 1425.0 & 1400.8 & 1448.4 \\
\rowcolor{gray!8} 1.6 & PrivTab ($\mu \in [0.64,256]$) & 1548.7 & 1524.9 & 1573.9 \\
\rowcolor{gray!8} 1.6 & PrivTab ($\mu \in [0.16,64]$) & 1533.9 & 1510.5 & 1558.5 \\
\rowcolor{gray!8} 1.6 & PrivTab ($\mu \in [0.2,80]$) & 1489.2 & 1469.3 & 1509.9 \\
\rowcolor{gray!8} 1.6 & PrivTab ($\mu \in [0.8,320]$) & 1478.9 & 1459.6 & 1498.5 \\
\rowcolor{gray!8} 1.6 & PrivTab ($\mu \in [2.56,1024]$) & 1478.0 & 1454.4 & 1502.6 \\
\rowcolor{gray!8} 1.6 & PrivTab ($\mu \in [0.4,160]$) & 1475.2 & 1454.5 & 1496.7 \\
\rowcolor{gray!8} 1.6 & PrivTab ($\mu \in [0.1,40]$) & 1441.0 & 1423.0 & 1459.7 \\
\rowcolor{gray!8} 1.6 & PrivTab ($\mu \in [0.05,20]$) & 1423.1 & 1404.5 & 1441.9 \\
\rowcolor{gray!8} 1.6 & PrivTab ($\mu \in [0.8,16]$) & 1410.6 & 1386.1 & 1434.8 \\
\rowcolor{gray!8} 1.6 & PrivTab ($\mu \in [0.04,16]$) & 1381.9 & 1363.4 & 1400.0 \\
\end{longtable}
\normalsize

\subsubsection{Ablation: Number of Private Layers}

We vary the number of main DP-MHCA layers while keeping the rest of the architecture fixed. Three private layers give the strongest overall ranking; reducing the depth to two layers weakens performance substantially, whereas increasing to four layers does not improve over the benchmark configuration.

\footnotesize
\begin{longtable}{lp{0.53\textwidth}rrr}
\caption{ELO rankings for the ablation over the number of main DP-MHCA layers.}\label{tab:supplement-ablation-layers}\\
\toprule
$\mu$ & Method & ELO & CI min & CI max \\
\midrule
\endfirsthead
\caption[]{ELO rankings for the ablation over the number of main DP-MHCA layers. (continued)}\\
\toprule
$\mu$ & Method & ELO & CI min & CI max \\
\midrule
\endhead
\midrule
\multicolumn{5}{r}{Continued on next page} \\
\midrule
\endfoot
\bottomrule
\endlastfoot
\rowcolor{gray!8} overall & PrivTab ($L_{\mathrm{main}}=3$) & 1507.2 & 1496.3 & 1518.6 \\
\rowcolor{gray!8} overall & PrivTab ($L_{\mathrm{main}}=4$) & 1404.0 & 1394.2 & 1414.0 \\
\rowcolor{gray!8} overall & PrivTab ($L_{\mathrm{main}}=2$) & 1283.2 & 1273.8 & 1292.9 \\
0.05 & PrivTab ($L_{\mathrm{main}}=3$) & 1493.3 & 1465.6 & 1524.2 \\
0.05 & PrivTab ($L_{\mathrm{main}}=4$) & 1453.3 & 1430.0 & 1478.8 \\
0.05 & PrivTab ($L_{\mathrm{main}}=2$) & 1294.6 & 1271.2 & 1317.0 \\
\rowcolor{gray!8} 0.1 & PrivTab ($L_{\mathrm{main}}=3$) & 1575.4 & 1546.5 & 1608.5 \\
\rowcolor{gray!8} 0.1 & PrivTab ($L_{\mathrm{main}}=4$) & 1472.5 & 1448.7 & 1498.9 \\
\rowcolor{gray!8} 0.1 & PrivTab ($L_{\mathrm{main}}=2$) & 1320.6 & 1296.5 & 1345.4 \\
0.2 & PrivTab ($L_{\mathrm{main}}=3$) & 1595.6 & 1568.0 & 1627.4 \\
0.2 & PrivTab ($L_{\mathrm{main}}=4$) & 1464.7 & 1440.8 & 1491.8 \\
0.2 & PrivTab ($L_{\mathrm{main}}=2$) & 1323.5 & 1298.7 & 1349.5 \\
\rowcolor{gray!8} 0.4 & PrivTab ($L_{\mathrm{main}}=3$) & 1514.4 & 1490.5 & 1542.4 \\
\rowcolor{gray!8} 0.4 & PrivTab ($L_{\mathrm{main}}=4$) & 1409.2 & 1384.9 & 1435.1 \\
\rowcolor{gray!8} 0.4 & PrivTab ($L_{\mathrm{main}}=2$) & 1302.7 & 1279.3 & 1327.8 \\
0.8 & PrivTab ($L_{\mathrm{main}}=3$) & 1472.8 & 1449.8 & 1498.4 \\
0.8 & PrivTab ($L_{\mathrm{main}}=4$) & 1357.1 & 1334.3 & 1381.4 \\
0.8 & PrivTab ($L_{\mathrm{main}}=2$) & 1267.9 & 1245.1 & 1291.9 \\
\rowcolor{gray!8} 1.6 & PrivTab ($L_{\mathrm{main}}=3$) & 1432.3 & 1408.8 & 1458.9 \\
\rowcolor{gray!8} 1.6 & PrivTab ($L_{\mathrm{main}}=4$) & 1305.2 & 1283.5 & 1327.8 \\
\rowcolor{gray!8} 1.6 & PrivTab ($L_{\mathrm{main}}=2$) & 1218.3 & 1196.0 & 1241.1 \\
\end{longtable}
\normalsize

\subsubsection{Ablation: Number of DP-MHCA Heads}

We compare different head counts inside DP-MHCA. In this ablation, a single DP-MHCA head performs better overall than using four DP-MHCA heads, especially in the more private regime, which supports the benchmark choice of a single privacy-critical head with the full internal width.

\footnotesize
\begin{longtable}{lp{0.53\textwidth}rrr}
\caption{ELO rankings for the ablation over the number of DP-MHCA heads.}\label{tab:supplement-ablation-heads}\\
\toprule
$\mu$ & Method & ELO & CI min & CI max \\
\midrule
\endfirsthead
\caption[]{ELO rankings for the ablation over the number of DP-MHCA heads. (continued)}\\
\toprule
$\mu$ & Method & ELO & CI min & CI max \\
\midrule
\endhead
\midrule
\multicolumn{5}{r}{Continued on next page} \\
\midrule
\endfoot
\bottomrule
\endlastfoot
\rowcolor{gray!8} overall & PrivTab ($H_{\mathrm{DP}}=1$) & 1508.9 & 1500.1 & 1517.9 \\
\rowcolor{gray!8} overall & PrivTab ($H_{\mathrm{DP}}=4$) & 1468.5 & 1459.3 & 1478.0 \\
0.05 & PrivTab ($H_{\mathrm{DP}}=1$) & 1465.8 & 1441.1 & 1491.8 \\
0.05 & PrivTab ($H_{\mathrm{DP}}=4$) & 1422.8 & 1399.6 & 1448.4 \\
\rowcolor{gray!8} 0.1 & PrivTab ($H_{\mathrm{DP}}=1$) & 1464.8 & 1442.4 & 1488.9 \\
\rowcolor{gray!8} 0.1 & PrivTab ($H_{\mathrm{DP}}=4$) & 1462.6 & 1440.0 & 1487.7 \\
0.2 & PrivTab ($H_{\mathrm{DP}}=1$) & 1527.9 & 1507.3 & 1550.8 \\
0.2 & PrivTab ($H_{\mathrm{DP}}=4$) & 1506.5 & 1483.1 & 1532.0 \\
\rowcolor{gray!8} 0.4 & PrivTab ($H_{\mathrm{DP}}=1$) & 1566.0 & 1544.8 & 1588.8 \\
\rowcolor{gray!8} 0.4 & PrivTab ($H_{\mathrm{DP}}=4$) & 1517.9 & 1495.8 & 1542.1 \\
0.8 & PrivTab ($H_{\mathrm{DP}}=1$) & 1587.0 & 1568.0 & 1609.2 \\
0.8 & PrivTab ($H_{\mathrm{DP}}=4$) & 1523.6 & 1499.9 & 1549.3 \\
\rowcolor{gray!8} 1.6 & PrivTab ($H_{\mathrm{DP}}=1$) & 1570.8 & 1549.0 & 1595.9 \\
\rowcolor{gray!8} 1.6 & PrivTab ($H_{\mathrm{DP}}=4$) & 1492.6 & 1468.3 & 1519.6 \\
\end{longtable}
\normalsize

\subsubsection{Ablation: DP-MHCA Interaction Function}

We compare alternative bounded interaction functions inside DP-MHCA:
\begin{align*}
g_{\text{tanh}}(\mathbf{q},\mathbf{k}) &= \tanh(\langle \mathbf{q},\mathbf{k}\rangle/\sqrt{d_h}), \\
g_{\text{linear clipped}}(\mathbf{q},\mathbf{k}) &= \langle \operatorname{clip}_1(\mathbf{q}), \operatorname{clip}_1(\mathbf{k})\rangle, \\
g_{\text{linear normalised}}(\mathbf{q},\mathbf{k}) &= \langle \mathbf{q}/\|\mathbf{q}\|_2, \mathbf{k}/\|\mathbf{k}\|_2\rangle, \\
g_{\text{shifted softsign}}(\mathbf{q},\mathbf{k}) &= \frac{1}{2}(1+\operatorname{softsign}(\langle \mathbf{q},\mathbf{k}\rangle)).
\end{align*}
The benchmark tanh choice is strongest overall and becomes increasingly preferable as privacy weakens, while the alternatives rank lower in aggregate.

\footnotesize
\begin{longtable}{lp{0.53\textwidth}rrr}
\caption{ELO rankings for the ablation over the DP-MHCA interaction function.}\label{tab:supplement-ablation-attention-function}\\
\toprule
$\mu$ & Method & ELO & CI min & CI max \\
\midrule
\endfirsthead
\caption[]{ELO rankings for the ablation over the DP-MHCA interaction function. (continued)}\\
\toprule
$\mu$ & Method & ELO & CI min & CI max \\
\midrule
\endhead
\midrule
\multicolumn{5}{r}{Continued on next page} \\
\midrule
\endfoot
\bottomrule
\endlastfoot
\rowcolor{gray!8} overall & PrivTab (tanh) & 1503.9 & 1492.4 & 1515.9 \\
\rowcolor{gray!8} overall & PrivTab (shifted softsign) & 1409.4 & 1399.9 & 1419.5 \\
\rowcolor{gray!8} overall & PrivTab (linear normalised) & 1392.4 & 1384.1 & 1400.8 \\
\rowcolor{gray!8} overall & PrivTab (linear clipped) & 1370.2 & 1361.3 & 1379.3 \\
0.05 & PrivTab (linear clipped) & 1397.6 & 1376.8 & 1420.3 \\
0.05 & PrivTab (linear normalised) & 1394.7 & 1374.9 & 1416.1 \\
0.05 & PrivTab (tanh) & 1374.7 & 1349.5 & 1401.3 \\
0.05 & PrivTab (shifted softsign) & 1313.8 & 1290.7 & 1338.2 \\
\rowcolor{gray!8} 0.1 & PrivTab (tanh) & 1475.1 & 1451.1 & 1501.5 \\
\rowcolor{gray!8} 0.1 & PrivTab (linear normalised) & 1438.1 & 1417.1 & 1460.8 \\
\rowcolor{gray!8} 0.1 & PrivTab (linear clipped) & 1432.6 & 1409.2 & 1457.1 \\
\rowcolor{gray!8} 0.1 & PrivTab (shifted softsign) & 1369.8 & 1345.6 & 1394.3 \\
0.2 & PrivTab (tanh) & 1535.3 & 1508.3 & 1565.5 \\
0.2 & PrivTab (linear normalised) & 1509.5 & 1488.0 & 1533.6 \\
0.2 & PrivTab (linear clipped) & 1473.7 & 1451.7 & 1498.4 \\
0.2 & PrivTab (shifted softsign) & 1461.9 & 1436.5 & 1489.0 \\
\rowcolor{gray!8} 0.4 & PrivTab (tanh) & 1561.3 & 1532.4 & 1593.8 \\
\rowcolor{gray!8} 0.4 & PrivTab (shifted softsign) & 1462.4 & 1439.1 & 1487.5 \\
\rowcolor{gray!8} 0.4 & PrivTab (linear normalised) & 1412.2 & 1391.3 & 1434.6 \\
\rowcolor{gray!8} 0.4 & PrivTab (linear clipped) & 1377.3 & 1354.7 & 1401.0 \\
0.8 & PrivTab (tanh) & 1581.6 & 1549.2 & 1619.9 \\
0.8 & PrivTab (shifted softsign) & 1455.6 & 1432.0 & 1481.8 \\
0.8 & PrivTab (linear normalised) & 1323.1 & 1302.7 & 1343.7 \\
0.8 & PrivTab (linear clipped) & 1299.3 & 1277.1 & 1321.5 \\
\rowcolor{gray!8} 1.6 & PrivTab (tanh) & 1612.2 & 1575.6 & 1656.1 \\
\rowcolor{gray!8} 1.6 & PrivTab (shifted softsign) & 1469.9 & 1446.3 & 1495.0 \\
\rowcolor{gray!8} 1.6 & PrivTab (linear normalised) & 1326.8 & 1306.7 & 1347.3 \\
\rowcolor{gray!8} 1.6 & PrivTab (linear clipped) & 1282.3 & 1262.2 & 1302.4 \\
\end{longtable}
\normalsize

\subsubsection{Ablation: Context-Scale Normalization and Model Combination}

The small-context model trained on context lengths sampled uniformly from $[100,2048]$ benefits substantially from enabling the normalisation in \cref{eq:dp-mhca-normalization} at inference time, and the large-context model trained on context lengths sampled uniformly from $[1024,8192]$ with normalisation is also strong. The final combination described in \cref{sec:dp-tnp-pretraining}, which routes to the small-context normalised model for context lengths below 4096 and to the large-context normalised model otherwise, achieves the best aggregate ranking. This supports the motivation for the final benchmark model as a context-scale-aware combination rather than a single fixed variant.

\footnotesize
\begin{longtable}{lp{0.53\textwidth}rrr}
\caption{ELO rankings for the normalisation and model-combination ablation.}\label{tab:supplement-ablation-normalization-combination}\\
\toprule
$\mu$ & Method & ELO & CI min & CI max \\
\midrule
\endfirsthead
\caption[]{ELO rankings for the normalisation and model-combination ablation. (continued)}\\
\toprule
$\mu$ & Method & ELO & CI min & CI max \\
\midrule
\endhead
\midrule
\multicolumn{5}{r}{Continued on next page} \\
\midrule
\endfoot
\bottomrule
\endlastfoot
\rowcolor{gray!8} overall & PrivTab combination (final) & 1578.1 & 1571.0 & 1585.2 \\
\rowcolor{gray!8} overall & PrivTab small-context model + inference normalisation & 1563.4 & 1556.1 & 1570.6 \\
\rowcolor{gray!8} overall & PrivTab large-context model + normalisation & 1552.2 & 1543.8 & 1560.7 \\
\rowcolor{gray!8} overall & DP-MLP & 1500.5 & 1489.1 & 1512.0 \\
\rowcolor{gray!8} overall & PrivTab small-context model & 1414.3 & 1404.3 & 1424.1 \\
\rowcolor{gray!8} overall & DP-LR & 1411.6 & 1400.3 & 1422.8 \\
0.05 & PrivTab small-context model + inference normalisation & 1502.9 & 1481.9 & 1526.1 \\
0.05 & PrivTab combination (final) & 1496.6 & 1477.5 & 1517.0 \\
0.05 & PrivTab large-context model + normalisation & 1451.2 & 1431.3 & 1472.3 \\
0.05 & PrivTab small-context model & 1420.1 & 1396.0 & 1445.2 \\
0.05 & DP-LR & 1352.5 & 1325.7 & 1378.9 \\
0.05 & DP-MLP & 1271.8 & 1244.7 & 1297.7 \\
\rowcolor{gray!8} 0.1 & PrivTab small-context model + inference normalisation & 1602.6 & 1583.9 & 1623.7 \\
\rowcolor{gray!8} 0.1 & PrivTab combination (final) & 1597.5 & 1580.2 & 1616.1 \\
\rowcolor{gray!8} 0.1 & PrivTab large-context model + normalisation & 1539.2 & 1519.3 & 1560.6 \\
\rowcolor{gray!8} 0.1 & PrivTab small-context model & 1456.7 & 1432.5 & 1481.9 \\
\rowcolor{gray!8} 0.1 & DP-LR & 1408.3 & 1379.7 & 1435.8 \\
\rowcolor{gray!8} 0.1 & DP-MLP & 1377.8 & 1350.2 & 1404.7 \\
0.2 & PrivTab combination (final) & 1647.6 & 1629.6 & 1667.3 \\
0.2 & PrivTab small-context model + inference normalisation & 1633.5 & 1614.9 & 1654.7 \\
0.2 & PrivTab large-context model + normalisation & 1588.6 & 1568.2 & 1611.5 \\
0.2 & DP-MLP & 1482.2 & 1456.2 & 1509.0 \\
0.2 & PrivTab small-context model & 1447.3 & 1424.7 & 1470.7 \\
0.2 & DP-LR & 1438.7 & 1408.3 & 1468.2 \\
\rowcolor{gray!8} 0.4 & PrivTab combination (final) & 1646.4 & 1631.1 & 1663.9 \\
\rowcolor{gray!8} 0.4 & PrivTab small-context model + inference normalisation & 1633.2 & 1617.8 & 1650.5 \\
\rowcolor{gray!8} 0.4 & PrivTab large-context model + normalisation & 1623.3 & 1602.3 & 1646.1 \\
\rowcolor{gray!8} 0.4 & DP-MLP & 1581.3 & 1553.6 & 1609.9 \\
\rowcolor{gray!8} 0.4 & DP-LR & 1460.9 & 1431.5 & 1490.3 \\
\rowcolor{gray!8} 0.4 & PrivTab small-context model & 1441.4 & 1416.1 & 1467.2 \\
0.8 & DP-MLP & 1683.5 & 1654.8 & 1716.5 \\
0.8 & PrivTab large-context model + normalisation & 1624.5 & 1603.6 & 1647.0 \\
0.8 & PrivTab combination (final) & 1615.0 & 1599.3 & 1632.5 \\
0.8 & PrivTab small-context model + inference normalisation & 1576.6 & 1562.6 & 1592.2 \\
0.8 & DP-LR & 1449.3 & 1419.8 & 1478.4 \\
0.8 & PrivTab small-context model & 1406.9 & 1378.3 & 1434.9 \\
\rowcolor{gray!8} 1.6 & DP-MLP & 1727.5 & 1694.1 & 1765.7 \\
\rowcolor{gray!8} 1.6 & PrivTab large-context model + normalisation & 1591.6 & 1569.6 & 1614.9 \\
\rowcolor{gray!8} 1.6 & PrivTab combination (final) & 1574.8 & 1557.7 & 1594.4 \\
\rowcolor{gray!8} 1.6 & PrivTab small-context model + inference normalisation & 1538.6 & 1523.3 & 1555.2 \\
\rowcolor{gray!8} 1.6 & DP-LR & 1435.9 & 1408.2 & 1464.2 \\
\rowcolor{gray!8} 1.6 & PrivTab small-context model & 1376.7 & 1348.2 & 1403.8 \\
\end{longtable}
\normalsize

\FloatBarrier
\end{document}